\documentclass[twoside]{article}

\usepackage[accepted]{conference2019}

\usepackage[numbers]{natbib}

\bibliographystyle{unsrtnat}

\usepackage[utf8]{inputenc} % allow utf-8 input
\usepackage[T1]{fontenc}    % use 8-bit T1 fonts
\usepackage[hidelinks]{hyperref}       % hyperlinks % hidelinks
\usepackage{url}            % simple URL typesetting
\usepackage{booktabs}       % professional-quality tables
\usepackage{amsfonts}       % blackboard math symbols
\usepackage{nicefrac}       % compact symbols for 1/2, etc.
\usepackage{microtype}      % microtypography
\usepackage{subcaption}
\usepackage{graphicx}
\usepackage{xspace}
% last update: 2018-03-25
% created by Ching-An Cheng

% PACKAGES
% math
\usepackage{amsmath}
\usepackage{amsfonts}
\usepackage{amssymb}
\usepackage{amsthm}
\usepackage{bm}
\usepackage{bbm}
\usepackage{mathtools}
\usepackage{enumitem}
\usepackage{thmtools,thm-restate}
% algorithms
\usepackage{algorithm}
\usepackage{algorithmic}
% ref
%\usepackage{natbib}
% misc
\usepackage{color}
\usepackage{graphicx}
\usepackage{comment}
%\usepackage[latin1]{inputenc} % for German

% EDITS

% SHORTCUTS
% theorem setting

\theoremstyle{plain}
\newtheorem{lemma}{Lemma}[section]
\newtheorem{theorem}{Theorem}[section]
\newtheorem{proposition}{Proposition}[section]
\newtheorem{corollary}{Corollary}[section]
\newtheorem{assumption}{Assumption}[section]

\theoremstyle{definition}
\newtheorem{definition}{Definition}[section]

\theoremstyle{remark}

% fonts

\def\EE{\mathcal{E}}\def\FF{\mathcal{F}}

\def\MM{\mathcal{M}}
\def\RR{\mathcal{R}}

\def\XX{\mathcal{X}}

\def\Abb{\mathbb{A}}
\def\Ebb{\mathbb{E}}

\def\Nbb{\mathbb{N}}
\def\Rbb{\mathbb{R}}
\def\Sbb{\mathbb{S}}

% math 
\DeclarePairedDelimiter\ceil{\lceil}{\rceil}

\def\R{\Rbb}\def\N{\Nbb}

\newcommand{\norm}[1]{ \| #1 \|  }

\newcommand{\paren}[1]{ \left( #1 \right) }
\newcommand{\bracket}[1]{ \left[#1 \right] }

\newcommand{\lr}[2]{ \left\langle #1, #2 \right\rangle}
\DeclareMathOperator*{\argmin}{arg\,min}
\DeclareMathOperator*{\argmax}{arg\,max}

% statistics
%\newcommand{\KL}[2]{D_{KL}(#1, #2)} %
\newcommand{\KL}[2]{\textrm{KL}[#1 || #2  ]}

\newcommand{\E}{\Ebb}
\newcommand{\expect}[1]{\E\bracket{#1}}

\newcommand{\expectb}[1]{\E\big[#1\big]}
\newcommand{\expectB}[1]{\E\Big[#1\Big]}

% Below is the code for optionally collecting all the proofs at the end
% source: https://tex.stackexchange.com/questions/33229/how-to-place-all-proofs-automatically-in-appendix
\usepackage{etex,etoolbox}
\usepackage{environ}

\makeatletter
\providecommand{\@fourthoffour}[4]{#4}
% We define an addition for the theorem-like environments; when
% \newtheorem{thm}{Theorem} is declared, the macro \thm expands
% to {...}{...}{...}{Theorem} and with \@fourthoffour we access
% to it; then we make available \@currentlabel (the theorem number)
% also outside the environment.  
\newcommand\fixstatement[2][\proofname\space of]{%
	\ifcsname thmt@original@#2\endcsname
	% the theorem has been declared with \declaretheorem
	\AtEndEnvironment{#2}{%
		\xdef\pat@label{\expandafter\expandafter\expandafter
			\@fourthoffour\csname thmt@original@#2\endcsname\space\@currentlabel}%
		\xdef\pat@proofof{\@nameuse{pat@proofof@#2}}%
	}%
	\else
	% the theorem has been declared with \newtheorem
	\AtEndEnvironment{#2}{%
		\xdef\pat@label{\expandafter\expandafter\expandafter
			\@fourthoffour\csname #1\endcsname\space\@currentlabel}%
		\xdef\pat@proofof{\@nameuse{pat@proofof@#2}}%
	}%
	\fi
	\@namedef{pat@proofof@#2}{#1}%
}

% We allocate a block of 1000 token registers; in this way \prooftoks
% is 1000 and we can access the following registers of the block by
% \prooftoks+n (0<n<1000); we'll use a dedicated counter for it
% that is stepped at every proof
\globtoksblk\prooftoks{1000}
\newcounter{proofcount}

% We gather the contents of the proof as argument to \proofatend
% and then we store
% "\begin{proof}[Proof of <theoremname> <theoremnumber>]#1\end{proof}"
% in the next token register of the allocated block
\NewEnviron{proofatend}{%
	\edef\next{%
		\noexpand\begin{proof}[\pat@proofof\space\pat@label]%
			\unexpanded\expandafter{\BODY}}%
		\global\toks\numexpr\prooftoks+\value{proofcount}\relax=\expandafter{\next\end{proof}}
	\stepcounter{proofcount}}

% \printproofs simply loops over the used token registers of the
% block, freeing their contents
\def\printproofs{%
	\count@=\z@
	\loop
	\the\toks\numexpr\prooftoks+\count@\relax
	\ifnum\count@<\value{proofcount}%
	\advance\count@\@ne
	\repeat}
\makeatother

%% Here starts the example, with two theorem declarations
%\declaretheorem[style=plain,name=Theorem,qed=$\square$,numberwithin=section]{thm}
%%\declaretheorem[style=plain,name=Lemma,qed=$\square$,numberlike=thm]{lem}
%%\newtheorem{thm}{Theorem}
%\newtheorem{lem}[thm]{Lemma}
%\fixstatement{thm}
%\fixstatement[Demonstration of]{lem}

\fixstatement{lemma}
\fixstatement{theorem}
\fixstatement{proposition}
\fixstatement{corollary}

\def\regret{\mathrm{regret}}
\def\regretw{\mathrm{regret}^w}
\def\regretwp{\mathrm{regret}^w_{\mathrm{path}}}
\def\nceil{n_\mathrm{ceil}}
\def\sumnN{\sum_{n=1}^N}
\def\epsf{\epsilon^w_{{\hat \FF}}}
\def\epsp{\epsilon^w_{\Pi}}
\def\epsx{\epsilon^w_{\XX}}
\def\dagger{\textsc{DAgger}\xspace} 
 
\def\aggrevate{\textsc{AggreVaTe}\xspace}

\def\mobil{\textsc{MoBIL}\xspace}
\def\algVI{\textsc{MoBIL-VI}\xspace}  
\def\algprox{\textsc{MoBIL-Prox}\xspace}
\def\mirrorprox{\textsc{Mirror-Prox}\xspace}

\def\diam{{\rm{Diam}}}

\allowdisplaybreaks

\setlength{\textfloatsep}{18pt}

\title{Accelerating Imitation Learning with Predictive Models}

\begin{document}

\twocolumn[

\conferencetitle{
	Accelerating Imitation Learning with Predictive Models
}

\conferenceauthor{ Ching-An Cheng \And Xinyan Yan \And  Evangelos A. Theodorou \And Byron Boots}

\conferenceaddress{ Georgia Tech \And  Georgia Tech \And Georgia Tech \And Georgia Tech } ]

\begin{abstract}
Sample efficiency is critical in solving real-world reinforcement learning problems, where agent-environment interactions can be costly. Imitation learning from expert advice has proved to be an effective strategy for reducing the number of interactions required to train a policy. Online imitation learning, which interleaves policy evaluation and policy optimization, is a particularly effective technique with provable performance guarantees. In this work, we seek to further accelerate the convergence rate of online imitation learning, thereby making it more sample efficient. We propose two model-based algorithms inspired by Follow-the-Leader (FTL) with prediction: \algVI based on solving variational inequalities and \algprox based on stochastic first-order updates. These two methods leverage a model 
to predict future gradients to speed up policy learning. When the model oracle is learned online, these algorithms can provably accelerate the best known convergence rate up to an order. Our algorithms can be viewed as a generalization of stochastic \mirrorprox (Juditsky et al., 2011), and admit a simple constructive FTL-style analysis of performance. 
\end{abstract}

\vspace{-1mm}
\section{INTRODUCTION}
\vspace{-2mm}
Imitation learning (IL) has recently received attention for its ability to speed up policy learning when solving reinforcement learning problems (RL)~\citep{abbeel2005exploration,ross2011reduction,ross2014reinforcement,chang2015learning,sun2017deeply,le2018hierarchical}. Unlike pure RL techniques, which rely on uniformed random exploration to locally improve a policy, IL leverages prior knowledge about a problem in terms of \emph{expert demonstrations}. At a high level, this additional information provides policy learning with
an informed search direction  toward the expert policy.

The  goal of IL is to quickly learn a policy that can perform at least as well as the expert policy. Because the expert policy may be suboptimal with respect to the RL problem of interest, performing IL 
is often used to provide a good warm start to the RL problem, 
so that the number of interactions with the environment can be minimized.
Sample efficiency is especially critical when learning is deployed in applications like robotics, where every interaction incurs real-world costs.

By reducing IL to an online learning problem, \emph{online IL}~\citep{ross2011reduction} provides a framework for convergence analysis and mitigates the covariate shift problem encountered in batch IL~\citep{argall2009survey,bojarski2017explaining}.
In particular, 
under proper assumptions, the performance of a policy sequence updated by Follow-the-Leader (FTL) can converge on average to the performance of the expert policy~\citep{ross2011reduction}. Recently, it was shown that this rate is sufficient to make IL more efficient than solving an RL problem from scratch~\citep{cheng2018fast}.

In this work, we further accelerate the convergence rate of online IL. Inspired by the observation of~\citet{cheng2018convergence} that the online learning problem of IL is \emph{not} truly adversarial, we propose two MOdel-Based IL (\mobil) algorithms, \algVI and \algprox, that can achieve a fast rate of convergence. Under the same assumptions of~\citet{ross2011reduction}, these algorithms improve on-average convergence to $O(1/N^2)$, e.g., when a dynamics model is learned online, where $N$ is the number of iterations of policy update.

The improved speed of our algorithms is attributed to using a model oracle to predict the gradient of the next per-round cost in online learning. This model can be realized, e.g., using a simulator based on a (learned) dynamics model, or using past demonstrations.
We first conceptually show that this idea can be realized as a variational inequality problem in \algVI. Next, we propose a practical first-order stochastic algorithm \algprox, which alternates between the steps of taking the true gradient and of taking the model gradient. \algprox is a generalization of stochastic \mirrorprox proposed by~\citet{juditsky2011solving} to the case where the problem is weighted and the vector field is unknown but learned online. 
In theory, we show that having a \emph{weighting} scheme is pivotal to speeding up convergence, and this generalization is made possible by a new constructive FTL-style regret analysis, which greatly simplifies the original algebraic proof~\citep{juditsky2011solving}. The performance of \algprox is also empirically validated in simulation.

\vspace{-1mm}
\section{PRELIMINARIES}
\vspace{-2mm}
\subsection{Problem Setup: RL and IL}
\vspace{-1mm}

Let $\Sbb$ and $\Abb$ be the state and the action spaces, respectively. 
The objective of RL is to search for a stationary policy $\pi$  inside a policy class $\Pi$ with good performance. This can be characterized by the stochastic optimization problem with  expected cost\footnote{Our definition of $J(\pi)$ corresponds to the average accumulated cost in the RL literature.} $J(\pi)$ defined below:
\begin{align}  \label{eq:RL problem}
\min_{\pi \in \Pi} J(\pi), \quad J(\pi) \coloneqq \E_{(s,t) \sim d_{\pi}}\E_{a \sim \pi_s} \left[ c_t(s, a) \right],
\end{align}
in which $s\in\Sbb$, $a\in\Abb$, $c_t$ is the instantaneous cost at time $t$, 
$d_{\pi}$ is a \emph{generalized stationary distribution} induced by executing policy $\pi$,
and $\pi_s$ is the distribution of action $a$ given state $s$ of $\pi$.
The policies here are assumed to be parametric. To make the writing compact, we will abuse the notation $\pi$ to also denote its parameter, and assume $\Pi$ is a compact convex subset of parameters in some normed space with norm $\norm{\cdot}$.

Based on the abstracted distribution $d_{\pi}$, the formulation in~\eqref{eq:RL problem} subsumes multiple discrete-time RL problems. 
For example, a $\gamma$-discounted infinite-horizon problem can be considered by setting $c_t = c$ as a time-invariant cost and defining the joint distribution $d_{\pi}(s,t) = (1-\gamma) \gamma^t d_{\pi,t}(s)$, in which $d_{\pi,t}(s)$ denotes the probability (density) of state $s$ at time $t$ under policy $\pi$. 
Similarly, a $T$-horizon RL problem can be considered by setting $d_{\pi}(s,t) = \frac{1}{T} d_{\pi,t}(s)$. 
Note that while we use the notation $\E_{a \sim \pi_s}$,  the policy is allowed to be deterministic; in this case, the notation means evaluation.
For notational compactness, we will often omit the random variable inside the expectation (e.g. we shorten \eqref{eq:RL problem} to $\E_{d_{\pi}}\E_{\pi} \left[ c \right]$).
In addition,  we denote $Q_{\pi,t}$ as the Q-function\footnote{For example, in a $T$-horizon problem, $ Q_{\pi,t}(s,a) = c_t(s,a) +  \E_{\rho_{\pi,t}}\left[{\scriptstyle\sum_{\tau=t}^{T-1}} \textstyle c_\tau (s_\tau, a_\tau)\right]$, where $\rho_{\pi,t}$ denotes the distribution of future trajectory $(s_t, a_t, s_{t+1}, \dots s_{T-1}, a_{T-1})$ conditioned on $s_t = s, a_t = a$.} at time $t$ with respect to $\pi$.

In this paper, we consider IL, which is an indirect approach to solving the RL problem. 
We assume there is a black-box oracle $\pi^*$, called the \emph{expert} policy, from which  demonstration $a^* \sim \pi_s^*$ can be queried for any state $s \in \Sbb$. To satisfy the querying requirement, usually the expert policy is an algorithm; for example, it can represent a planning algorithm which solves a simplified version of~\eqref{eq:RL problem}, or some engineered, hard-coded policy (see e.g.~\cite{pan2017agile}).

The purpose of incorporating the expert policy into solving~\eqref{eq:RL problem} 
is to quickly obtain a policy $\pi$ that has reasonable performance. 
Toward this end, we consider solving a surrogate problem of~\eqref{eq:RL problem}, 
\begin{align} \label{eq:IL problem}
\min_{\pi \in \Pi}  \E_{ (s, t) \sim d_{\pi}} [D(\pi_s^*||\pi_s)],
\end{align}
where $D$ is a function that measures the difference between two distributions over actions (e.g. KL divergence; see Appendix~\ref{app:choice of discussion}).
Importantly, the objective in~\eqref{eq:IL problem} has the property
that $D(\pi^*|| \pi^*) =0$ and there is constant $C_{\pi^*} \geq 0 $  such that
$ \forall t \in \Nbb, s \in \Sbb, \pi \in \Pi$, it satisfies
$\E_{a \sim \pi_s}[Q_{\pi^*,t}(s,a)]  - \E_{a^* \sim \pi_s^*}[Q_{\pi^*,t}(s,a^*)]
  \leq C_{\pi^*} D(\pi_s^*||\pi_s)$, 
in which $\Nbb$ denotes the set of natural numbers. 
By the Performance Difference Lemma~\citep{kakade2002approximately}, it can be shown that the inequality above implies~\citep{cheng2018convergence}, 
\begin{align} \label{eq:IL upper bound}
J(\pi) - J(\pi^*)   \leq  C_{\pi^*} \E_{ d_{\pi}} [D(\pi^*||\pi) ].
\end{align}
Therefore, solving~\eqref{eq:IL problem} can lead to a policy that performs similarly to the expert policy $\pi^*$. 

\vspace{-1mm}
\subsection{Imitation Learning as Online Learning}
\vspace{-2mm}

The surrogate problem in~\eqref{eq:IL problem} is  more structured than the original RL problem in~\eqref{eq:RL problem}. In particular, when the distance-like function $D$ is given,
and we know that $D(\pi^*||\pi)$ is close to zero when $\pi$ is close to $\pi^*$.
On the contrary, $\E_{a\sim \pi_s}[c_t(s,a)]$ in~\eqref{eq:RL problem} generally can still be large, even if $\pi$ is a good policy (since it also depends on the state). 
This \emph{normalization} property is crucial for the reduction from IL to online learning~\citep{cheng2018convergence}. 

The reduction is based on observing that, with the normalization property, the expressiveness of the policy class $\Pi$ can be described with a constant $\epsilon_{\Pi}$ defined as,
\begin{align} \label{eq:policy class complexity}
\textstyle
\hspace{-2mm}
 \epsilon_{\Pi} \geq \max_{\{ \pi_n \in \Pi \}}  \min_{\pi\in\Pi} \frac{1}{N} \sum_{n=1}^{N}  \E_{d_{\pi_n}}[D(\pi^*||\pi)],
\end{align}
for all $N \in \Nbb$,
which measures the average difference between $\Pi$ and $\pi^*$ with respect to $D$ and the state distributions visited by a worst possible policy sequence.
\citet{ross2011reduction} make use of this property and reduce~\eqref{eq:IL problem} into an online learning problem by distinguishing the influence of $\pi$ on $d_\pi$ and on $D(\pi^*||\pi)$ in~\eqref{eq:IL problem}. To make this transparent, we define a bivariate function 
\begin{align} \label{eq:bivariate function}
F(\pi', \pi) \coloneqq \E_{ d_{\pi'}} [D(\pi^*||\pi)].
\end{align}
Using this bivariate function $F$,
the online learning setup can be described as follows:
in round $n$, the learner applies a policy $\pi_n \in \Pi$ and then the environment reveals a per-round cost
\begin{align} \label{eq:policy per-round cost}
f_n(\pi) \coloneqq F(\pi_n, \pi) = \E_{d_{\pi_n}}[D(\pi^* || \pi)].
\end{align}
\citet{ross2011reduction} show that if the sequence $\{\pi_n\}$ is selected by a \emph{no-regret algorithm}, 
then it will have good performance in terms of~\eqref{eq:IL problem}.
For example,  
\dagger updates the policy by FTL,
$\pi_{n+1} = \argmin_{\pi \in \Pi} f_{1:n}(\pi)$ and has the following guarantee (cf. \citep{cheng2018convergence}), where we define the shorthand $f_{1:n} = \sum_{m=1}^{n} f_m$.
\begin{theorem} \label{th:dagger performance}
	Let $\mu_f >0$. 
	If each $f_n$ is $\mu_f$-strongly convex
	and 
	$\norm{\nabla f_n(\pi)}_* \le G, \forall \pi \in \Pi$,
	then 
	\dagger has 
	performance on average satisfying
	\begin{align} \label{eq:performance FTL}
	\textstyle \hspace{-2mm}
	\frac{1}{N}\sum_{n=1}^{N} J(\pi_n) \leq J(\pi^*) +  C_{\pi^*} \paren{\frac{G^2}{2\mu_f}\frac{\ln N + 1}{N}  + \epsilon_{\Pi}}.
	\end{align}
\end{theorem}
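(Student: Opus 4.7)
The plan is to combine the IL-to-online-learning reduction from the excerpt with a standard FTL regret bound for strongly convex losses. First, I would instantiate the IL upper bound~\eqref{eq:IL upper bound} at each iterate $\pi_n$: using the definition $f_n(\pi) = F(\pi_n,\pi) = \E_{d_{\pi_n}}[D(\pi^*\|\pi)]$, the bound specializes to $J(\pi_n) - J(\pi^*) \le C_{\pi^*} f_n(\pi_n)$. Averaging this over $n=1,\dots,N$ reduces the theorem to showing
$$\frac{1}{N}\sum_{n=1}^{N} f_n(\pi_n) \le \frac{G^2}{2\mu_f}\frac{\ln N+1}{N} + \epsilon_\Pi.$$

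Next, I would decompose the cumulative per-round cost into regret plus best-in-hindsight comparator, writing $\sum_n f_n(\pi_n) = R_N + \min_{\pi\in\Pi} f_{1:N}(\pi)$, where $R_N$ is the regret of the \dagger sequence $\{\pi_n\}$. The comparator term is controlled directly by the policy-class capacity constant: by definition~\eqref{eq:policy class complexity}, any sequence $\{\pi_n\}\subseteq\Pi$ satisfies $\min_{\pi\in\Pi} \frac{1}{N}\sum_n f_n(\pi) = \min_{\pi\in\Pi}\frac{1}{N}\sum_n \E_{d_{\pi_n}}[D(\pi^*\|\pi)] \le \epsilon_\Pi$, accounting for the $\epsilon_\Pi$ term.

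The remaining piece is the FTL regret bound $R_N \le \frac{G^2}{2\mu_f}(\ln N + 1)$. I would derive this via the standard ``Be-the-Leader'' argument: since $\pi_{n+1} = \argmin_{\pi} f_{1:n}(\pi)$, a telescoping on the minimizer sequence yields $R_N \le \sum_{n=1}^N \bigl[f_n(\pi_n) - f_n(\pi_{n+1})\bigr]$. Each summand is bounded by $G\|\pi_n - \pi_{n+1}\|$ using the gradient assumption, and a stability estimate of the form $\|\pi_n - \pi_{n+1}\| = O\!\left(G/(n\mu_f)\right)$ follows from the $n\mu_f$-strong convexity of $f_{1:n}$ applied to the optimality conditions at $\pi_n$ and $\pi_{n+1}$. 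Summing the resulting $1/n$ terms across $n=1,\dots,N$ gives the advertised $O(\ln N)$ regret, and dividing by $N$ produces the $\frac{G^2}{2\mu_f}\frac{\ln N+1}{N}$ term in the theorem. The main technical care is in tracking the exact constants of this last step; once that is in hand, the other pieces follow immediately from the normalization property of the surrogate IL loss and the definition of $\epsilon_\Pi$.
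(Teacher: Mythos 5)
Your overall architecture is the same as the paper's: Theorem~\ref{th:dagger performance} follows by combining the pointwise bound \eqref{eq:IL upper bound} at each iterate (equivalently, Lemma~\ref{lm:reduction lemma} with uniform weights), the definition \eqref{eq:policy class complexity} of $\epsilon_\Pi$ to absorb the comparator term, and a logarithmic FTL regret bound for strongly convex losses. Your first two steps are exactly right. The gap is in the regret step. You bound each Be-the-Leader summand by $f_n(\pi_n)-f_n(\pi_{n+1})\le G\norm{\pi_n-\pi_{n+1}}$ and then use stability; but the stability estimate that strong convexity and the optimality conditions actually deliver is $\norm{\pi_n-\pi_{n+1}}\le 2G/((n+1)\mu_f)$, so the chain yields a per-round bound of roughly $2G^2/(n\mu_f)$ and hence $R_N\le \frac{2G^2}{\mu_f}(\ln N+1)$ --- a factor of $4$ larger than the $\frac{G^2}{2\mu_f}(\ln N+1)$ asserted in \eqref{eq:performance FTL} (even the sharpest variants of this Lipschitz-times-stability argument lose at least a factor of $2$). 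You flag ``tracking the exact constants'' as the remaining care, but no bookkeeping inside this chain recovers the stated constant: the inequality $f_n(\pi_n)-f_n(\pi_{n+1})\le G\norm{\pi_n-\pi_{n+1}}$ discards the curvature and is intrinsically lossy. As written, your argument proves the theorem only up to a constant factor, not with the constant in the statement.

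The fix is small and is precisely what the paper does via the Strong FTL Lemma (Lemma~\ref{lm:strong FTL}, instantiated in Lemmas~\ref{lm:regret addon} and~\ref{lm:regret inner}). Your telescoped sum equals $\sum_{n}\bigl[f_{1:n}(\pi_n)-f_{1:n}(\pi_{n+1})\bigr]$ after dropping the nonpositive differences $f_{1:n-1}(\pi_n)-f_{1:n-1}(\pi_{n+1})\le 0$, and $\pi_{n+1}$ minimizes $f_{1:n}$; so bound each aggregate difference directly. By $n\mu_f$-strong convexity of $f_{1:n}$,
\begin{align*}
f_{1:n}(\pi_n)-f_{1:n}(\pi_{n+1}) \le \lr{\nabla f_{1:n}(\pi_n)}{\pi_n-\pi_{n+1}} - \tfrac{n\mu_f}{2}\norm{\pi_n-\pi_{n+1}}^2 ,
\end{align*}
drop $\lr{\nabla f_{1:n-1}(\pi_n)}{\pi_n-\pi_{n+1}}\le 0$ by the optimality of $\pi_n$ for $f_{1:n-1}$, and maximize the remainder over the displacement: $\max_{d}\ \lr{\nabla f_n(\pi_n)}{d}-\tfrac{n\mu_f}{2}\norm{d}^2 = \norm{\nabla f_n(\pi_n)}_*^2/(2n\mu_f)\le G^2/(2n\mu_f)$. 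Summing over $n$ gives $R_N\le \frac{G^2}{2\mu_f}(\ln N+1)$ exactly, and the rest of your proposal then goes through unchanged.
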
\vspace{-2mm}
First-order variants of \dagger based on Follow-the-Regularized-Leader (FTRL)
have also been proposed by~\citet{sun2017deeply} and \citet{cheng2018fast}, which have the same performance but only require taking a stochastic gradient step in each iteration without keeping all the previous cost functions (i.e. data) as in the original FTL formulation.
The bound in Theorem~\ref{th:dagger performance} also applies to the expected performance of a policy randomly picked out of the sequence $\{\pi_n \}_{n=1}^N$, although it does not necessarily translate into the performance of the last policy $\pi_{N+1}$~\citep{cheng2018convergence}.

\vspace{-1mm}
\section{ACCELERATING IL WITH PREDICTIVE MODELS}
\vspace{-2mm}
\label{sec:fast IL}

The reduction-based approach to solving IL  has demonstrated sucess in speeding up policy learning. 
However, because interactions with the environment are necessary to approximately evaluate the per-round cost, it is interesting to determine if the convergence rate of IL can be further improved. A faster convergence rate  will be valuable in real-world applications where data collection is expensive. 

We answer this question affirmatively. We show that, by modeling\footnote{We define $\nabla_2 F$ as a vector field $\nabla_2 F : \pi \mapsto \nabla_2 F(\pi, \pi) $}   $\nabla_2 F$ the convergence rate of IL can potentially be improved by up to an order, where $\nabla_2$ denotes the derivative to the second argument. The improvement comes through leveraging the fact that the per-round cost $f_n$ defined in~\eqref{eq:policy per-round cost} is not completely unknown or adversarial as it is assumed in the most general online learning setting. Because the \emph{same} function $F$ is used in~\eqref{eq:policy per-round cost} over different rounds, the online component actually comes from the reduction made by~\citet{ross2011reduction}, which ignores information about how $F$ changes with the left argument; in other words, it omits the variations of $d_\pi$ when $\pi$ changes~\citep{cheng2018convergence}. Therefore, we argue that the original reduction proposed by~\citet{ross2011reduction}, while allowing the use of~\eqref{eq:policy class complexity} to characterize the performance, loses one critical piece of information present in the original RL problem: \emph{both the system dynamics and the expert are the same across different rounds of online learning.}

We propose two model-based algorithms (\algVI and \algprox) to accelerate IL.
The first algorithm, \algVI, is conceptual in nature and updates policies by solving variational inequality (VI) problems~\citep{facchinei2007finite}. 
This algorithm is used to illustrate how modeling $\nabla_2 F$
through a \emph{predictive model} $\nabla_2 \hat{F}$  can help to speed up IL, where $\hat{F}$ is a model bivariate function.\footnote{While we only concern predicting the vector field $\nabla_2 F$, we adopt the notation $\hat{F}$ to better build up the intuition, especially of \algVI; we will discuss other approximations that are not based on bivariate functions in Section~\ref{sec:model oracles}.}  
The second algorithm, \algprox is a first-order method. It alternates between taking stochastic gradients by interacting with the environment and querying the model $\nabla_2 \hat{F}$.
We will prove that this simple yet practical approach has the same performance as the conceptual one: when $\nabla_2 \hat{F}$ is learned online and $\nabla_2 F$ is realizable, e.g. both algorithms can converge in $O\paren{\frac{1}{N^2}}$, in contrast to \dagger's $O\paren{\frac{\ln N}{N}}$ convergence. In addition, we show the convergence results of \mobil under relaxed assumptions, e.g. allowing stochasticity, and 
provide several examples of constructing predictive models. (See Appendix~\ref{app:notation} for a summary of notation.)

\vspace{-1mm}
\subsection{Performance and Average Regret} \label{sec:reduction}
\vspace{-2mm}
Before presenting the two algorithms, we first summarize the core idea of the reduction from IL to online learning in a simple lemma, which builds the foundation of our algorithms (proved in Appendix~\ref{app:proof of reduction}). 
\begin{restatable}{lemma}{reductionLemma}
	\label{lm:reduction lemma}
	For arbitrary sequences $\{\pi_n \in \Pi \}_{n=1}^N$  and $\{w_n >0 \}_{n=1}^N$, it holds that
	\begin{align*}
	\textstyle
	 \E  \left[ \sum_{n=1}^{N} \frac{w_n J(\pi_n) }{w_{1:N}} \right] \leq J(\pi^*) + C_{\pi^*} \left( \epsilon_{\Pi}^w  + \E\left[ \frac{\regret^w(\Pi)}{w_{1:N}}\right]  \right)
	\end{align*} 
	where
	$\tilde{f}_n$ is an unbiased estimate of $f_n$,
	$\regret^w(\Pi)  \coloneqq \max_{\pi \in \Pi}   \sum_{n=1}^{N} w_n \tilde{f}_n (\pi_n) - w_n \tilde{f}_n (\pi) $, $\epsilon_{\Pi}^w$ is given in Definition~\ref{def:general class complexity}, 
	and the expectation is due to sampling $\tilde{f}_n$.
\end{restatable}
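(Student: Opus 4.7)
The plan is to chain together three ingredients: the per-step performance bound of~\eqref{eq:IL upper bound}, the policy class complexity definition of $\epsilon_\Pi^w$, and an expectation-level comparison between the ``true'' weighted loss and the ``empirical'' weighted regret. Concretely, the first step is to invoke \eqref{eq:IL upper bound} at each $\pi_n$ to obtain $J(\pi_n)-J(\pi^*)\le C_{\pi^*}f_n(\pi_n)$, form the weighted average with weights $w_n$, and take expectation, yielding
\begin{align*}
\E\!\left[\sum_{n=1}^N \frac{w_n J(\pi_n)}{w_{1:N}}\right]-J(\pi^*)\le \frac{C_{\pi^*}}{w_{1:N}}\E\!\left[\sum_{n=1}^N w_n f_n(\pi_n)\right].
\end{align*}
So it suffices to control $\E[\sum_n w_n f_n(\pi_n)]$ by $w_{1:N}\epsilon_\Pi^w + \E[\regret^w(\Pi)]$.

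Next I would split, for any deterministic $\pi\in\Pi$, $\sum_n w_n f_n(\pi_n) = \sum_n w_n f_n(\pi) + \sum_n w_n(f_n(\pi_n)-f_n(\pi))$. Writing $\regret^w(\Pi)=\sum_n w_n\tilde f_n(\pi_n)-\min_{\pi}\sum_n w_n\tilde f_n(\pi)$, I take expectations and use two facts: (i) unbiasedness $\E[\tilde f_n(\pi)\mid\FF_{n-1}]=f_n(\pi)$ for any $\FF_{n-1}$-measurable $\pi$ (hence for deterministic $\pi$ and for the predictable iterate $\pi_n$), so by the tower rule $\E[\sum_n w_n\tilde f_n(\pi_n)]=\E[\sum_n w_n f_n(\pi_n)]$ and $\E[\sum_n w_n\tilde f_n(\pi)]=\E[\sum_n w_n f_n(\pi)]$ for each fixed $\pi$; (ii) the exchange inequality $\E[\min_\pi X_\pi]\le \min_\pi \E[X_\pi]$ applied to $X_\pi=\sum_n w_n\tilde f_n(\pi)$. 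Combining (i) and (ii) gives
\begin{align*}
\E[\regret^w(\Pi)] \ge \E\!\left[\sum_n w_n f_n(\pi_n)\right]-\min_{\pi\in\Pi}\E\!\left[\sum_n w_n f_n(\pi)\right].
\end{align*}

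Finally, the weighted policy class complexity (the natural weighted analog of~\eqref{eq:policy class complexity} stated in Definition~\ref{def:general class complexity}) bounds $\min_{\pi\in\Pi}\E[\sum_n w_n f_n(\pi)]\le w_{1:N}\epsilon_\Pi^w$ for any (possibly random) sequence $\{\pi_n\}$ produced by the algorithm. Substituting this into the inequality above, dividing by $w_{1:N}$, and combining with the reduction from $J$ to $f_n$ in the opening paragraph gives exactly the claimed bound.

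The main technical obstacle is the handling of randomness in Step 2: the minimizer $\arg\min_\pi\sum_n w_n \tilde f_n(\pi)$ is not measurable at any round and depends on the full sampling path, so one cannot simply swap $\tilde f_n$ for $f_n$ pointwise. The clean way around this is the asymmetric route above---match $f_n$ and $\tilde f_n$ in expectation only for predictable/deterministic arguments, and then push the $\min$ through the expectation in only one direction (Jensen for the concave $\min$). Once this step is set up carefully, the rest is bookkeeping.
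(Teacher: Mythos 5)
Your opening step and your use of unbiasedness for the non-anticipating iterates $\pi_n$ match the paper's proof, but your final step has a genuine gap: Definition~\ref{def:general class complexity} does not bound the quantity you invoke. The definition controls $\E\bigl[\max_{\{\pi_n\}}\min_{\pi\in\Pi}\sum_{n}\theta_n\tilde f_n(\pi)\bigr]$, i.e.\ the \emph{expectation of the pathwise minimum}, and therefore directly yields $\E\bigl[\min_{\pi\in\Pi}\sum_{n} w_n\tilde f_n(\pi)\bigr]\le w_{1:N}\epsilon_{\Pi}^w$ for the algorithm's (random) sequence. Your route instead applies Jensen to pass to $\min_{\pi\in\Pi}\E\bigl[\sum_{n} w_n f_n(\pi)\bigr]$, which by that very inequality is the \emph{larger} of the two quantities, and it is not controlled by $\epsilon_{\Pi}^w$: writing $\omega$ for the sampling path, one only has $\min_\pi\E_\omega\le\min_\pi\max_\omega$, whereas the definition bounds $\max_\omega\min_\pi$, and $\min_\pi\E_\omega$ can strictly exceed $\max_\omega\min_\pi$ (two equally likely paths and two policies with losses $(0,1)$ and $(1,0)$ give $\max_\omega\min_\pi=0$ but $\min_\pi\E_\omega=1/2$). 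So the inequality $\min_{\pi}\E[\sum_n w_n f_n(\pi)]\le w_{1:N}\epsilon_{\Pi}^w$ is unsupported as stated.

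The fix is to drop the Jensen step entirely, which also dissolves the measurability worry you raise: the identity $\sum_{n} w_n\tilde f_n(\pi_n)=\min_{\pi\in\Pi}\sum_{n} w_n\tilde f_n(\pi)+\regret^w(\Pi)$ holds pathwise by the definition of $\regret^w(\Pi)$ (no measurability of the argmin at any intermediate round is needed), so one simply takes expectations of both sides and bounds $\E[\min_{\pi}\sum_n w_n\tilde f_n(\pi)]$ directly by $w_{1:N}\epsilon_{\Pi}^w$ via Definition~\ref{def:general class complexity}. This is exactly the paper's argument; the only place unbiasedness enters is in replacing $\E[\sum_n w_n f_n(\pi_n)]$ by $\E[\sum_n w_n\tilde f_n(\pi_n)]$, which is valid because $\pi_n$ is predictable.
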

In other words, the on-average performance convergence of an online IL algorithm is determined by the rate of 
the expected  weighted average regret
$\E\left[\regret^w(\Pi) / w_{1:N} \right]$. For example, in \dagger, the weighting is uniform and $\E\left[\regret^w(\Pi) \right]$ is in $O(\log N)$; by Lemma~\ref{lm:reduction lemma} this rate directly proves Theorem~\ref{th:dagger performance}.

\vspace{-1mm}
\subsection{Algorithms}
\vspace{-2mm}

From Lemma~\ref{lm:reduction lemma}, we know that improving the regret bound implies a faster convergence of IL. 
This leads to the main idea of \algVI and \algprox: to use model information to \emph{approximately} play Be-the-Leader (BTL)~\citep{kalai2005efficient}, i.e. $\pi_{n+1} \approx \argmin_{\pi\in\Pi} f_{1:n+1}(\pi) $. To understand why playing BTL can minimize the regret, we recall a classical regret bound of online learning.\footnote{We use notation $x_n$ and $l_n$ to distinguish general online learning problems from online IL problems.}
\begin{restatable}[Strong FTL Lemma~\citep{mcmahan2017survey}]{lemma}{strongFTL} \label{lm:strong FTL}
	For any sequence of decisions $\{x_n \in \XX \}$ and loss functions $\{l_n\}$, 
	$	\regret(\XX)  \leq  \sum_{n=1}^{N}  l_{1:n} (x_n)- l_{1:n}( x_{n}^\star)$, where $
	x_n^\star \in \arg \min_{x \in \XX} l_{1:n}(x)
	$, where $\XX$ is the decision set.
\end{restatable}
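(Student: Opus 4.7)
The plan is to proceed by induction on $N$, using only the defining property that $l_{1:n}(x_n^\star) \leq l_{1:n}(x)$ for every $x \in \XX$. Unfolding the definition, the regret equals $\sum_{n=1}^N l_n(x_n) - l_{1:N}(x_N^\star)$, so in the base case $N=1$ the claimed bound holds with equality.

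For the inductive step, suppose the bound has been established through round $N$. I would add and subtract $l_{1:N}(x_N^\star)$ inside the round-$(N+1)$ regret to write
\begin{align*}
\sum_{n=1}^{N+1} l_n(x_n) - l_{1:N+1}(x_{N+1}^\star) = \left[\sum_{n=1}^N l_n(x_n) - l_{1:N}(x_N^\star)\right] + \left[l_{N+1}(x_{N+1}) + l_{1:N}(x_N^\star) - l_{1:N+1}(x_{N+1}^\star)\right].
\end{align*}
The first bracket is controlled by the inductive hypothesis and contributes the partial sum up through index $N$. For the second bracket, I would invoke the minimizer property at round $N$, namely $l_{1:N}(x_N^\star) \leq l_{1:N}(x_{N+1})$, to replace $l_{1:N}(x_N^\star)$ by $l_{1:N}(x_{N+1})$ and combine $l_{1:N}(x_{N+1}) + l_{N+1}(x_{N+1}) = l_{1:N+1}(x_{N+1})$, producing the missing $(N+1)$-th term $l_{1:N+1}(x_{N+1}) - l_{1:N+1}(x_{N+1}^\star)$. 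Summing this with the inductive bound closes the induction.

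There is not really a substantive obstacle here; the proof is short and the only content is the Be-the-Leader style inequality $l_{1:N}(x_N^\star) \leq l_{1:N}(x_{N+1})$, which is immediate because $x_{N+1} \in \XX$ and $x_N^\star$ minimizes $l_{1:N}$ on $\XX$. The one thing to notice is the telescoping device of introducing $l_{1:N}(x_N^\star)$, so that the round-$(N+1)$ regret decomposes into the round-$N$ regret plus a single clean per-round gap. Equivalently, a non-inductive proof can be given by rewriting $l_n(x_n) = l_{1:n}(x_n) - l_{1:n-1}(x_n)$ with the convention $l_{1:0} \equiv 0$, reindexing the telescoping sum, and applying the leader inequality $l_{1:n}(x_n^\star) \leq l_{1:n}(x_{n+1})$ term by term.
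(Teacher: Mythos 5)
Your proof is correct and rests on the same two ingredients as the paper's: the telescoping decomposition of $l_{1:N}(x_N^\star)$ and the leader optimality inequality $l_{1:n}(x_n^\star) \le l_{1:n}(x_{n+1})$. The paper proves this via its Stronger FTL Lemma (Lemma~\ref{lm:stronger FTL}), whose proof is exactly the non-inductive telescoping rewrite $l_n = l_{1:n} - l_{1:n-1}$ that you sketch in your final paragraph, so your induction is just an unrolled version of the same argument.
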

Namely, if the decision $\pi_{n+1}$ made in round $n$ in IL is close to the best decision in round $n+1$ after the new per-round cost $f_{n+1}$ is revealed (which depends on $\pi_{n+1}$), then the regret will be small.

The two algorithms are summarized in Algorithm~\ref{alg:algs}, which mainly differ in the policy update rule  (line 5). 
Like \dagger, they both learn the policy in an interactive manner. In round $n$, both algorithms execute the current policy $\pi_n$ in the real environment to collect data to define the per-round cost functions (line 3): $\tilde{f}_n$ is an unbiased estimate of $f_n$ in~\eqref{eq:policy per-round cost} for policy learning, and $\tilde{h}_n$ is an unbiased estimate of the per-round cost $h_n$ for model learning. Given the current per-round costs, the two algorithms then update the model (line 4) and the policy (line 5) using the respective rules. 
Here we use the set $\hat{\FF}$, abstractly, to denote the family of predictive models to estimate $\nabla_2 F$, and $h_n$ is defined as an upper bound of the prediction error. For example, $\hat{\FF}$ can be a family of dynamics models that are used to simulate the predicted gradients,  and $\tilde{h}_n$ is the empirical loss function used to train the dynamics models (e.g. the KL divergence of prediction). 

\vspace{-1mm}
\subsubsection{A Conceptual Algorithm:  \algVI} \label{sec:conceptual algorithm}
\vspace{-2mm}

We first present our conceptual algorithm \algVI, which is simpler to explain. We assume that $f_n$ and $h_n$ are given, as in Theorem~\ref{th:dagger performance}. 
This assumption will be removed in \algprox later. 
To realize the idea of BTL, in round $n$, \algVI uses a newly learned predictive model $\nabla_2\hat{F}_{n+1}$ to estimate of $\nabla_2F$ in~\eqref{eq:bivariate function} and then 
updates the policy by solving the VI problem below: finding $\pi_{n+1} \in \Pi$ such that $\forall \pi' \in \Pi$,
\begin{align} \label{eq:VI problem}
\textstyle
\lr{\Phi_n(\pi_{n+1}) }{ \pi' - \pi_{n+1}} \geq 0,
\end{align}
where the vector field $\Phi_n$ is defined as 
\begin{align*}
\textstyle
\Phi_n(\pi) = \sum_{m=1}^{n} w_m \nabla f_{m}(\pi) + w_{n+1} \nabla_2 \hat {F}_{n+1}(\pi, \pi) 
\end{align*}

Suppose $\nabla_2 \hat{F}_{n+1}$ is the partial derivative of some bivariate function $\hat{F}_{n+1}$.
If $w_n =1$, then the VI problem\footnote{
	Because $\Pi$ is compact, the VI problem in~\eqref{eq:VI problem} has at least one solution~\citep{facchinei2007finite}. If $f_n$ is strongly convex,  the VI problem in line 6 of Algorithm~\ref{alg:algs} is strongly monotone for large enough $n$ and can be solved e.g. by basic projection method~\citep{facchinei2007finite}. Therefore, for demonstration purpose, we assume the VI problem of \algVI can be exactly solved.
} in~\eqref{eq:VI problem}
finds a fixed point $\pi_{n+1}$ satisfying
$\pi_{n+1} = \argmin_{\pi \in \Pi} f_{1:n}(\pi) + \hat F_{n+1}(\pi_{n+1}, \pi)$.
That is, if $\hat F_{n+1} = F$ exactly, then $\pi_{n+1}$ plays exactly BTL and by  Lemma~\ref{lm:strong FTL} the regret is non-positive. 
In general, we can show that, even with modeling errors, \algVI can still reach a faster convergence rate such as  $O\paren{\frac{1}{N^2}}$, if a non-uniform weighting scheme is used, the model is updated online, and $\nabla_2 F$ is realizable within $\hat{\FF}$.
The details will be presented in Section~\ref{sec:conceptual algorithm (analysis)}.

\begin{algorithm}[t]
	{\small
		\caption{\mobil }\label{alg:algs} 
		\begin{algorithmic} [1]
			\renewcommand{\algorithmicensure}{\textbf{Input:}}			
			\renewcommand{\algorithmicrequire}{\textbf{Output:}}
			\ENSURE  $\pi_1$, $N$, $p$
			\REQUIRE $\bar \pi_N$
			\STATE Set weights $w_n = n^p$ for $n = 1, \dots, N$ 
			and sample integer $K$ with $P(K=n) \propto w_n$ 
			\FOR {$n = 1\dots K-1$}
			\STATE Run $\pi_{n}$ in the real environment to collect data to define $\tilde f_n$ and $\tilde{h}_n$\footnotemark
			\STATE  Update the predictive model to $\nabla_2 \hat{F}_{n+1}$; e.g., using FTL
			$\hat F_{n+1} = \argmin_{\hat{F} \in \hat{\FF} } \sum_{m=1}^n \frac{w_m}{m} \tilde h_m(\hat{F} )$
			\STATE Update policy to $\pi_{n+1}$ by~\eqref{eq:VI problem} (\algVI) or by~\eqref{eq:practical policy update} (\algprox)
			\ENDFOR
			\STATE Set $\bar \pi_N = \pi_K$
		\end{algorithmic}
	}
\end{algorithm}
\footnotetext{\algVI assumes $\tilde f_n  = f_n$ and $\tilde{h}_n = h_n$}

\vspace{-1mm}
\subsubsection{A Practical Algorithm:  \algprox} \label{sec:practical algorithm}
\vspace{-2mm}

While the previous conceptual algorithm achieves a faster convergence, it requires solving a nontrivial VI problem in each iteration. 
In addition, it assumes $f_n$ is given as a function 
and requires keeping all the past data to define $f_{1:n}$. Here we relax these unrealistic assumptions and propose \algprox. 
In round $n$ of \algprox, 
the policy is updated from $\pi_n$ to $\pi_{n+1}$ by 
\emph{taking two gradient steps}:
\newenvironment{talign}
{\let\displaystyle\textstyle\align}
{\endalign}
\begin{talign} \label{eq:practical policy update}
	\hspace{-3mm}
	\begin{split}
		\hat {\pi}_{n+1} &= \argmin_{\pi \in \Pi} \sum_{m=1}^n w_m\big(\lr{g_m}{\pi} + r_m(\pi) \big),\\
		 \pi_{n+1} &= \argmin_{\pi \in \Pi} 
		 w_{n+1} \lr{\hat g_{n+1}}{\pi} + \\ &\hspace{22mm} \sum_{m=1}^n w_m\big(\lr{g_m}{\pi} + r_m(\pi) \big) 
	\end{split}
\end{talign}
We define $r_n$ as an $\alpha_n \mu_f$-strongly convex function (with $\alpha_n \in (0, 1]$; we recall $\mu_f$ is the strongly convexity modulus of $f_n$) such that $\pi_n$ is its global minimum and $r_n(\pi_n)=0$ (e.g. a Bregman divergence). And we define $g_n$ and $\hat g_{n+1}$ as estimates of $\nabla f_n(\pi_n) = \nabla_2 F(\pi_n, \pi_n)$  and  $\nabla_2 \hat F_{n+1}(\hat \pi_{n+1}, \hat \pi_{n+1})$, respectively.  Here we only require $g_n = \nabla \tilde f_n(\pi_n)$ to be unbiased, whereas $\hat{g}_n$ could be a biased estimate of  $\nabla_2 \hat F_{n+1}(\hat \pi_{n+1}, \hat \pi_{n+1})$.

\algprox treats $\hat{\pi}_{n+1}$, which plays FTL with $g_n$ from the real environment, as a rough estimate of the next policy $\pi_{n+1}$ 
and uses it to query an gradient estimate $\hat{g}_{n+1}$ from the model $\nabla_2 \hat{F}_{n+1}$. Therefore, the learner's decision $\pi_{n+1}$ can approximately play BTL.
If we compare the update rule of $\pi_{n+1}$ and the VI problem in~\eqref{eq:VI problem}, we can see that \algprox linearizes the problem and attempts to approximate $\nabla_2 \hat{F}_{n+1}(\pi_{n+1}, \pi_{n+1})$ by $\hat{g}_{n+1}$. 
While the above approximation is crude, interestingly it is sufficient to speed up the convergence rate to be as fast as \algVI  under mild assumptions, as shown later in Section~\ref{sec:practical algorithm (analysis)}.

\vspace{-1mm}
\subsection{Predictive Models} \label{sec:model oracles}
\vspace{-2mm}

\mobil uses  $\nabla_2 \hat{F}_{n+1}$ in the update rules~\eqref{eq:VI problem} and~\eqref{eq:practical policy update} at round $n$ to predict the unseen gradient at round $n+1$ for speeding up policy learning. Ideally $\hat{F}_{n+1}$ should approximate the unknown bivariate function $F$ so that $\nabla_2 F$ and $\nabla_2\hat{F}_{n+1}$ are close. 
This condition can be seen from~\eqref{eq:VI problem} and~\eqref{eq:practical policy update}, in which \mobil concerns only $\nabla_2\hat{F}_{n+1}$ instead of $\hat{F}_{n+1}$ directly.
In other words, $\nabla_2 \hat{F}_{n+1}$ is used in \mobil as a first-order oracle, which leverages all the past information (up to the learner playing $\pi_n$ in the environment at round $n$) to predict the future gradient $\nabla_2 {F}_{n+1}(\pi_{n+1},\pi_{n+1})$, which depends on the decision $\pi_{n+1}$ the learner is about to make.
Hence, we call it a predictive model.

To make the idea concrete, we provide a few examples of these models. 
By definition of $F$ in~\eqref{eq:bivariate function}, one way to construct the predictive model $\nabla_2 \hat{F}_{n+1}$ is through a \emph{simulator} with an (online learned) dynamics model, and define $\nabla_2 \hat{F}_{n+1}$ as the simulated gradient (computed by querying the expert along the simulated trajectories visited by the learner). If the dynamics model is exact, then $\nabla_2 \hat{F}_{n+1} = \nabla_2 F$. Note that a stochastic/biased estimate of $\nabla_2 \hat{F}_{n+1}$ suffices to update the policies in \algprox. 

Another idea is to construct the predictive model through $\tilde{f}_n$ (the stochastic estimate of $f_n$) and indirectly define $\hat{F}_{n+1}$ such that $\nabla_2 \hat{F}_{n+1} = \nabla \tilde{f}_n$. This choice is possible, because the learner in IL collects \emph{samples} from the environment, as opposed to, literally, gradients. Specifically, we can define $g_n = \nabla \tilde{f}_{n}(\pi_n)$ and $\hat{g}_{n+1} = \nabla \tilde{f}_n(\hat{\pi}_{n+1})$ in~\eqref{eq:practical policy update}. 
The approximation error of setting $\hat{g}_{n+1} = \nabla \tilde{f}_n(\hat{\pi}_{n+1})$ is determined by the convergence and the stability of the learner's policy. If $\pi_n$ visits similar states as $\hat{\pi}_{n+1}$, then $\nabla \tilde{f}_n$ can approximate $\nabla_2 F$ well at $\hat{\pi}_{n+1}$. Note that this choice is different from using the previous gradient (i.e. $\hat{g}_{n+1} = g_n$) in optimistic mirror descent/FTL~\citep{rakhlin2013online}, which would have a larger approximation error due to additional linearization. 

Finally, we note that while the concept of predictive models originates from estimating the partial derivatives $\nabla_2 F$, a predictive model does not necessarily have to be in the same form. A parameterized vector-valued function can also be directly learned 
  to approximate $\nabla_2 F$, e.g., using a neural network and the sampled gradients $\{g_n\}$ in a supervised learning fashion.

\vspace{-1mm}
\section{THEORETICAL ANALYSIS} \label{sec:analysis}
\vspace{-2mm}

Now we prove that using predictive models in \mobil can  accelerate convergence, when proper conditions are met. Intuitively, \mobil converges faster than the usual adversarial approach to IL (like \dagger), when the predictive models have smaller errors than not predicting anything at all (i.e. setting $\hat{g}_{n+1}=0$). 
In the following analyses, we will focus on bounding the expected weighted average regret, as it directly translates into the average performance bound by Lemma~\ref{lm:reduction lemma}. We define,  for  $w_n = n^p$, 
\begin{align} \label{eq:RR}
\RR(p) \coloneqq \E\left[\regret^w(\Pi) / w_{1:N} \right]
\end{align}Note that the results below assume that the predictive models are updated using FTL as outlined in Algorithm~\ref{alg:algs}. This assumption applies, e.g., when a dynamics model is learned online in a simulator-oracle as discussed above.
We provide full proofs in Appendix~\ref{app:proofs of analysis} and provide a summary of notation in Appendix~\ref{app:notation}.

\vspace{-1mm}
\subsection{Assumptions} \label{sec:assumptions}
\vspace{-2mm}
We first introduce several assumptions to more precisely characterize the online IL problem. 

\vspace{-1mm}
\paragraph{Predictive models}
Let $\hat{\FF}$ be the class of predictive models.
We assume these models are  Lipschitz continuous in the following sense.
\begin{assumption} \label{as:Lipschitz continuity}
	There is $L \in [0, \infty)$ such that
	$
	\norm{\nabla_2 \hat {F}(\pi,\pi) - \nabla_2 \hat {F}(\pi',\pi')}_* \leq L \norm{\pi - \pi'} 
	$,  $\forall \hat F \in \hat \FF$ and $\forall \pi, \pi' \in \Pi$. 
\end{assumption}\vspace{-2mm}

\vspace{-1mm}
\paragraph{Per-round costs} 
The per-round cost $f_n$ for policy learning is given in~\eqref{eq:policy per-round cost}, and we define $h_n(\hat{F})$ as an upper bound of $\norm{\nabla_2 {F} (\pi_{n}, \pi_{n}) - \nabla_2 \hat F(\pi_{n}, \pi_{n})  }_*^2$ (see e.g. Appendix~\ref{app:learning dynamics}).
We make structural assumptions on $\tilde{f}_n$ and $\tilde{h}_n$, similar to the ones made by~\citet{ross2011reduction} (cf. Theorem~\ref{th:dagger performance}).
\begin{assumption} \label{as:function structure}
	Let $\mu_f,\mu_h>0$.
	With probability $1$, $\tilde{f}_n$ is $\mu_f$-strongly convex, and $\norm{\nabla \tilde{f}_n(\pi)}_* \le G_f$, $\forall \pi \in \Pi$;  $\tilde{h}_n$ is $\mu_h$-strongly convex, and $\norm{\nabla \tilde{h}_n(\hat{F})}_* \le G_h$, $\forall \hat{F} \in \hat{\FF}$.	
\end{assumption}\vspace{-2mm}
By definition, these properties extend to $f_n$ and $h_n$.
We note they can be relaxed to solely \emph{convexity} and our algorithms still improve the best known convergence rate (see Table~\ref{table:convex cases} and  Appendix~\ref{app:convex cases}).

\begin{table*}[h]
	\small
	\caption[cap]{Convergence Rate Comparison\footnotemark}
	\label{table:convex cases}
	\centering
	\begin{tabular}{lccc}		
		\toprule
		\cmidrule(r){2-3}
		&$\tilde h_n$ convex    & $\tilde h_n$ strongly convex     & Without model \\
		\midrule
		$\tilde f_n$ convex &$O(N^{-3/4})$ & $O(N^{-1})$  & $O(N^{-1/2})$     \\
		$\tilde f_n$ strongly convex &$O(N^{-3/2})$    & $O(N^{-2})$ & $O(N^{-1})$     \\
		\bottomrule
	\end{tabular}	
	\vspace{-3mm}
\end{table*}
\footnotetext{The rates here assume $\sigma_{\hat g}, \sigma_{g}, \epsilon_{{\hat \FF}}^w = 0$. In general, the rate of \algprox becomes the improved rate in the table plus the ordinary rate multiplied by $C = \sigma_{g}^2 +\sigma_{\hat g}^2 + \epsilon_{{\hat \FF}}^w$. For example, when $\tilde{f}$ is convex and $\tilde{h}$ is strongly convex, \algprox converges in $O(1/N + C/\sqrt{N})$, whereas \dagger converges in $O(G_f^2 / \sqrt{N})$.}

\vspace{-1mm}
\paragraph{Expressiveness of hypothesis classes}
We introduce two constants, $\epsilon_{\Pi}^w$ and $\epsilon_{{\hat \FF}}^w$, to characterize the policy class $\Pi$ and model class ${\hat \FF}$, which generalize the idea of~\eqref{eq:policy class complexity} to stochastic and general weighting settings. 
When  $\tilde{f}_n = f_n$ and $\theta_n$ is constant, Definition~\ref{def:general class complexity} agrees with~\eqref{eq:policy class complexity}.
Similarly, we see that if 
$\pi^* \in \Pi$
and $F \in {\hat \FF}$, then $\epsilon_{\Pi}^w$ and  $\epsilon_{{\hat \FF}}^w$ are zero.
\begin{definition} \label{def:general class complexity}
	A policy class $\Pi$ is \emph{$\epsilon_{\Pi}^w$-close to $\pi^*$}, if for all $N \in \N$ and weight sequence $\{ \theta_n >0\}_{n=1}^N$ with $\theta_{1:N} = 1$, 
	$\expectb{ \max_{\{ \pi_n \in \Pi \}} \min_{\pi\in\Pi}  \sum_{n=1}^{N}  \theta_n \tilde{f}_n(\pi)  } \leq \epsilon_{\Pi}^w
	$. 
	Similarly, a model class ${\hat \FF}$ is \emph{$\epsilon_{{\hat \FF}}^w$-close to $F$}, if $\expectb{\max_{\{ \pi_n \in \Pi \}} \min_{\hat {F}\in {\hat \FF}}  \sum_{n=1}^{N}  \theta_n \tilde{h}_n(\hat {F})}  \leq \epsilon_{{\hat \FF}}^w $. The expectations above are due to sampling $\tilde{f}_n$ and $\tilde{h}_n$.
\end{definition}

\vspace{-1mm}
\subsection{Performance of \algVI}  \label{sec:conceptual algorithm (analysis)}
\vspace{-2mm}

Here we show the performance for \algVI when there is prediction error in $\nabla_2 \hat {F}_n$. The main idea is to treat \algVI as online learning with prediction~\citep{rakhlin2013online} and take
$ \hat {F}_{n+1} (\pi_{n+1}, \cdot)$ obtained after solving the VI problem~\eqref{eq:VI problem} as an \emph{estimate} of $f_{n+1}$.
\begin{restatable}{proposition}{constantRegretOfConceptualAlgorithm} \label{pr:constant regret of conceptual algorithm}
	For \algVI with $p = 0$,
	$\RR(0) \le \frac{G_f^2}{2\mu_f\mu_h}\frac{1}{N} + \frac{\epsf}{2 \mu_f}\frac{\ln N + 1}{N}$.
\end{restatable}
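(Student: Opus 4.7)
The plan is to combine the Strong FTL Lemma with the variational-inequality characterization of $\pi_n$ in order to bound each per-round regret contribution by the model's prediction error, and then control the accumulated prediction error by a separate FTL regret analysis on the model-learning subproblem.

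Because $p=0$ forces $w_n\equiv 1$ (so $w_{1:N}=N$ and $\regretw(\Pi) = \regret(\Pi)$), Lemma~\ref{lm:strong FTL} applied to $l_n = f_n$ gives
\begin{equation*}
\regret(\Pi) \;\le\; \sum_{n=1}^N \bigl[ f_{1:n}(\pi_n) - f_{1:n}(\pi_n^\star) \bigr], \qquad \pi_n^\star := \argmin_{\pi \in \Pi} f_{1:n}(\pi).
\end{equation*}
To control one summand, I would substitute $\pi' = \pi_n^\star$ into the VI~\eqref{eq:VI problem} defining $\pi_n$ and add/subtract $\nabla f_n(\pi_n) = \nabla_2 F(\pi_n,\pi_n)$, obtaining
\begin{equation*}
\langle \nabla f_{1:n}(\pi_n),\,\pi_n-\pi_n^\star\rangle \;\le\; \langle \nabla_2 F(\pi_n,\pi_n)-\nabla_2 \hat F_n(\pi_n,\pi_n),\,\pi_n-\pi_n^\star\rangle.
\end{equation*}
Since $f_{1:n}$ is $n\mu_f$-strongly convex, the standard inequality $f_{1:n}(\pi_n)-f_{1:n}(\pi_n^\star)\le \langle \nabla f_{1:n}(\pi_n),\pi_n-\pi_n^\star\rangle - (n\mu_f/2)\|\pi_n-\pi_n^\star\|^2$ combined with Cauchy--Schwarz and Young's inequality yields
\begin{equation*}
f_{1:n}(\pi_n) - f_{1:n}(\pi_n^\star) \;\le\; \frac{\|\nabla_2 F(\pi_n,\pi_n)-\nabla_2 \hat F_n(\pi_n,\pi_n)\|_*^2}{2n\mu_f} \;\le\; \frac{h_n(\hat F_n)}{2n\mu_f}.
\end{equation*}

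It then remains to bound $\sum_{n=1}^N h_n(\hat F_n)/n$. I would view the model update as FTL on the weighted losses $\ell_n := h_n/n$, each of which is $(\mu_h/n)$-strongly convex, and decompose
\begin{equation*}
\sum_{n=1}^N \frac{h_n(\hat F_n)}{n} \;=\; \underbrace{\Bigl[\sum_{n=1}^N \ell_n(\hat F_n) - \min_{\hat F \in \hat\FF}\sum_{n=1}^N \ell_n(\hat F)\Bigr]}_{\text{FTL regret on }\ell_n} \;+\; \min_{\hat F \in \hat\FF}\sum_{n=1}^N \frac{h_n(\hat F)}{n}.
\end{equation*}
The comparator term is at most $H_N\,\epsf \le (\ln N + 1)\epsf$ after normalizing the weights $\theta_n = (1/n)/H_N$ in Definition~\ref{def:general class complexity}. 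For the FTL regret, another application of Lemma~\ref{lm:strong FTL} together with strong convexity of $\ell_{1:n}$ (which is $\mu_h H_n$-strongly convex) and stability telescopes to a constant of order $G_f^2/\mu_h$, where the $G_f$ factor enters because the per-round gradients of $h_n$ at the FTL iterate are ultimately controlled by $\|\nabla_2 F(\pi_n,\pi_n) - \nabla_2 \hat F_n(\pi_n,\pi_n)\|_*$ and both arguments are bounded in norm by $G_f$ via Assumption~\ref{as:function structure}. Dividing the combined bound by $w_{1:N}=N$ delivers the two stated terms of $\RR(0)$.

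The main obstacle will be the third step: extracting exactly the $G_f^2/\mu_h$ constant (rather than a looser $G_h^2/\mu_h$) from the weighted FTL-on-strongly-convex analysis requires carefully accounting for how the $1/n$ weighting of $\ell_n$ interacts with the $\mu_h H_n$ strong convexity of $\ell_{1:n}$, and for the fact that $h_n$'s gradient magnitude is controlled by the difference of two $\nabla_2 F$-like objects whose norms are bounded by $G_f$. The first two steps are routine manipulations of the VI condition and strong convexity, so the real work is concentrated in the stability analysis of the model-learning FTL.
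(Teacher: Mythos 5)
Your proposal follows essentially the same route as the paper. The paper proves this proposition as the $p=0$ case of Theorem~\ref{th:weighted performance of conceptual algorithm (complete)}: it (i) reads the VI solution as FTL-with-prediction and invokes Lemma~\ref{lm:regret outer} --- whose underlying computation, Lemma~\ref{lm:regret addon}, is exactly your step of combining the VI condition at $\pi'=\pi_n^\star$ with $n\mu_f$-strong convexity of $f_{1:n}$ and Young's inequality --- to obtain $\regret(\Pi)\le\sum_{n=1}^N \frac{h_n(\hat F_n)}{2 n \mu_f}$; and (ii) treats $n^{-1}\tilde h_n$ as the per-round cost of the model-learning problem and applies the FTL bound of Lemma~\ref{lm:regret inner}, splitting the sum into an FTL regret term and a comparator term bounded by $(\ln N + 1)\,\epsf$. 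Your first three steps and your decomposition in the fourth are faithful to this.

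The one place your plan goes astray is the constant in the first term. The model-learning FTL regret is governed by the gradient bound $G_h$ on $\nabla \tilde h_n$ from Assumption~\ref{as:function structure}: with weights $w_n=n^{-1}$ and $\mu_h$-strong convexity one gets $\frac{G_h^2}{2\mu_h}\sum_{n=1}^N \frac{n^{-2}}{\sum_{m=1}^n m^{-1}} \le \frac{G_h^2}{\mu_h}$ by Lemma~\ref{lm:poly special}, hence $\RR(0)\le \frac{G_h^2}{2\mu_f\mu_h}\frac{1}{N} + \frac{\epsf}{2\mu_f}\frac{\ln N + 1}{N}$, which is precisely what the paper's general Theorem~\ref{th:weighted performance of conceptual algorithm (complete)} states for $p=0$. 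The $G_f^2$ in the proposition as printed is inconsistent with that theorem and is best read as a typo. Your attempted justification for it --- that the gradient of $h_n$ at the FTL iterate is controlled by $\norm{\nabla_2 F(\pi_n,\pi_n)-\nabla_2\hat F_n(\pi_n,\pi_n)}_*$ with ``both arguments bounded by $G_f$'' --- conflates the \emph{value} of $h_n$ (an upper bound on the squared prediction error) with its \emph{gradient with respect to $\hat F$}, and nothing in the assumptions lets you replace $G_h$ by $G_f$ there. So the ``main obstacle'' you announce is not a real obstacle: the clean analysis yields $G_h^2/\mu_h$, and that is the constant you should report.
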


By Lemma~\ref{lm:reduction lemma}, this means that if the model class is expressive enough (i.e $\epsilon^w_{{\hat \FF}} = 0$), then by adapting the model online with FTL, we can improve the original convergence rate in $O(\ln N/N)$ of \citet{ross2011reduction} to 
$O(1/N)$.
While removing the $\ln N$ factor does not seem like much, we will show that running \algVI can improve the convergence rate to $O(1/N^2)$, when a \emph{non-uniform} weighting is adopted.
\begin{restatable}{theorem}{weightedPerformanceOfConceptualAlgorithm} \label{th:weighted performance of conceptual algorithm}
	For \algVI with $p>1$, 
	$ R(p) \leq C_p \left(  \frac{ pG_h^2}{2 (p-1)\mu_h}  \frac{1}{N^2}+ 
	\frac{\epsf}{pN}
	\right)$,
	where $ C_p = \frac{(p+1)^2 e^{p/N}}{2 \mu_f}$.
\end{restatable}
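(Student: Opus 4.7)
The plan is to apply the Strong FTL Lemma (Lemma~\ref{lm:strong FTL}) at two levels---once for the policy iterates and once for the predictive model---and to use the VI first-order condition to relate the per-round regret to the model's prediction error.

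I would first apply Lemma~\ref{lm:strong FTL} to the losses $l_n(\pi) \coloneqq w_n f_n(\pi)$ to obtain $\regretw(\Pi) \leq \sum_{n=1}^N \bracket{l_{1:n}(\pi_n) - l_{1:n}(\pi_n^\star)}$ with $\pi_n^\star \coloneqq \argmin_{\pi \in \Pi} l_{1:n}(\pi)$. Since $l_{1:n}$ is $(\mu_f w_{1:n})$-strongly convex by Assumption~\ref{as:function structure}, the key step is to decompose
\begin{equation*}
\nabla l_{1:n}(\pi_n) = \Phi_{n-1}(\pi_n) + w_n e_n, \quad e_n \coloneqq \nabla_2 F(\pi_n,\pi_n) - \nabla_2 \hat{F}_n(\pi_n,\pi_n),
\end{equation*}
and to invoke the VI~\eqref{eq:VI problem} solved by $\pi_n$ to conclude $\lr{\Phi_{n-1}(\pi_n)}{\pi_n - \pi_n^\star} \leq 0$. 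Writing out the strong-convexity inequality and completing the square in $\norm{\pi_n - \pi_n^\star}$ then yields $l_{1:n}(\pi_n) - l_{1:n}(\pi_n^\star) \leq w_n^2 \norm{e_n}_*^2 / (2 \mu_f w_{1:n}) \leq w_n^2\, h_n(\hat{F}_n) / (2 \mu_f w_{1:n})$, where the second inequality uses that $h_n$ upper bounds the squared prediction error at $(\pi_n,\pi_n)$.

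Next, I would align the policy-level per-round weight $w_n^2/w_{1:n}$ with the model-update weight $\alpha_n \coloneqq w_n/n = n^{p-1}$ via the algebraic inequality $w_n^2/w_{1:n} \leq (p+1)\alpha_n$, which follows from $w_{1:n} \geq n^{p+1}/(p+1)$. This reduces the problem to bounding $\sum_n \alpha_n h_n(\hat{F}_n)$. Since the model is itself learned by FTL on $\{h_m\}$ with weights $\{\alpha_m\}$, a second application of Lemma~\ref{lm:strong FTL} combined with the $\mu_h$-strong convexity and the $G_h$-gradient bound on $h_n$ gives $\sum_n \alpha_n h_n(\hat{F}_n) - \min_{\hat{F} \in \hat{\FF}}\sum_n \alpha_n h_n(\hat{F}) \leq \frac{G_h^2}{2\mu_h}\sum_n \alpha_n^2/\alpha_{1:n}$; Definition~\ref{def:general class complexity} then bounds the residual minimum by $\alpha_{1:N}\,\epsf$.

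Finally, I would substitute the explicit weight sums $\sum_n \alpha_n^2/\alpha_{1:n} \leq p\,e^{(p-1)/N} N^{p-1}/(p-1)$ (by Riemann-sum comparison, handling $1 < p \leq 2$ and $p \geq 2$ separately so that $x^{p-2}$ is treated as decreasing or increasing), $\alpha_{1:N} \leq (N+1)^p/p \leq e^{p/N} N^p/p$, and $w_{1:N} \geq N^{p+1}/(p+1)$, and then absorb $e^{(p-1)/N} \leq e^{p/N}$ into a common prefactor, recovering the stated bound with $C_p = (p+1)^2 e^{p/N}/(2\mu_f)$. The main technical obstacle is not any individual inequality but the careful weight bookkeeping: the proof hinges on the exact alignment $\alpha_n = w_n/n$, which causes one power of $N$ to come from the policy-level regret and a second from the model-level regret, producing the accelerated $O(1/N^2)$ rate that would not otherwise materialize.
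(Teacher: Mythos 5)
Your proposal is correct and follows essentially the same route as the paper: the paper's proof invokes its ``FTL with prediction'' regret bound (Lemma~\ref{lm:regret outer}, itself derived from the Strong FTL Lemma plus the strong-convexity/optimality argument you write out explicitly), bounds the resulting prediction error by $h_n(\hat F_n)$, and then treats $\sum_n n^{p-1} h_n(\hat F_n)$ as a second FTL regret bounded via Lemma~\ref{lm:regret inner} and Definition~\ref{def:general class complexity}, with the same partial-sum estimates and the same $e^{p/N}$ absorption. The only difference is that you inline the auxiliary lemmas rather than citing them.
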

The key  is that
$\regret^w(\Pi)$ can be upper bounded by the regret of the online learning for models, which has per-round cost $\frac{w_n}{n}h_n$. 
Therefore, 
if $\epsf \approx 0$, randomly picking a policy out of $\{\pi_n \}_{n=1}^N$ proportional to weights $\{ w_n \}_{n=1}^N$ has expected convergence in $O\paren{\frac{1}{N^2}}$ if $p>1$.\footnote{If $p=1$, it converges in  $O\paren{\frac{\ln N}{N^2}}$; if $p \in [0,1)$, it converges in $O\paren{\frac{1}{N^{1+p}}}$. See Appendix~\ref{app:proof of conceptual algorithm}.}

\vspace{-1mm}
\subsection{Performance of \algprox} \label{sec:practical algorithm (analysis)}
\vspace{-2mm}

As \algprox uses gradient estimates, we additionally define two constants $\sigma_g$ and $\sigma_{{\hat g}}$ to characterize the estimation error, where $\sigma_{\hat{g}}$ also entails potential bias.
\begin{assumption} \label{as:variance}
	$\E[\norm{g_n - \nabla_2 F(\pi_n, \pi_n) }_*^2] \leq \sigma_g^2$ 
	and $\E[\norm{\hat g_{n} - \nabla_2 \hat F_{n}(\hat{\pi}_{n}, \hat{\pi}_{n}) }_*^2] \leq \sigma_{\hat g}^2$
\end{assumption}

We show  this simple first-order algorithm achieves similar performance to \algVI. Toward this end, we introduce a stronger lemma than Lemma~\ref{lm:strong FTL}.
\begin{restatable}[Stronger FTL Lemma]{lemma}{strongerFTL} \label{lm:stronger FTL}
	Let $
	x_n^\star \in \arg \min_{x \in \XX} l_{1:n}(x)
	$. 	For any sequence of decisions $\{x_n\}$ and losses $\{l_n\}$,
	$	\regret(\XX)  =  \sum_{n=1}^{N} l_{1:n}(x_n)- l_{1:n}( x_{n}^\star) - \Delta_{n} $, where
	$\Delta_{n+1} := l_{1:n}(x_{n+1}) - l_{1:n}(x_{n}^\star) \geq 0$. 
\end{restatable}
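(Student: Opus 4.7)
The plan is to derive the equality by a single telescoping identity applied to the baseline term $l_{1:N}(x_N^\star)$. Starting from the definition
\[
\regret(\XX) = \sum_{n=1}^{N} l_n(x_n) - \min_{x\in\XX} l_{1:N}(x) = \sum_{n=1}^{N} l_n(x_n) - l_{1:N}(x_N^\star),
\]
I would adopt the convention $l_{1:0} \equiv 0$ (so that $l_{1:0}(x_0^\star)=0$ regardless of how $x_0^\star$ is chosen) and then telescope
\[
l_{1:N}(x_N^\star) = \sum_{n=1}^{N} \bigl( l_{1:n}(x_n^\star) - l_{1:n-1}(x_{n-1}^\star) \bigr).
\]
Substituting this back into the regret and using $l_n(x_n) = l_{1:n}(x_n) - l_{1:n-1}(x_n)$ to combine terms within each summand, I would obtain
\[
\regret(\XX) = \sum_{n=1}^{N} \Bigl[ l_{1:n}(x_n) - l_{1:n}(x_n^\star) - \bigl( l_{1:n-1}(x_n) - l_{1:n-1}(x_{n-1}^\star) \bigr) \Bigr],
\]
which is the claimed identity once I identify the bracketed correction as $\Delta_n$ (with $\Delta_1 = 0$ by the boundary convention, consistent with the shifted definition $\Delta_{n+1} := l_{1:n}(x_{n+1}) - l_{1:n}(x_n^\star)$).

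For non-negativity of $\Delta_n$, I would simply note that $x_{n-1}^\star \in \argmin_{x \in \XX} l_{1:n-1}(x)$, so $l_{1:n-1}(x_n) \geq l_{1:n-1}(x_{n-1}^\star)$ and hence $\Delta_n \geq 0$ for all $n$. This also recovers Lemma~\ref{lm:strong FTL} as a corollary by dropping the non-negative $\Delta_n$ terms.

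There is no real obstacle here beyond careful bookkeeping of indices and the boundary term; the only subtlety is the placement of $l_{1:0}(x_0^\star)=0$, which ensures that the telescoping sum closes cleanly and that the $n=1$ summand produces $\Delta_1 = 0$ rather than an undefined expression. Once this convention is fixed, the identity is a direct algebraic rearrangement, and the inequality version in Lemma~\ref{lm:strong FTL} becomes transparent as the looser bound obtained by discarding the $-\Delta_n$ terms.
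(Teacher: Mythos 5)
Your proposal is correct and is essentially the same argument as the paper's: both rewrite $l_n(x_n)$ as $l_{1:n}(x_n) - l_{1:n-1}(x_n)$, telescope $l_{1:N}(x_N^\star)$ into $\sum_{n=1}^{N}\bigl(l_{1:n}(x_n^\star) - l_{1:n-1}(x_{n-1}^\star)\bigr)$ under the convention $l_{1:0}\equiv 0$, and obtain non-negativity of $\Delta_n$ from the optimality of $x_{n-1}^\star$. Your handling of the boundary term $\Delta_1 = 0$ is consistent with the paper's convention.
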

The additional $-\Delta_n$ term in Lemma~\ref{lm:stronger FTL} is pivotal to prove the performance of \algprox.
\begin{restatable}{theorem}{weightedPerformanceOfPracticalAlgorithm} \label{th:weighted performance of practical algorithm}
	For \algprox with $p >1$ and $\alpha_n = \alpha \in (0,1]$, it satisfies
	\begin{align*}
	\textstyle
	\RR(p)
	\le \frac{ (p+1)^2 e^{\frac{p}{N}}}{\alpha \mu_f} \paren{\frac{G_h^2}{\mu_h} \frac{p}{p-1} \frac{1}{N^2} + 
		\frac{2}{p} \frac{\sigma_g^2 + \sigma_{\hat g}^2 + \epsf}{N}} + \frac{(p+1) \nu_p}{N^{p+1}},
	\end{align*}
	where 
	$\nu_p =O(1)$
	and $\nceil = \ceil{\frac{2e^{\frac{1}{2}}(p+1)LG_f}{\alpha \mu_f}}$.	
\end{restatable}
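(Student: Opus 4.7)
The plan is to bound $\E[\regret^w(\Pi)]$ and then divide by $w_{1:N} \ge N^{p+1}/(p+1)$ to obtain $\RR(p)$, before invoking Lemma~\ref{lm:reduction lemma}. The analysis mirrors an optimistic-FTL / Mirror-Prox regret argument, but it is driven throughout by the constructive Stronger FTL Lemma (Lemma~\ref{lm:stronger FTL}) rather than the algebraic argument of~\citet{juditsky2011solving}. The predictive model enters as a first-order hint, so its online improvement is controlled by a second, independent application of the Stronger FTL Lemma to the model-learning subproblem.

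\emph{Linearization and decomposition.} Let $\pi^\star \in \argmin_{\pi \in \Pi}\sum_n w_n \tilde f_n(\pi)$. Using $\mu_f$-strong convexity of $\tilde f_n$ (Assumption~\ref{as:function structure}) and unbiasedness of $g_n$,
$$w_n \tilde f_n(\pi_n) - w_n \tilde f_n(\pi^\star) \le w_n \lr{g_n}{\pi_n - \pi^\star} - \frac{w_n\mu_f}{2}\norm{\pi_n - \pi^\star}^2.$$
I then decompose $\lr{g_n}{\pi_n - \pi^\star} = \lr{g_n - \hat g_n}{\pi_n - \hat \pi_{n+1}} + \lr{\hat g_n}{\pi_n - \hat \pi_{n+1}} + \lr{g_n}{\hat \pi_{n+1} - \pi^\star}$. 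Setting $\hat L_n(\pi) := w_n(\lr{g_n}{\pi} + r_n(\pi))$, the iterate $\hat \pi_{n+1} = \argmin \hat L_{1:n}$ is exactly FTL on $\{\hat L_n\}$, so Lemma~\ref{lm:stronger FTL} bounds the last two summands by $\sum_n [\hat L_{1:n}(\hat \pi_n) - \hat L_{1:n}(\hat \pi_{n+1})] - \sum_n \Delta_n$. Each $r_m$ is $\alpha\mu_f$-strongly convex with minimum $0$ at $\pi_m$, so $\hat L_{1:n}$ is $\alpha\mu_f w_{1:n}$-strongly convex and the stability term is at most $w_n^2 G_f^2/(2\alpha\mu_f w_{1:n})$. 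I split the cross term via $\lr{a}{b} \le \frac{\norm{a}_*^2}{2\lambda_n} + \frac{\lambda_n\norm{b}^2}{2}$, tuning $\lambda_n$ so that the quadratic piece is absorbed by the $-\Delta_n$ slack together with $-\frac{w_n\mu_f}{2}\norm{\pi_n-\pi^\star}^2$, leaving only $\frac{1}{2\lambda_n}\norm{g_n - \hat g_n}_*^2$ to control.

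\emph{Prediction error.} By Assumptions~\ref{as:Lipschitz continuity} and~\ref{as:variance}, $\E\norm{g_n - \hat g_n}_*^2 \lesssim \sigma_g^2 + \sigma_{\hat g}^2 + \E h_n(\hat F_n) + L^2\E\norm{\pi_n - \hat \pi_n}^2$, where $h_n(\hat F) \ge \norm{\nabla_2 F(\pi_n,\pi_n) - \nabla_2 \hat F(\pi_n,\pi_n)}_*^2$ by definition. Since $\hat F_{n+1}$ is FTL on the $\mu_h$-strongly convex reweighted losses $\frac{w_m}{m}\tilde h_m$, a second use of Lemma~\ref{lm:stronger FTL} together with Definition~\ref{def:general class complexity} yields $\sum_n \frac{w_n}{n}\E h_n(\hat F_n) \le O(G_h^2 \log N/\mu_h) + O(w_{1:N}/N^p)\epsf$; reweighting by $n^{1-p}$, summing, and dividing by $w_{1:N}$ produces exactly the $\frac{G_h^2}{\mu_h}\frac{p}{p-1}\frac{1}{N^2}$ contribution. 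Optimality of the two-step update and $\alpha\mu_f w_{1:n-1}$-strong convexity of $\hat L_{1:n-1}$ give $\norm{\pi_n - \hat \pi_n} \le w_n\norm{\hat g_n}_*/(\alpha\mu_f w_{1:n-1}) = O(G_f/(\alpha\mu_f n))$ (using $w_n/w_{1:n-1} \sim (p+1)/n$), so the Lipschitz piece $L^2\norm{\pi_n - \hat \pi_n}^2$ decays like $1/n^2$ and can be absorbed once $n \ge \nceil = \ceil{2 e^{1/2}(p+1)LG_f/(\alpha\mu_f)}$. Indices $n < \nceil$ are bounded crudely and collected into the $(p+1)\nu_p/N^{p+1}$ tail (whence $\nu_p = O(1)$). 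The residual noise $(\sigma_g^2+\sigma_{\hat g}^2+\epsf)/(2\lambda_n)$ with the optimal $\lambda_n$, summed against $w_n$ and divided by $w_{1:N}$, gives the $\frac{2}{p}(\sigma_g^2+\sigma_{\hat g}^2+\epsf)/N$ term, while the overall $(p+1)^2 e^{p/N}/(\alpha\mu_f)$ prefactor arises from ratios like $\sum_n n^{p-1}/w_{1:N}$ and the standard estimate $(1+1/N)^p \le e^{p/N}$.

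\emph{Main obstacle.} The delicate part is the simultaneous absorption at the end: three distinct sources of slack --- the strong-convexity slack $\frac{w_n\mu_f}{2}\norm{\pi_n-\pi^\star}^2$ from linearization, the Stronger-FTL slack $-\Delta_n$, and whatever headroom the tuned $\lambda_n$ leaves --- must collectively swallow both the FTL stability term $w_n^2 G_f^2/(\alpha\mu_f w_{1:n})$ and the Lipschitz piece $L^2\norm{\pi_n-\hat \pi_n}^2$, while leaving only residuals that decay like $1/n^2$. Getting the constants to line up is what fixes the specific form of $\nceil$; bounding the contribution from $n < \nceil$ into the $N^{-(p+1)}$ tail --- without introducing $L$-dependent constants that would spoil the leading $1/N^2$ rate --- is the most technical bookkeeping.
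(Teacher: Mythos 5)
Your overall architecture matches the paper's --- reduce to linearized, regularized losses, control the prediction error $\norm{g_n-\hat g_n}_*^2$ via Assumptions~\ref{as:Lipschitz continuity}--\ref{as:variance} plus the online model-learning regret, cancel the Lipschitz piece against a negative slack once $n\ge\nceil$, and collect the early rounds into $\nu_p$ --- but the central regret decomposition is set up in a way that destroys the two quantities the argument lives on. You apply the Stronger FTL Lemma (Lemma~\ref{lm:stronger FTL}) to the sequence $\{\hat\pi_n\}$, which you note is \emph{exactly} FTL on $\{\hat L_n\}$; but then $\Delta_{n+1}=\hat L_{1:n}(\hat\pi_{n+1})-\hat L_{1:n}(\hat\pi_{n+1})=0$, so the ``$-\Delta_n$ slack'' you later rely on to absorb both the Young-inequality quadratic and the Lipschitz piece $L^2\norm{\pi_n-\hat\pi_n}^2$ is identically zero. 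Worse, the stability term you are left with, $w_n^2G_f^2/(2\alpha\mu_f w_{1:n})$, sums to $\Theta(G_f^2N^p)$ and, after dividing by $w_{1:N}$, contributes $\Theta(G_f^2/N)$ --- i.e.\ the \dagger rate with the full gradient bound, which is precisely what the theorem must avoid: its $1/N$ coefficient is $\sigma_g^2+\sigma_{\hat g}^2+\epsf$, not $G_f^2$. The remaining cross term $\lr{\hat g_n}{\pi_n-\hat\pi_{n+1}}$ is also troublesome, since bounding it naively yields something of order $G_f\norm{g_n-\hat g_n}_*$, linear rather than quadratic in the prediction error.

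The repair is to apply Lemma~\ref{lm:stronger FTL} to the \emph{played} sequence $\{\pi_n\}$ with losses $w_n\bar f_n$, $\bar f_n=\lr{g_n}{\cdot}+r_n(\cdot)$, whose running minimizers are exactly $\hat\pi_{n+1}$. Then (i) the optimality of $\pi_n$ for the \emph{hinted} objective $\hat L_{1:n-1}+w_n\lr{\hat g_n}{\cdot}$, combined with the $\alpha\mu_fw_{1:n}$-strong convexity of $\hat L_{1:n}$, turns the per-step term $\hat L_{1:n}(\pi_n)-\hat L_{1:n}(\hat\pi_{n+1})$ into $\frac{w_n^2}{2\alpha\mu_fw_{1:n}}\norm{g_n-\hat g_n}_*^2$ with no $G_f$ and no separate $\lambda_n$ to tune (this is Lemma~\ref{lm:regret addon}, packaged as Lemma~\ref{lm:pathwise regret bound}); and (ii) $\Delta_{n+1}=\hat L_{1:n}(\pi_{n+1})-\hat L_{1:n}(\hat\pi_{n+1})\ge\frac{\alpha\mu_fw_{1:n}}{2}\norm{\pi_{n+1}-\hat\pi_{n+1}}^2$ is strictly positive slack in exactly the variable $\norm{\pi_n-\hat\pi_n}$ that appears in the Lipschitz part of the prediction error. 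Everything downstream of that point in your sketch then goes through. One further correction: the model-learning subproblem has per-round cost $n^{p-1}\tilde h_n$, so its FTL regret is $O\bigl(\tfrac{G_h^2}{\mu_h}\tfrac{p}{p-1}N^{p-1}\bigr)$ by Lemma~\ref{lm:regret inner}, not $O(G_h^2\log N/\mu_h)$; it is $N^{p-1}/w_{1:N}$, not $\log N/w_{1:N}$, that produces the $1/N^2$ term, and a post-hoc ``reweighting by $n^{1-p}$'' of a regret bound is not legitimate because the iterates $\hat F_n$ are minimizers of the $m^{p-1}$-weighted sums, not of uniformly weighted ones.
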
\vspace{-2mm}
\begin{proof}[Proof sketch]
	Here we give a proof sketch in big-O notation 
	(see Appendix~\ref{app:proof of practical algorithm} for the details). 
	To bound $\RR(p)$, recall the definition $\regret^w(\Pi) = \sum_{n=1}^{N} w_n \tilde{f}_n (\pi_n) - \min_{\pi \in \Pi}  \sum_{n=1}^{N} w_n \tilde{f}_n (\pi) $. 
	Now define $\bar f_n(\pi) := \lr{g_n}{\pi} + r_n(\pi)$. 
	Since $\tilde f_n$ is $ \mu_f$-strongly convex, $r_n$ is $ \alpha \mu_f$-strongly convex, and $r(\pi_n)=0$, we know that $\bar f_n$ satisfies that
	$\tilde f_n(\pi_n) - \tilde f_n(\pi) \le \bar f_n(\pi_n) - \bar f_n(\pi)$, $\forall \pi \in \Pi$. This implies $\RR(p) \leq \E[  \regretwp(\Pi) /w_{1:N} ]$, where $\regretwp(\Pi) \coloneqq \sum_{n=1}^N w_n \bar f_n(\pi_n) - \min_{\pi \in \Pi} \sum_{n=1}^N w_n \bar f_n(\pi)$. 
	
	The following lemma upper bounds $\regretwp(\Pi)$ by using Stronger FTL lemma (Lemma~\ref{lm:stronger FTL}).
	\begin{restatable}{lemma}{pathwiseRegretBound} \label{lm:pathwise regret bound}
		$\regretwp (\Pi) \le 
		\frac{p+1}{2 \alpha\mu_f}  \sum_{n=1}^N n^{p-1} \norm{g_n - \hat {g}_{n}}_*^2 -
		\frac{\alpha \mu_f }{2(p+1)} \sum_{n=1}^{N} (n-1)^{p+1} \norm{\pi_{n} - \hat {\pi}_{n}}^2$.
	\end{restatable}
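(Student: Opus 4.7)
The plan is to apply the Stronger FTL Lemma (Lemma~\ref{lm:stronger FTL}) with per-round losses $l_n := w_n \bar f_n$ and decisions $\{\pi_n\}$. Because $\hat\pi_{n+1}$ is defined exactly as $\argmin_{\pi \in \Pi} l_{1:n}(\pi)$, the minimizer in the lemma satisfies $x_n^\star = \hat\pi_{n+1}$, so
\begin{align*}
\regretwp(\Pi) = \sum_{n=1}^{N} \bigl( l_{1:n}(\pi_n) - l_{1:n}(\hat\pi_{n+1}) \bigr) - \Delta_n,
\end{align*}
where $\Delta_n = l_{1:n-1}(\pi_n) - l_{1:n-1}(\hat\pi_n) \ge 0$ for $n \ge 2$ and $\Delta_1 = 0$. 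I would then control the two sums separately.

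For the leading term, note that $l_{1:n}$ is $\alpha\mu_f w_{1:n}$-strongly convex, since each $r_m$ contributes $w_m \alpha\mu_f$ while the $\lr{g_m}{\cdot}$ pieces are linear. Strong convexity at $\pi_n$ yields the upper bound $l_{1:n}(\pi_n) - l_{1:n}(\hat\pi_{n+1}) \le \lr{\nabla l_{1:n}(\pi_n)}{\pi_n - \hat\pi_{n+1}} - \tfrac{\alpha\mu_f w_{1:n}}{2}\norm{\pi_n - \hat\pi_{n+1}}^2$. To simplify the inner product I would use two variational inequalities: the one characterizing $\pi_n$ as the minimizer of $l_{1:n-1}(\pi) + w_n \lr{\hat g_n}{\pi}$, and the one characterizing $\pi_n$ as the global minimizer of $r_n$. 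Together these eliminate the $\nabla l_{1:n-1}(\pi_n)$ and $w_n \nabla r_n(\pi_n)$ contributions and leave the residual $w_n \lr{g_n - \hat g_n}{\pi_n - \hat\pi_{n+1}}$. Cauchy--Schwarz followed by Young's inequality with weight $\alpha\mu_f w_{1:n}$ then split this into $\tfrac{w_n^2}{2\alpha\mu_f w_{1:n}}\norm{g_n - \hat g_n}_*^2$ plus a quadratic term that exactly cancels the strong-convexity penalty, leaving only the gradient-error piece.

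The $-\Delta_n$ contribution is handled directly: strong convexity of $l_{1:n-1}$ at its minimizer $\hat\pi_n$ gives $\Delta_n \ge \tfrac{\alpha\mu_f w_{1:n-1}}{2}\norm{\pi_n - \hat\pi_n}^2$, which already matches the negative term in the claim. Finally, for $w_n = n^p$ I would invoke the elementary estimates $w_{1:n} \ge \tfrac{n^{p+1}}{p+1}$ and $w_{1:n-1} \ge \tfrac{(n-1)^{p+1}}{p+1}$ (from comparing the partial sum to the integral of the monotone function $x^p$) to convert $\tfrac{w_n^2}{w_{1:n}}$ into $(p+1) n^{p-1}$ and $w_{1:n-1}$ into $\tfrac{(n-1)^{p+1}}{p+1}$, yielding the stated bound. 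The main subtlety I foresee is the precise balancing in Young's inequality so that the produced $\norm{\pi_n - \hat\pi_{n+1}}^2$ term fully absorbs the strong-convexity penalty, and the observation that the desired $-\norm{\pi_n - \hat\pi_n}^2$ slack is sourced solely from $-\Delta_n$; this is exactly why the \emph{stronger} form of the FTL lemma is indispensable here: the resulting negative quadratic term is what will later enable Theorem~\ref{th:weighted performance of practical algorithm} to absorb the fact that $\hat g_{n+1}$ is evaluated at the crude proxy $\hat\pi_{n+1}$ rather than at the true $\pi_{n+1}$.
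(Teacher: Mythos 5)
Your proposal is correct and follows essentially the same route as the paper: apply the Stronger FTL Lemma with $l_n = w_n\bar f_n$ and $x_n^\star = \hat\pi_{n+1}$, bound the positive part via strong convexity of $l_{1:n}$ plus the optimality conditions for $\pi_n$ (which reduce the inner product to $w_n\lr{g_n-\hat g_n}{\pi_n-\hat\pi_{n+1}}$, absorbed by Young's inequality against the strong-convexity penalty), lower-bound $\Delta_n$ by $\tfrac{\alpha\mu_f w_{1:n-1}}{2}\norm{\pi_n-\hat\pi_n}^2$, and finish with the integral estimates on $w_{1:n}$. The only cosmetic difference is that you inline the argument the paper delegates to its auxiliary ``FTL with prediction'' lemma (Lemma~\ref{lm:regret addon}).
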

	Since the second term in Lemma~\ref{lm:pathwise regret bound} is negative, 
	we just need to upper bound the expectation of the first item.  
	Using the triangle inequality, we bound the model's prediction error of the next per-round cost.
	\begin{restatable}{lemma}{expectedQuadraticError} \label{lm:expected quadratic error}
		$\E[\norm{g_{n} - \hat {g}_{n}}^2_* ] \leq 
		4 (\sigma_g^2 + \sigma_{\hat {g}}^2 + 
		L^2 \E[\norm{\pi_{n} - \hat {\pi}_{n}}^2]
		+ \E[\tilde h_{n}(\hat {F}_{n})]).
		$
	\end{restatable}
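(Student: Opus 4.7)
\textbf{Proof proposal for Lemma~\ref{lm:expected quadratic error}.}
The plan is to insert three intermediate terms into $g_n - \hat g_n$ and then apply a squared triangle inequality so that each of the four resulting differences can be controlled by exactly one of the four bounds appearing on the right-hand side. Specifically, I would write
\begin{align*}
g_n - \hat g_n
&= \bigl(g_n - \nabla_2 F(\pi_n,\pi_n)\bigr) + \bigl(\nabla_2 F(\pi_n,\pi_n) - \nabla_2 \hat F_n(\pi_n,\pi_n)\bigr) \\
&\quad + \bigl(\nabla_2 \hat F_n(\pi_n,\pi_n) - \nabla_2 \hat F_n(\hat\pi_n,\hat\pi_n)\bigr) + \bigl(\nabla_2 \hat F_n(\hat\pi_n,\hat\pi_n) - \hat g_n\bigr),
\end{align*}
and then use the elementary inequality $\norm{a+b+c+d}_*^2 \le 4(\norm{a}_*^2 + \norm{b}_*^2 + \norm{c}_*^2 + \norm{d}_*^2)$ to separate the four error sources.

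Next I would take expectations term by term. The first piece is the unbiased sampling error of $g_n$, and Assumption~\ref{as:variance} gives $\E[\norm{g_n - \nabla_2 F(\pi_n,\pi_n)}_*^2] \le \sigma_g^2$. The last piece is the (possibly biased) estimation error of $\hat g_n$, and again by Assumption~\ref{as:variance} it is bounded by $\sigma_{\hat g}^2$. The third piece, involving only the model's own gradient at two input points, is handled directly by the Lipschitz assumption on $\hat\FF$ (Assumption~\ref{as:Lipschitz continuity}), which yields the pointwise bound $\norm{\nabla_2 \hat F_n(\pi_n,\pi_n) - \nabla_2 \hat F_n(\hat\pi_n,\hat\pi_n)}_*^2 \le L^2 \norm{\pi_n - \hat\pi_n}^2$.

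The only step that requires a little care is the second piece, the model's prediction error at $\pi_n$. By the definition of $h_n$ as an upper bound on $\norm{\nabla_2 F(\pi_n,\pi_n) - \nabla_2 \hat F(\pi_n,\pi_n)}_*^2$ for every $\hat F \in \hat\FF$, we have the almost-sure inequality $\norm{\nabla_2 F(\pi_n,\pi_n) - \nabla_2 \hat F_n(\pi_n,\pi_n)}_*^2 \le h_n(\hat F_n)$ for the random model $\hat F_n$, and since $\tilde h_n$ is an unbiased estimate of $h_n$, taking expectations (using the tower property conditioning on the history that produced $\hat F_n$) gives $\E[\norm{\nabla_2 F(\pi_n,\pi_n) - \nabla_2 \hat F_n(\pi_n,\pi_n)}_*^2] \le \E[\tilde h_n(\hat F_n)]$. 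Collecting the four bounds and multiplying by $4$ gives the stated inequality. The only subtle point to flag is ensuring the conditioning is handled correctly so that $\hat F_n$ can be pulled inside the expectation against $\tilde h_n$; everything else is a clean triangle-inequality argument.
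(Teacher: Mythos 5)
Your proposal is correct and follows essentially the same route as the paper: the same four-term decomposition of $g_n - \hat g_n$, the inequality $\norm{a+b+c+d}_*^2 \le 4(\norm{a}_*^2+\norm{b}_*^2+\norm{c}_*^2+\norm{d}_*^2)$ (which is the paper's Lemma~\ref{lm:norm squared bound} with $N=4$), Assumption~\ref{as:variance} with the tower property for the first and fourth terms, Assumption~\ref{as:Lipschitz continuity} for the third, and the defining property of $h_n$ plus unbiasedness of $\tilde h_n$ for the second. The conditioning subtlety you flag is handled in the paper exactly as you describe, via the generation order $\dots,\pi_n, g_n, \hat F_{n+1}, \hat\pi_{n+1}, \hat g_{n+1},\dots$.
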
	
	With Lemma~\ref{lm:expected quadratic error} and Lemma~\ref{lm:pathwise regret bound}, it is now clear that 
	$ 
	\E[\regretwp(\Pi)] \le  
	\E[\sum_{n=1}^{N} \rho_n \norm{\pi_n - \hat{\pi}_n }^2]+ 
	O(N^p) (\sigma_g^2 + \sigma_{\hat g}^2) + 
	O(  \E[\sum_{n=1}^{N} n^{p-1}\tilde h_n(\hat F_n)] )
	$, where 
	$\rho_n =O(n^{p-1} - n^{p+1})$.
	When $n$ is large enough, $\rho_n \le 0$, and hence the first term is $O(1)$.
	For the third term, because the model is learned online using, e.g., FTL with strongly convex cost $n^{p-1} \tilde{h}_n$ 
	we can show that
	$\E[\sum_{n=1}^{N} n^{p-1} \tilde h_n(\hat F_n)]= O ( N^{p-1} + N^p \epsf)$.
	Thus, $\E[ \regretwp (\Pi) ] \leq O( 1 + N^{p-1}+  (\epsilon_{{\hat \FF}}^w + \sigma_{g}^2 + \sigma_{\hat g}^2 ) N^{p}) $. 
	Substituting this bound into $\RR(p) \leq \E[  \regretwp(\Pi)  /w_{1:N}  ]$ and using that the fact $w_{1:N} = \Omega(N^{p+1})$ proves the theorem.
\end{proof}\vspace{-2mm}
The main assumption in Theorem~\ref{th:weighted performance of practical algorithm} is  that $\nabla_2 \hat {F}$ is $L$-Lipschitz continuous (Assumption~\ref{as:Lipschitz continuity}). It does not depend on the continuity of $\nabla_2 F$. Therefore, this condition is practical as we are free to choose  ${\hat \FF}$. 
Compared with Theorem~\ref{th:weighted performance of conceptual algorithm}, Theorem~\ref{th:weighted performance of practical algorithm} considers the inexactness of $\tilde{f}_n$ and $\tilde{h}_n$ explicitly; hence the additional term due to $\sigma_g^2$ and $\sigma_{{\hat g}}^2$.
Under the same assumption of \algVI that $f_n$ and $h_n$ are directly available, we can actually show that the simple \algprox has the same performance as \algVI, which is a corollary of Theorem~\ref{th:weighted performance of practical algorithm}.

\begin{corollary}
	\label{co:weighted performance of practical algorithm (deterministic)}
	If $\tilde{f}_n = f_n $ and $\tilde{h}_n = h_n$, for \algprox with $p > 1$,
	$\RR(p) \leq O( \frac{1}{N^2} + \frac{\epsilon_{{\hat \FF}}^w}{N}) $.
\end{corollary}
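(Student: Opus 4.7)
The plan is to obtain this corollary by direct specialization of Theorem~\ref{th:weighted performance of practical algorithm}. First, I would observe that the deterministic identities $\tilde f_n = f_n$ and $\tilde h_n = h_n$ make both noise parameters in Assumption~\ref{as:variance} vanish: since $g_n = \nabla \tilde f_n(\pi_n) = \nabla f_n(\pi_n) = \nabla_2 F(\pi_n,\pi_n)$, we get $\sigma_g^2 = 0$; and since the FTL update in line 4 of Algorithm~\ref{alg:algs} produces a deterministic $\hat F_n$ when $\tilde h_m = h_m$ for all $m$, the quantity $\hat g_n = \nabla_2 \hat F_n(\hat\pi_n, \hat\pi_n)$ can be evaluated exactly, giving $\sigma_{\hat g}^2 = 0$.

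Next, I would substitute $\sigma_g = \sigma_{\hat g} = 0$ into the bound of Theorem~\ref{th:weighted performance of practical algorithm}, reducing it to
\[
\RR(p) \le \frac{(p+1)^2 e^{p/N}}{\alpha \mu_f} \left( \frac{G_h^2}{\mu_h}\frac{p}{p-1}\frac{1}{N^2} + \frac{2}{p}\frac{\epsilon_{{\hat\FF}}^w}{N} \right) + \frac{(p+1)\nu_p}{N^{p+1}}.
\]
Treating $p>1$ as a fixed constant, the prefactor $e^{p/N} \le e^p$ is $O(1)$, and since $p+1 > 2$ the residual $(p+1)\nu_p / N^{p+1}$ satisfies $N^{-p-1} \le N^{-2}$, which together with $\nu_p = O(1)$ makes it $O(1/N^2)$. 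Collecting the three contributions then yields the claimed bound $\RR(p) \le O(1/N^2 + \epsilon_{{\hat\FF}}^w / N)$.

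The only step requiring more than routine bookkeeping is justifying $\sigma_{\hat g}^2 = 0$: Section~\ref{sec:practical algorithm} explicitly allows $\hat g_n$ to be a biased estimate of $\nabla_2 \hat F_n(\hat\pi_n,\hat\pi_n)$, so one must invoke the hypothesis $\tilde h_n = h_n$ to conclude that the learner has access to the exact model gradient at each round. Once this point is checked, no new inequalities or lemmas beyond Theorem~\ref{th:weighted performance of practical algorithm} are needed, and the corollary is a one-line application of the theorem with the vanishing variance and a standard big-O simplification in $p$ and $N$.
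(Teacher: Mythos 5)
Your proposal is correct and follows the same route the paper intends: the corollary is obtained by specializing Theorem~\ref{th:weighted performance of practical algorithm} with $\sigma_g = \sigma_{\hat g} = 0$ (the deterministic setting of \algVI), noting $e^{p/N} = O(1)$ and $N^{-(p+1)} \le N^{-2}$ for $p>1$. Your care in justifying $\sigma_{\hat g}^2 = 0$ is appropriate and matches the paper's framing that $f_n$ and $h_n$ being "directly available" lets the model gradient be queried exactly.
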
\vspace{-2mm}

The proof of Theorem~\ref{th:weighted performance of conceptual algorithm} and~\ref{th:weighted performance of practical algorithm} are based on assuming the predictive models are updated by FTL (see Appendix~\ref{app:learning dynamics} for a specific bound when online learned dynamics models are used as a simulator). However, we note that these results are essentially based on the property that model learning also has no regret; therefore, the FTL update rule (line 4)
can be replaced by a no-regret first-order method without changing the result. This would make the algorithm even simpler to implement. 
The convergence of other types of predictive models (like using the previous cost function discussed in Section~\ref{sec:model oracles}) can also be analyzed following the major steps in the proof of Theorem~\ref{th:weighted performance of practical algorithm}, leading to a performance bound in terms of prediction errors.
Finally, it is interesting to note that the accelerated convergence is made possible when model learning puts more weight on costs in later rounds (because $p>1$). 

\vspace{-1mm}
\subsection{Comparison}
\vspace{-2mm}

We compare the performance of \mobil in Theorem~\ref{th:weighted performance of practical algorithm} with that of \dagger in Theorem~\ref{th:dagger performance} in terms of the constant on the $\frac{1}{N}$ factor.
\mobil has a constant in $O(\sigma_g^2 + \sigma_{\hat g}^2 + \epsf)$, whereas \dagger has a constant in $G_f^2 = O(G^2 + \sigma_g^2)$, where we recall $G_f$ and $G$ are upper bounds of $\norm{\nabla \tilde{f}_n(\pi)}_*$ and $\norm{\nabla f_n(\pi)}_*$, respectively.\footnote{Theorem~\ref{th:dagger performance} was stated by assuming $f_n=\tilde{f}_n$. In the stochastic setup here, \dagger has a similar convergence rate in expectation but with $G$ replaced by $G_f$.} Therefore, in general, \algprox has a better upper bound than \dagger when the model class is expressive (i.e. $\epsilon_{{\hat \FF}} \approx 0$), because $\sigma_{{\hat g}}^2$ (the variance of the sampled gradients) can be made small as we are free to design the model.
Note that, however, the improvement of \mobil may be smaller when the problem is noisy, such that the large $\sigma_g^2$ becomes the dominant term.

An interesting property that arises from Theorems~\ref{th:weighted performance of conceptual algorithm} and~\ref{th:weighted performance of practical algorithm}  is that the convergence of \mobil  is not biased by using an imperfect model (i.e. $\epsf>0$). This is shown in the term $\epsf/N$. In other words, in the worst case of using an extremely wrong predictive model, \mobil would just converge more slowly but still to the performance of the expert policy.

\algprox is closely related to stochastic Mirror-Prox~\citep{nemirovski2004prox,juditsky2011solving}. In particular, when the exact model is known (i.e. $\nabla_2\hat {F}_n = \nabla_2F$) and \algprox is set to convex-mode (i.e.  $r_n=0$ for $n>1$, and $w_n = 1/\sqrt{n}$; see Appendix~\ref{app:convex cases}), then \algprox gives the same update rule as stochastic Mirror-Prox with step size $O(1/\sqrt{n})$ (See Appendix~\ref{app:stochastic mirror-prox} for a thorough discussion).
Therefore, \algprox can be viewed as a generalization of Mirror-Prox: 1) it allows non-uniform weights; and 2) it allows the vector field $\nabla_2 F$ to be estimated online by alternately taking stochastic gradients and predicted gradients. 
The design of \algprox is made possible by our Stronger FTL lemma (Lemma~\ref{lm:stronger FTL}), which greatly simplifies the original algebraic proof in~\citep{nemirovski2004prox,juditsky2011solving}. 
Using Lemma~\ref{lm:stronger FTL} reveals more closely the interactions between model updates and policy updates. In addition, it more clearly shows the effect of non-uniform weighting, which is essential to achieving $O(\frac{1}{N^2})$ convergence. 
To the best of our knowledge, even the analysis of the original (stochastic) Mirror-Prox from the FTL perspective is new.

\begin{figure*} 	
	\centering
	\begin{subfigure}{.24\textwidth}
		\includegraphics[trim={1.0cm 1.15cm 1.2cm 0cm}, clip, width=\textwidth]{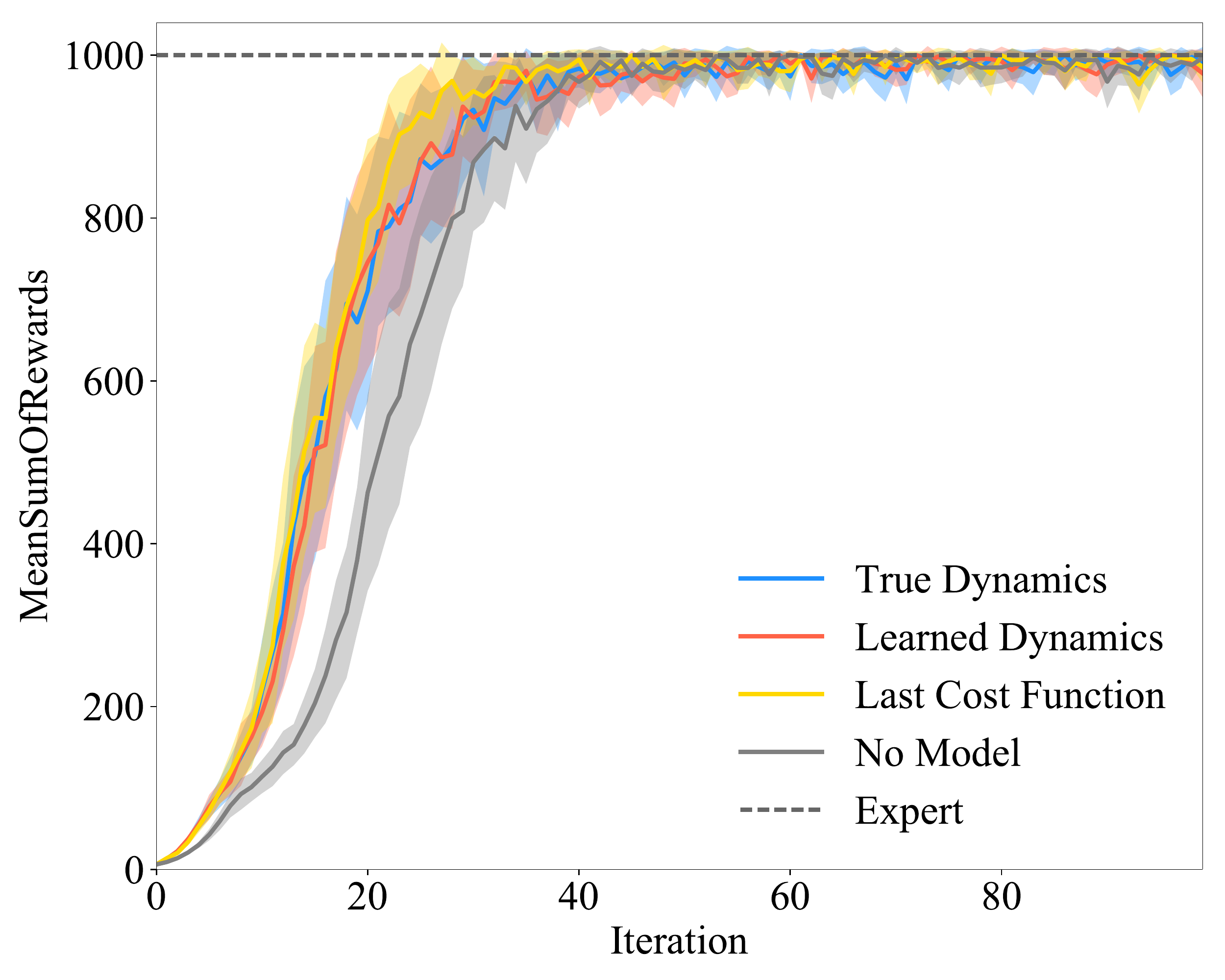}
	\end{subfigure}
	\begin{subfigure}{.24\textwidth}
		\includegraphics[trim={1.0cm 1.15cm 1.2cm 0cm}, clip, width=\textwidth]{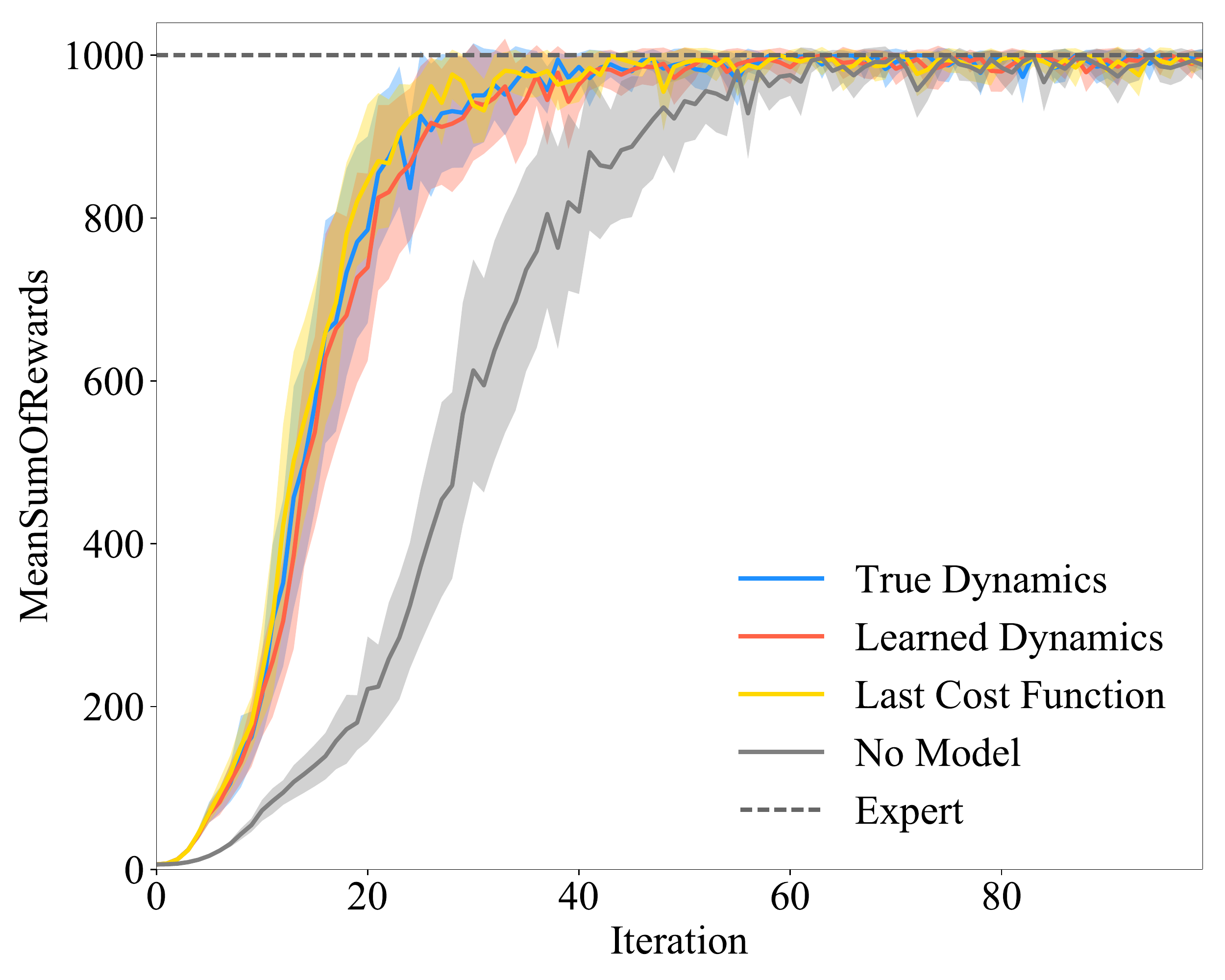}
	\end{subfigure}
	\begin{subfigure}{.24\textwidth}
		\includegraphics[trim={1.0cm 1.15cm 1.2cm 0.0cm}, clip, width=\textwidth]{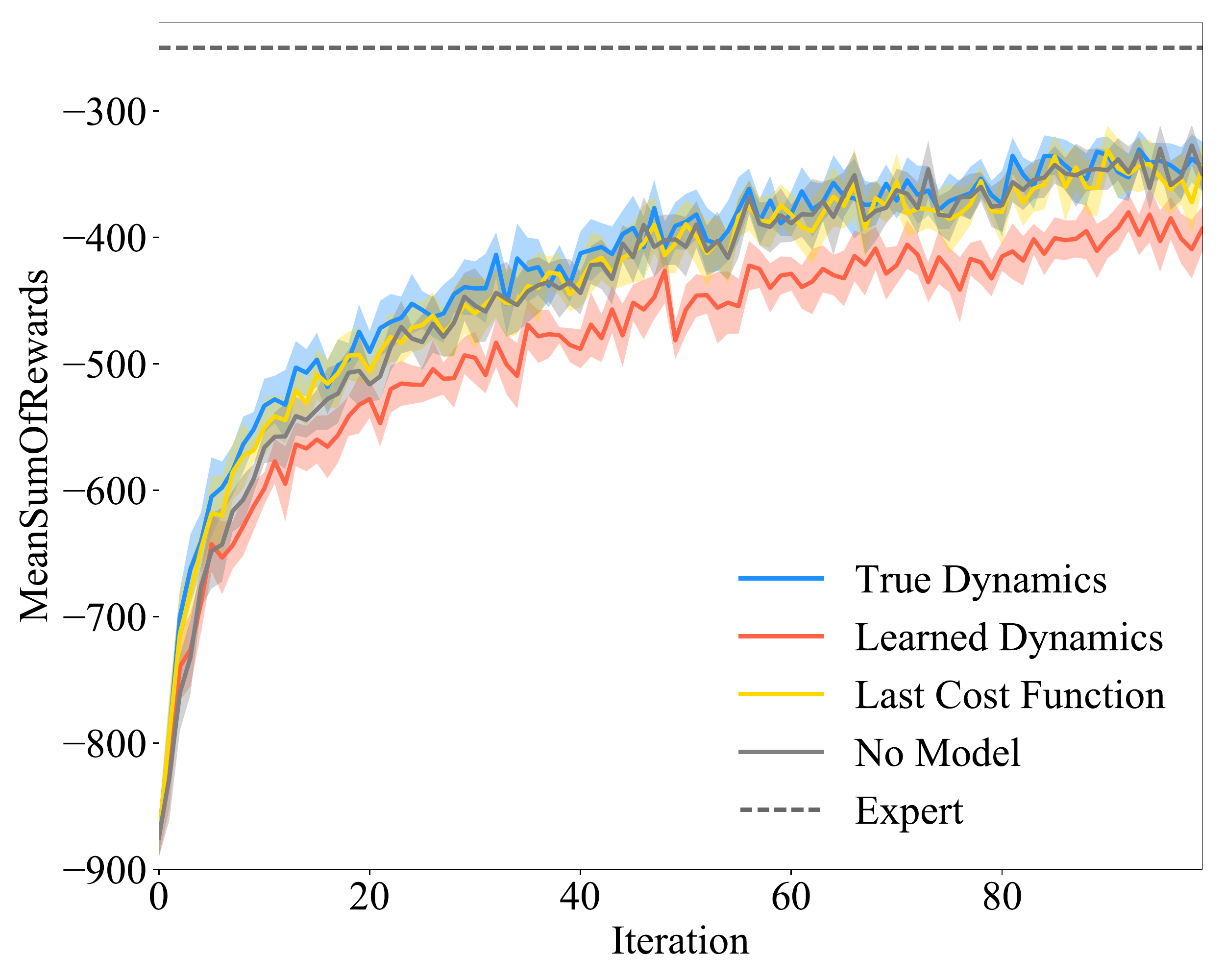}
	\end{subfigure}
	\begin{subfigure}{.24\textwidth}
		\includegraphics[trim={1.0cm 1.15cm 1.2cm 0.0cm}, clip,, width=\textwidth]{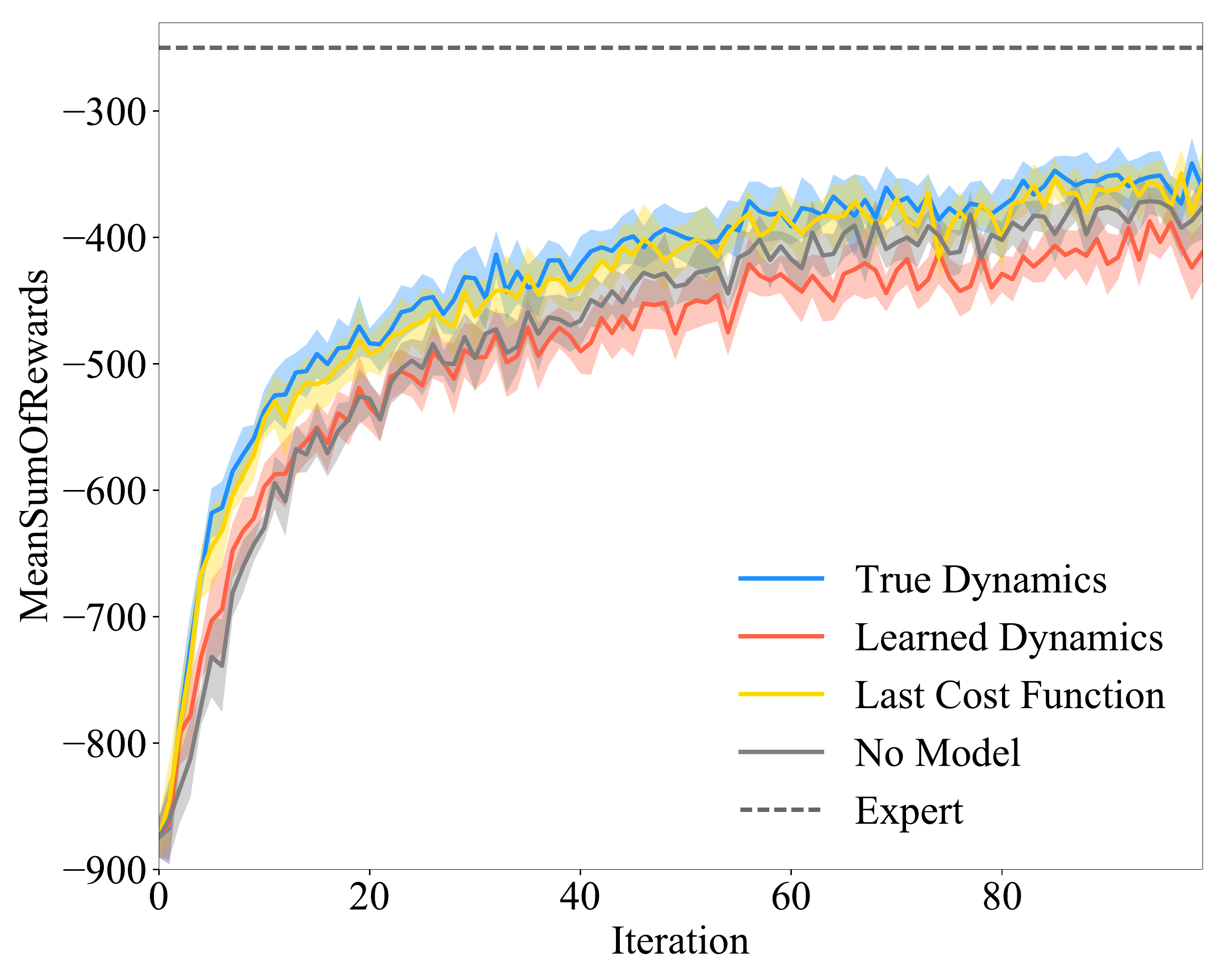}
	\end{subfigure}	\\
	\centering
	\begin{subfigure}{.24\textwidth}
		\includegraphics[trim={1.0cm 1.15cm 1.2cm 0cm}, clip, width=\textwidth]{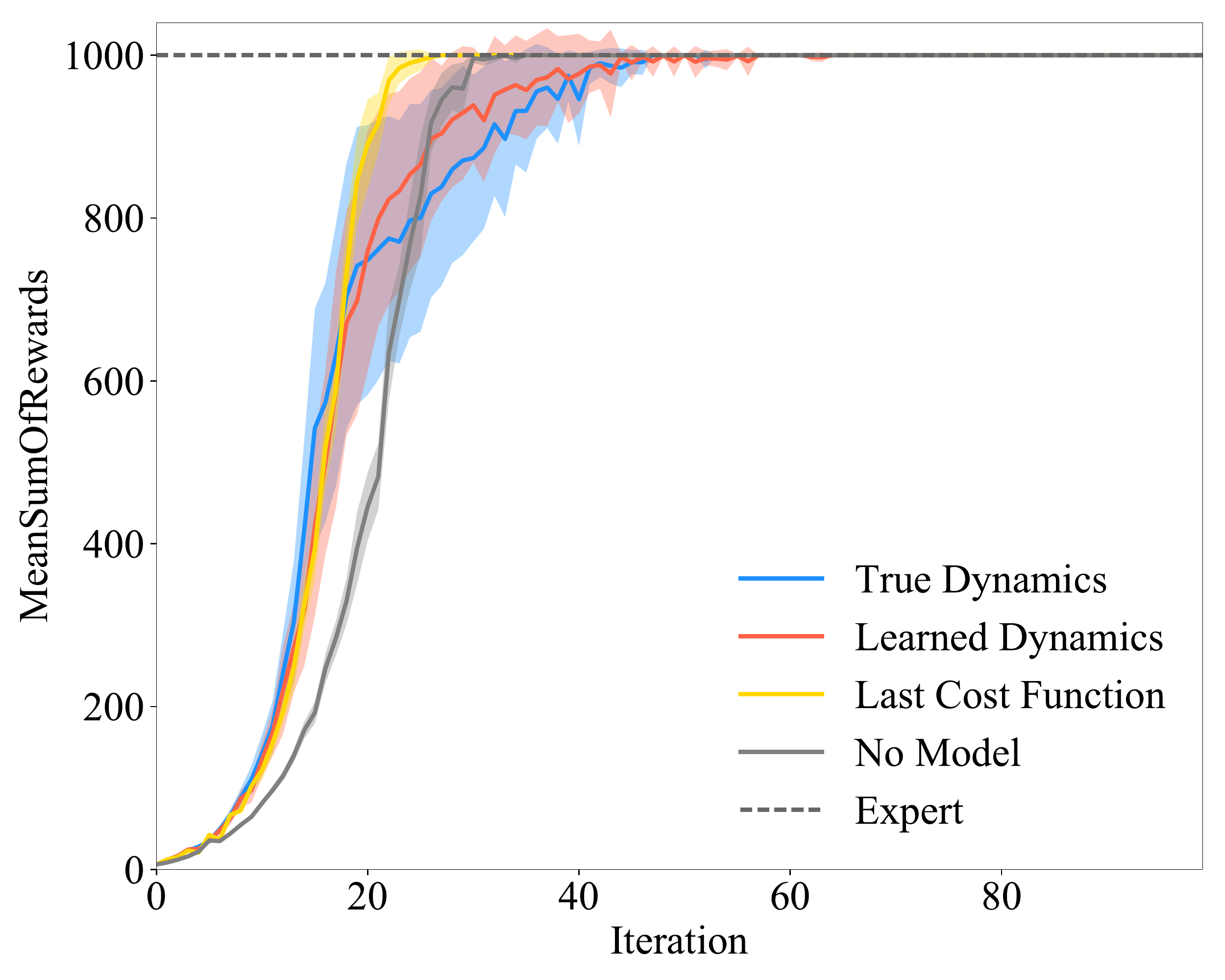}
		\caption{CartPole $p=0$}
	\end{subfigure}
	\begin{subfigure}{.24\textwidth}
		\includegraphics[trim={1.0cm 1.15cm 1.2cm 0cm}, clip, width=\textwidth]{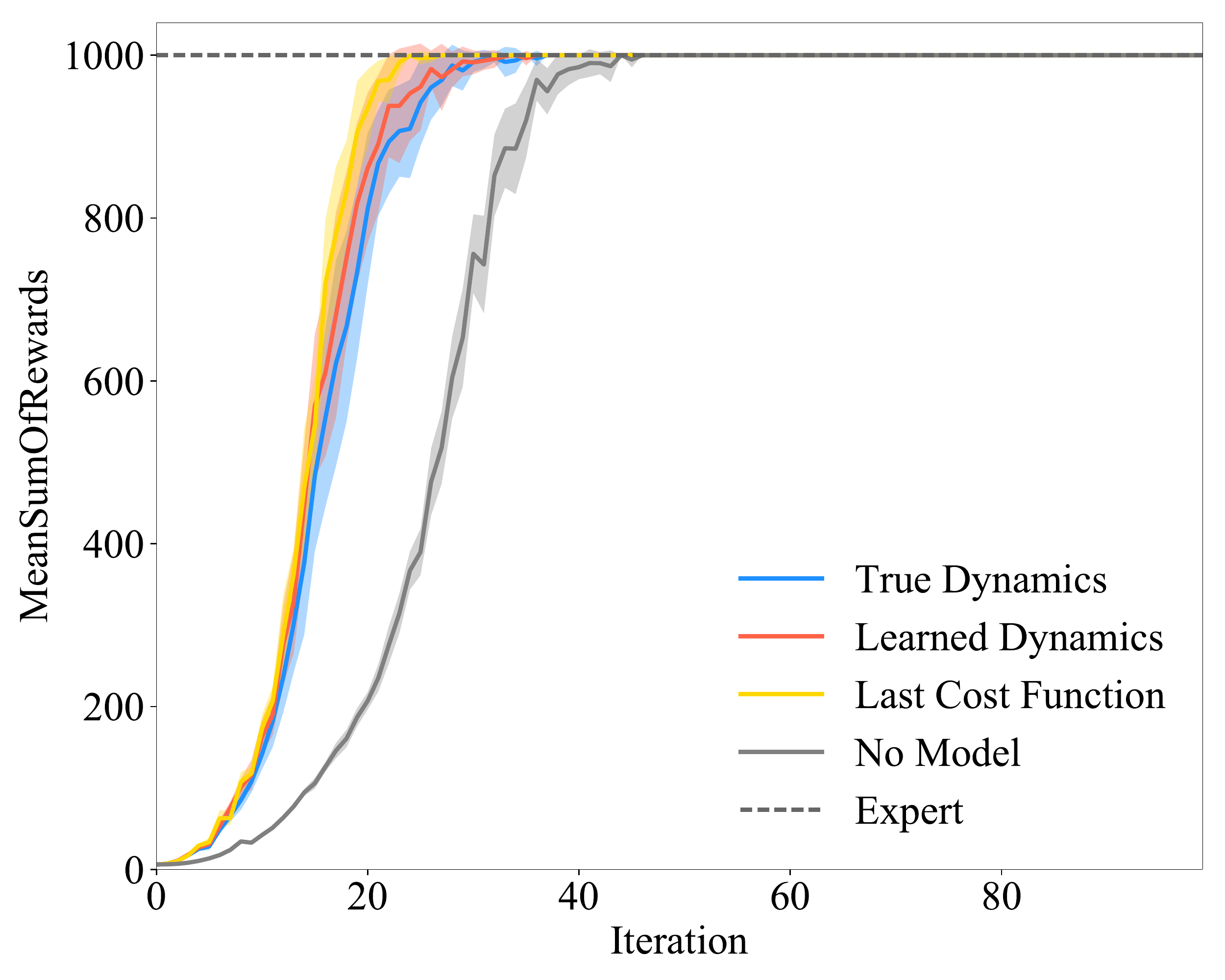}
		\caption{CartPole  $p=2$}
	\end{subfigure}
	\begin{subfigure}{.24\textwidth}
		\includegraphics[trim={1.0cm 1.15cm 1.2cm 0.0cm}, clip, width=\textwidth]{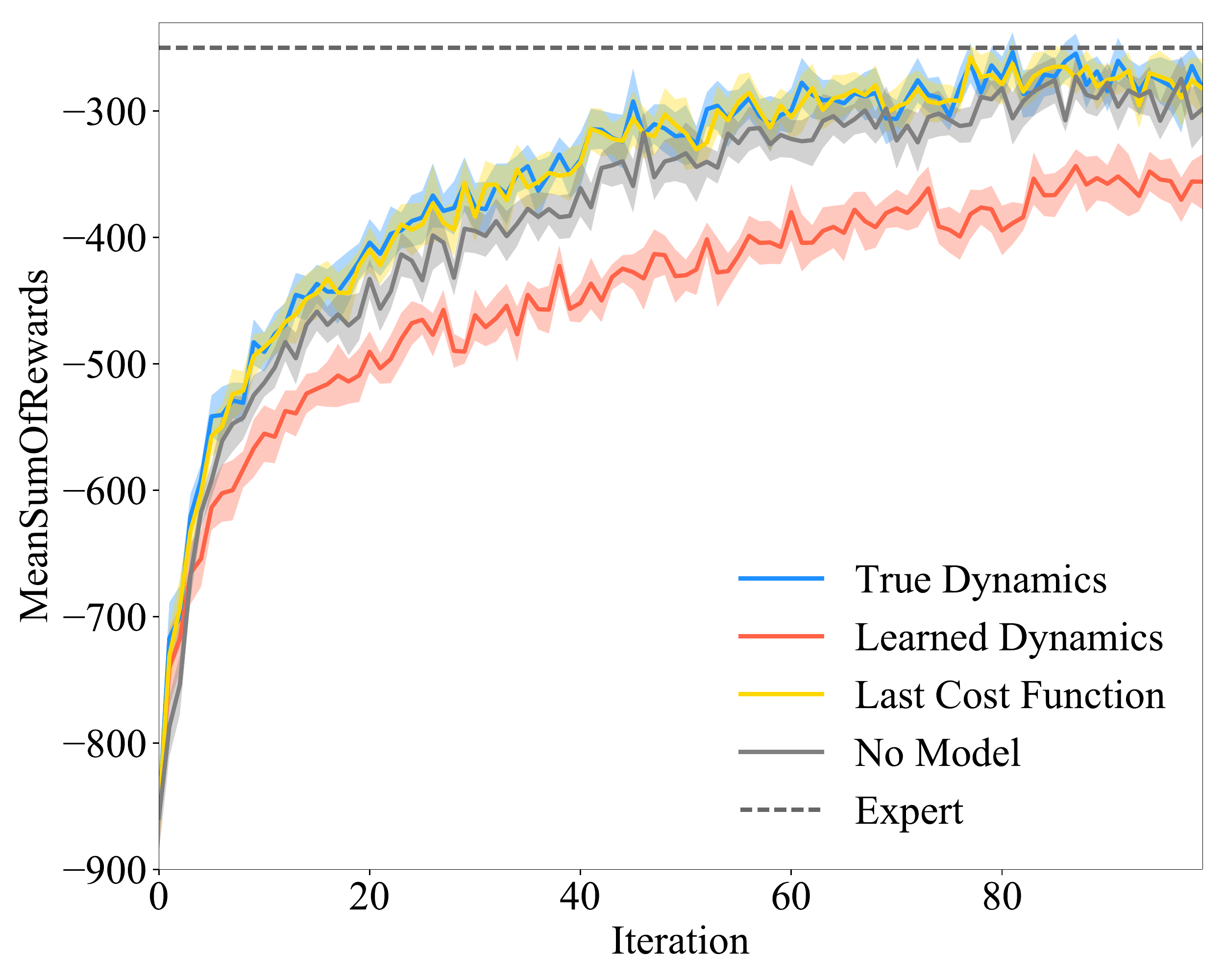}
		\caption{Reacher3D  $p=0$}
	\end{subfigure}
	\begin{subfigure}{.24\textwidth}
		\includegraphics[trim={1.0cm 1.15cm 1.2cm 0.0cm}, clip,, width=\textwidth]{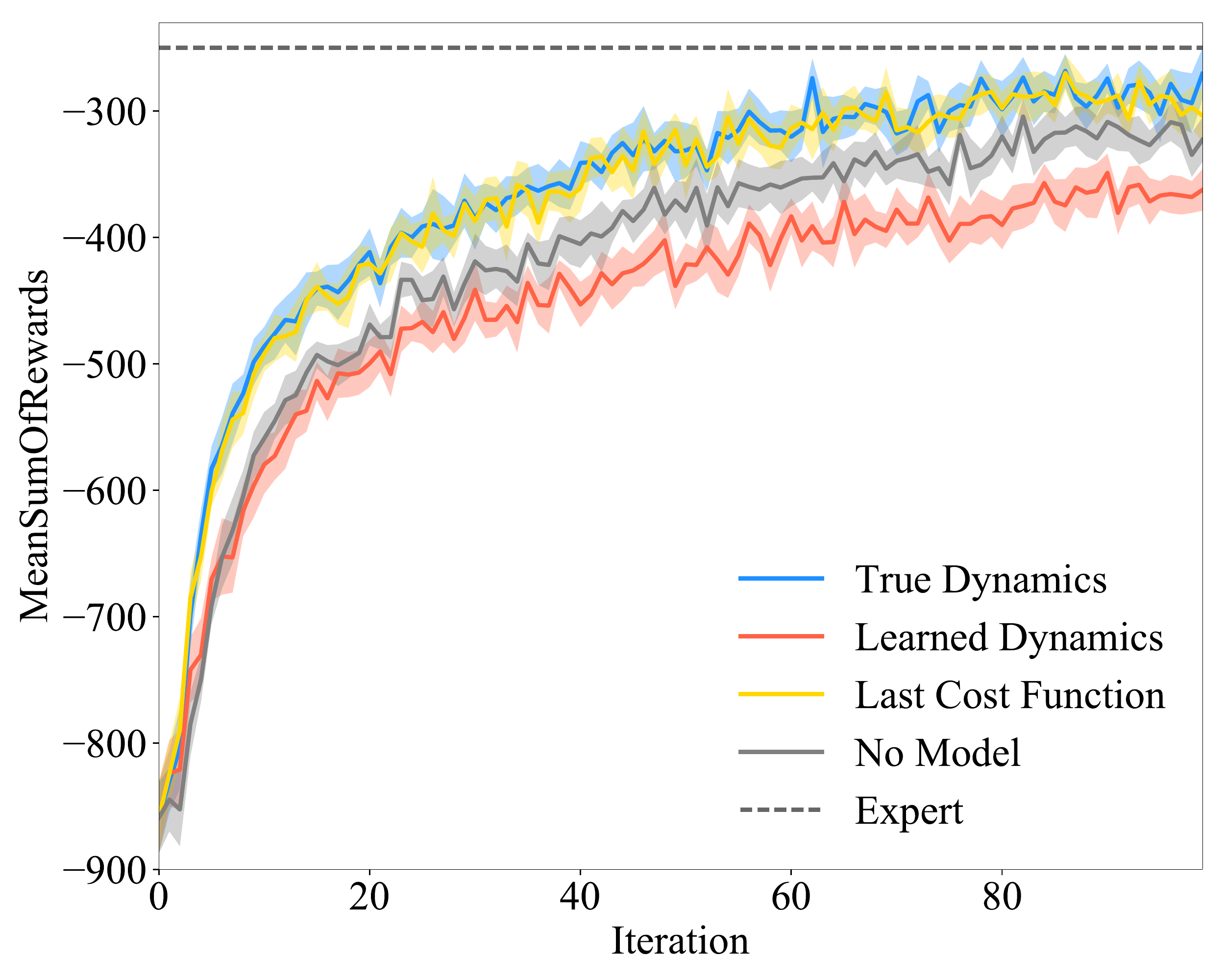}
		\caption{Reacher3D  $p=2$}
	\end{subfigure}	
	\caption{Experimental results of \algprox with neural network (1st row) and linear policies (2nd row). The shaded regions represent $0.5$ standard deviation}
	\label{fig:exps}
\vspace{-5mm}
\end{figure*}

\vspace{-1mm}
\section{EXPERIMENTS} \label{sec:experiment}
\vspace{-2mm}

We experimented with \algprox in simulation to study how weights $w_n = n^p$  and the choice of model oracles affect the learning.  We used two weight schedules: $p = 0$ as baseline, and $p = 2$ suggested by Theorem~\ref{th:weighted performance of practical algorithm}. And we considered several predictive models: (a) a simulator with the true dynamics (b) a simulator with online-learned  dynamics (c) the last cost function (i.e. $\hat{g}_{n+1} = \nabla \tilde{f}_n(\hat{\pi}_{n+1})$ (d) no model (i.e. $\hat{g}_{n+1} =0 $; in this case \algprox reduces to the first-order version of \dagger~\citep{cheng2018fast}, which is considered as a baseline here). 

\vspace{-1mm}
\subsection{Setup and Results}
\vspace{-2mm}
Two robot control tasks (CartPole and Reacher3D) powered by the DART physics engine~\citep{lee2018dart} were used as the task environments. The learner was either a linear policy or a small neural network. For each IL problem, an expert policy that shares the same architecture as the learner was used, which was trained using policy gradients. While sharing the same architecture is not required in IL, here we adopted this constraint to remove the bias due to the mismatch between policy class and the expert policy to clarify the experimental results.
For \algprox, we set $r_n(\pi) = \frac{\mu_f \alpha_n}{2} \norm{\pi -\pi_n}^2$ and set $\alpha_n$ such that $ \sum w_n \alpha_n \mu_f = (1 + cn^{p+1/2}) / \eta_n$, where $c=0.1$ and $\eta_n$ was adaptive to the norm of the prediction error. This leads to an effective learning rate $\eta_n w^p /(1 + c n^{p+1/2})$ which is optimal in the convex setting (cf. Table~\ref{table:convex cases}). For the dynamics model, we used a neural network and trained it using FTL.
The results reported are averaged over 24 (CartPole) and 12 (Reacher3D) seeds.
Figure~\ref{fig:exps} shows the results of \algprox.
While the use of neural network policies violates the convexity assumptions in the analysis, it is interesting to see how \algprox performs in this more practical setting. We include the experiment details in Appendix~\ref{app:experiments} for completeness.

\vspace{-1mm}
\subsection{Discussions}
\vspace{-2mm}

We observe that, when $p=0$, having model information does not improve the performance much over standard online IL (i.e. no model), as suggested in Proposition~\ref{pr:constant regret of conceptual algorithm}. By contrast, when $p=2$ (as suggested by Theorem~\ref{th:weighted performance of practical algorithm}), \algprox improves the convergence and performs better than not using models.\footnote{We note that the curves between $p=0$ and $p=2$ are not directly comparable; we should only compare methods within the same $p$ setting as the optimal step size varies with $p$. The multiplier on the step size was chosen such that \algprox performs similarly in both settings.} It is interesting to see that this trend also applies to neural network policies.

From Figure~\ref{fig:exps}, we can also study how the choice of predictive models affects the convergence. As suggested in Theorem~\ref{th:weighted performance of practical algorithm}, \algprox improves the convergence only when the model makes non-trivial predictions. If the model is very incorrect, then \algprox can be slower. 
This can be seen from the performance of \algprox with online learned dynamics models. In the low-dimensional case of CartPole, the simple neural network predicts the dynamics well, and \algprox with the learned dynamics performs similarly as \algprox with the true dynamics. However, in the high-dimensional Reacher3D problem, the learned dynamics model generalizes less well, creating a performance gap between \algprox using the true dynamics and that using the learned dynamics. 
We note that \algprox would still converge at the end despite the model error.
Finally, we find that the performance of \mobil with the last-cost predictive model is often similar to \algprox with the simulated gradients computed through the true dynamics.

\vspace{-1mm}
\section{CONCLUSION}
\vspace{-2mm}

We propose two novel model-based IL algorithms \algprox and \algVI with strong theoretical properties: they are provably up-to-and-order faster than the state-of-the-art IL algorithms and have unbiased performance even when using imperfect predictive models. 
Although we prove the performance under convexity assumptions, we empirically find that \algprox improves the performance even when using neural networks. In general, \mobil accelerates policy learning when having access to an predictive model that can predict future gradients non-trivially.
While the focus of the current paper is theoretical in nature, the design of \mobil leads to several interesting questions that are important to reliable application of \algprox in practice, such as end-to-end learning of predictive models and designing adaptive regularizations for \algprox.

\vspace{-1mm}
\subsubsection*{Acknowledgements}
\vspace{-2mm}
This work was supported in part by NSF NRI Award 1637758 and NSF CAREER Award 1750483.

\bibliography{ref}

\clearpage
\onecolumn
\appendix
\numberwithin{equation}{section}

\section{Notation} \label{app:notation}

\begin{table}[h!]
	\caption{Summary of the symbols used in the paper}
	\label{table:symbols}
	\centering
	\begin{tabular}{cc}		
		\toprule
		Symbol & Definition \\
		\midrule		
		$N$ & the total number of rounds in online learning \\
		$J(\pi)$    & the average accumulated cost, $\E_{d_\pi} \E_\pi [c_t]$ of RL in~\eqref{eq:RL problem}    \\
		$d_\pi $  & the generalized stationary state distribution   \\
		$D(q || p)$ & the difference between distributions $p$ and $q$     \\
		$\pi^*$ & the expert policy \\
		$\Pi$ & the hypothesis class of policies \\		
		$\pi_n$ & the policy run in the environment at the $n$th online learning iteration  \\
		$\hat \FF$ & the hypothesis class of models (elements denoted as $\hat F$)\\		
		$\hat F_n$ & the model used at the $n-1$ iteration to predict the future gradient of the $n$th  iteration \\
		$\epsp$ & the policy class complexity (Definition~\ref{def:general class complexity}) \\
		$\epsf$ & the model class complexity (Definition~\ref{def:general class complexity})\\
		$F(\pi', \pi)$  & the bivariate function $E_{d_{\pi}'} [D(\pi^* || \pi)]$ in~\eqref{eq:bivariate function} \\
		$f_n(\pi)$ & $F(\pi_n, \pi)$ in~\eqref{eq:policy per-round cost} \\
		$\tilde f_n(\pi)$ &  an unbiased estimate of $f_n(\pi)$ \\
		$h_n(\hat F)$ &  an upper bound of $\norm{\nabla_2 F(\pi_n, \pi_n) - \nabla_2 \hat F(\pi_n, \pi_n)}_*^2$\\
		$ \tilde h_n(\hat F)$ &  an unbiased estimate of $h_n(\hat F)$ \\		
		$\mu_f$ & the modulus of strongly convexity of $\tilde f_n$  (Assumption~\ref{as:function structure}) \\
		$G_f$ & an upper bound of $\norm{\nabla \tilde f_n}_* $  (Assumption~\ref{as:function structure})\\
		$G$ & an upper bound of  $\norm{\nabla f_n}_*$  (Theorem~\ref{th:dagger performance})\\
		$\mu_h$ & modulus of strongly convexity of $\tilde h_n$  (Assumption~\ref{as:function structure})\\
		$G_h$ & an  upper bound of  $\norm{\nabla \tilde{h}_n}_*$ (Assumption~\ref{as:function structure})\\		
		$L$ & the Lipschitz constant such that
		$\norm{\nabla_2 \hat F(\pi, \pi) - \nabla_2 \hat F(\pi', \pi')}_* \le L \norm{\pi - \pi'}$ (Assumption~\ref{as:Lipschitz continuity})\\
		$\RR(p) $ & the expected weighted average regret, $\E\left[\frac{\regret^w(\Pi)}{w_{1:N}} \right]$ in~\eqref{eq:RR}\\
		$\regretw$ & the weighted regret, defined in Lemma~\ref{lm:reduction lemma} \\
		$\{w_n\}$ & the sequence of weights used to define $\regretw$; we set $w_n = n^p$ \\
		\bottomrule
	\end{tabular}	
	\vspace{-3mm}
\end{table}

\section{Imitation Learning Objective Function and Choice of Distance} \label{app:choice of discussion}

Here we provide a short introduction to the objective function of IL in~\eqref{eq:IL problem}. The idea of IL is based on the Performance Difference Lemma, whose proof can be found, e.g. in~\citep{kakade2002approximately}.

\begin{lemma}[Performance Difference Lemma] \label{lm:performance difference}
	Let $\pi$ and $\pi'$ be two policies and  $ A_{\pi',t}(s, a) = Q_{\pi',t}(s,a) - V_{\pi',t}(s)$ be the (dis)advantage function  with respect to running $\pi'$. Then it holds that
	\begin{align} \label{eq:performance difference}
	J(\pi) = J(\pi') + \E_{d_\pi} \E_{\pi} [ A_{\pi',t}  ].   
	\end{align}
\end{lemma}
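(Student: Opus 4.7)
The plan is to prove the Performance Difference Lemma by constructing a telescoping sum along trajectories generated by $\pi$, using the Bellman consistency of the value/Q-functions of $\pi'$ to identify the resulting per-step terms as the advantage $A_{\pi',t}$. Concretely, I would start by writing
$J(\pi) - J(\pi') = \E_{\tau \sim \pi}\bigl[ \sum_t \beta_t \, c_t(s_t, a_t) \bigr] - \E_{s_0 \sim d_0}\bigl[ V_{\pi',0}(s_0) \bigr]$,
where $\beta_t$ is the weighting induced by $d_\pi$ (e.g.\ $(1-\gamma)\gamma^t$ in the discounted case or $1/T$ in the finite-horizon case) and the initial-state distribution is the same for both policies.

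Next I would invoke the Bellman consistency for $\pi'$, namely $c_t(s,a) = Q_{\pi',t}(s,a) - \E_{s' \sim P(\cdot\mid s,a)}\bigl[ V_{\pi',t+1}(s') \bigr]$, and substitute into the trajectory sum. By the tower property, $\E_{\tau \sim \pi}\bigl[ V_{\pi',t+1}(s_{t+1}) \bigr] = \E_{\tau \sim \pi}\bigl[ \E_{s_{t+1}\mid s_t,a_t}[V_{\pi',t+1}(s_{t+1})] \bigr]$, so the sum $\sum_t \beta_t \bigl( Q_{\pi',t}(s_t,a_t) - \E[V_{\pi',t+1}(s_{t+1})\mid s_t,a_t] \bigr)$ telescopes once I shift the index on the $V$-terms. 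After the shift, every $V_{\pi',t}(s_t)$ term for $t \ge 1$ cancels and the $t=0$ contribution exactly produces the $-\E[V_{\pi',0}(s_0)]$ that pairs with the $J(\pi')$ subtracted off initially. The residual is $\E_{\tau \sim \pi}\bigl[ \sum_t \beta_t \bigl( Q_{\pi',t}(s_t,a_t) - V_{\pi',t}(s_t) \bigr) \bigr] = \E_{(s,t) \sim d_\pi}\E_{a \sim \pi_s}[A_{\pi',t}(s,a)]$, which is the claimed identity.

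The main obstacle is bookkeeping of the weights $\beta_t$ so that the telescoping cleanly matches the definition of $d_\pi$ used in~\eqref{eq:RL problem}, since $d_\pi$ is introduced abstractly and subsumes both the discounted infinite-horizon and finite-horizon cases. I would handle this by verifying the cancellation in the two concrete instantiations $d_\pi(s,t) = (1-\gamma)\gamma^t d_{\pi,t}(s)$ and $d_\pi(s,t) = \tfrac{1}{T} d_{\pi,t}(s)$ separately; in each case the shift of index and the corresponding shift of $\beta_t$ line up (using $\gamma \cdot \gamma^t = \gamma^{t+1}$ for the discounted case, and the convention $V_{\pi',T} \equiv 0$ for the finite-horizon case, respectively), so the boundary terms vanish as required. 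A secondary subtlety is checking that $\E_\pi[V_{\pi',0}(s_0)] = J(\pi')$, which holds because $V_{\pi',0}(s) = \E_{a \sim \pi'_s}[Q_{\pi',0}(s,a)]$ expands to the full cost-to-go under $\pi'$ starting from $s_0 \sim d_0$ — a routine unrolling of the Bellman equation. Once these two pieces are in place the identity follows directly from the telescoping argument above.
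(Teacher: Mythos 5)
Your telescoping argument is correct and is the standard proof of this lemma; note that the paper itself does not prove it, but instead defers to the cited reference (Kakade and Langford, 2002), where essentially the same argument appears. The only points needing care are exactly the ones you flag: the Bellman consistency step should carry the discount factor in the $\gamma$-discounted instantiation (i.e.\ $c(s,a) = Q_{\pi'}(s,a) - \gamma\,\E_{s'}[V_{\pi'}(s')]$, so that the weights $(1-\gamma)\gamma^t$ telescope), and the normalization convention relating $J(\pi')$ to $\E[V_{\pi',0}(s_0)]$ must match the ``average accumulated cost'' definition of $J$; with those verified in each instantiation of $d_\pi$, the proof goes through.
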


Using Lemma~\ref{lm:performance difference}, we can relate the performance of the learner's policy and the expert policy as 
\begin{align*}
J(\pi) &= J(\pi^*) + \E_{d_\pi} \E_{\pi} [ A_{\pi^*,t}  ]   \\
&= J(\pi^*)  + \E_{d_\pi} \left[ (\E_{\pi} -\E_{\pi^*}) [ Q_{\pi^*,t}  ] \right]
\end{align*}
where the last equality uses the definition of $ A_{\pi',t}$ and that $V_{\pi,t} = \E_{\pi}[Q_{\pi,t}]$.
Therefore, if the inequality below holds
\begin{align*} 
\E_{a\sim \pi_s}[Q_{\pi^*,t}(s,a)]  - \E_{a^* \sim \pi_s^*}[Q_{\pi^*,t}(s,a^*)]    \leq C_{\pi^*} D(\pi_s^*||\pi_s), \qquad \forall t \in \Nbb, s \in \Sbb, \pi \in \Pi
\end{align*}
then minimizing~\eqref{eq:IL problem} would minimize the performance difference between the policies as in~\eqref{eq:IL upper bound}
\begin{align*} 
J(\pi) - J(\pi^*)   \leq  C_{\pi^*} \E_{ d_{\pi}} [D(\pi^*||\pi) ].
\end{align*}

Intuitively, we can set $D(\pi^*||\pi) = \E_{\pi}[ A_{\pi^*,t}]$ and \eqref{eq:IL upper bound} becomes an equality with $C_{\pi^*} = 1$. This corresponds to the objective function used in \aggrevate by~\citet{ross2014reinforcement}. However, this choice requires $ A_{\pi^*,t}$ to be given as a function or to be estimated online, which may be inconvenient or complicated in some settings.

Therefore, $D$ is usually used to construct a strict upper bound in~\eqref{eq:IL upper bound}. The choice of $D$ and $C_{\pi^*}$ is usually derived from  some statistical distances, and it depends on the topology of the action space $\Abb$ and the policy class $\Pi$.
For discrete action spaces, $D$ can be selected as a convex upper bound of the total variational distance between $\pi$ and $\pi^*$ and $C_{\pi^*}$ is a bound on the range of $Q_{\pi^*,t}$ (e.g., a hinge loss used  by~\cite{ross2011reduction}). 
For continuous action spaces, $D$ can be selected as an upper bound of the Wasserstein distance between $\pi$ and $\pi^*$ and $C_{\pi^*}$ is the Lipschitz constant of $Q_{\pi^*,t}$ with respect to action~\citep{pan2017agile}. 
More generally, for stochastic policies, we can simply set $D$ to Kullback-Leibler (KL) divergence (e.g. by~\cite{cheng2018fast}), because it upper bounds both total variational distance and Wasserstein distance. The direction of KL divergence, i.e. $D(\pi_s^*||\pi_s) = \KL{\pi_s}{\pi_s^*}$ or $D(\pi_s^*||\pi_s) = \KL{\pi_s^*}{\pi_s}$, can be chosen based on the characteristics of the expert policy. For example, if the $\log$ probability of the expert policy (e.g. a Gaussian policy) is available, $\KL{\pi_s}{\pi_s^*}$ can be used. If the expert policy is only accessible through stochastic queries, then $\KL{\pi_s^*}{\pi_s}$ is the only feasible option.

\section{Missing Proofs} \label{app:proofs of analysis}

\subsection{Proof of Section~\ref{sec:reduction}} \label{app:proof of reduction}

\reductionLemma*
\begin{proof}[Proof of Lemma~\ref{lm:reduction lemma}]
	By inequality in~\eqref{eq:IL upper bound} and definition of $f_n$, 
	\begin{align*}
	\E\left[ \sum_{n=1}^{N} w_n (J(\pi_n) - J(\pi^*)  )  \right] \leq C_{\pi^*} \E\left[ \sum_{n=1}^{N} w_n f_n(\pi_n)  \right] = C_{\pi^*} \E\left[ \sum_{n=1}^{N} w_n \tilde{f}_n(\pi_n)  \right],
	\end{align*}
	where the last equality is due to $\pi_n$ is non-anticipating. 
	This implies that 
	\begin{align*}
	\E\left[ \sum_{n=1}^{N} w_n J(\pi_n)  \right] &\leq w_{1:N} J(\pi^*) +  C_{\pi^*} \E\left[ \sum_{n=1}^{N} w_n \tilde{f}_n(\pi_n)  \right] \\
	&= w_{1:N} J(\pi^*)  +  C_{\pi^*} \E\left[ \min_{\pi\in\Pi} \sum_{n=1}^{N} w_n \tilde{f}_n(\pi) + \regret^w(\Pi)  \right]
	\end{align*}
	The statement is  obtained by dividing both sides by $w_{1:N}$ and by the definition of $\epsf$.
\end{proof}

\subsection{Proof of Section~\ref{sec:conceptual algorithm (analysis)}} \label{app:proof of conceptual algorithm}

\weightedPerformanceOfConceptualAlgorithm*
\begin{proof}
	We prove a more general version of Theorem~\ref{th:weighted performance of conceptual algorithm} below.
\end{proof}
\begin{theorem}  \label{th:weighted performance of conceptual algorithm (complete)}
	For \algVI, 
	\begin{align*}
	\RR(p) \le 
	\begin{cases}
	\frac{G_h^2}{4\mu_f\mu_h} \frac{p (p+1)^2 e^{\frac{p}{N}}}{p-1} \frac{1}{N^2} + \frac{1}{2\mu_f}\frac{(p+1)^2e^{\frac{p}{N}}} {p} \frac{1}{N}\epsf , 
	&\text{for $p > 1$} \\
	\frac{G_h^2}{\mu_f \mu_h} \frac{\ln (N+1)}{N^2} + \frac{2}{\mu_f}\frac{1}{N}  \epsf,
	&\text{for $p = 1$} \\
	\frac{G_h^2}{ 4  \mu_f \mu_h} (p+1)^2 \frac{O(1)}{N^{p+1}} + \frac{1}{2\mu_f}\frac{(p+1)^2 e^{\frac{p}{N}}}{p} \frac{1}{N^2}\epsf, 
	&\text{for $0<p <1$} \\
	\frac{G_h^2}{2 \mu_f \mu_h}\frac{1}{N} +  \frac{1}{2 \mu_f}\frac{\ln N + 1}{N}\epsf,
	&\text{for $p = 0$} \\
	\end{cases}
	\end{align*}
	
\end{theorem}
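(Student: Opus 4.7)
The plan is to reduce the weighted regret $\regretw(\Pi)$ of \algVI to the weighted regret of online model learning, exploiting the Be-the-Leader structure of the VI update \eqref{eq:VI problem} and strong convexity of the per-round costs. The proof unfolds in three steps.

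First, because $\pi_n$ satisfies the first-order optimality condition $\sum_{m\le n-1} w_m\nabla f_m(\pi_n)+w_n\nabla_2\hat F_n(\pi_n,\pi_n)=0$ (treating the VI as the KKT condition of a BTL problem), the cumulative gradient $\sum_{m\le n}w_m\nabla f_m(\pi_n)$ equals $w_n(\nabla f_n(\pi_n)-\nabla_2\hat F_n(\pi_n,\pi_n))$. Combining Lemma~\ref{lm:strong FTL} with the $w_{1:n}\mu_f$-strong convexity of $\sum_{m\le n} w_m f_m$ then yields the pathwise bound
\begin{align*}
\regretw(\Pi)\le \sum_{n=1}^N \tfrac{w_n^2}{2\mu_f w_{1:n}}\,\|\nabla f_n(\pi_n)-\nabla_2\hat F_n(\pi_n,\pi_n)\|_*^2 \le \sum_{n=1}^N \tfrac{w_n^2}{2\mu_f w_{1:n}}\,h_n(\hat F_n),
\end{align*}
the second inequality being the definition of $h_n$. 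For $w_n=n^p$, the ratio $w_n^2/w_{1:n}$ behaves like $(p+1)n^{p-1}$, which is where the prefactor $(p+1)^2 e^{p/N}$ will ultimately originate (via the integral estimate $w_{1:n}\le (n+1)^{p+1}/(p+1)\le n^{p+1}e^{p/N}/(p+1)$).

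Second, I would bound $\E[\sum_n n^{p-1}h_n(\hat F_n)]$ by a model-regret argument. Since $\hat F_{n+1}$ is FTL on $\{m^{p-1}\tilde h_m\}$ and each loss is $m^{p-1}\mu_h$-strongly convex with gradient bound $m^{p-1}G_h$, the standard strongly-convex FTL regret bound gives
\begin{align*}
\sum_n n^{p-1}\tilde h_n(\hat F_n) - \min_{\hat F\in\hat\FF}\sum_n n^{p-1}\tilde h_n(\hat F) \le \tfrac{G_h^2}{2\mu_h}\sum_n \tfrac{n^{2(p-1)}}{\sum_{m\le n}m^{p-1}}.
\end{align*}
Choosing $\theta_n\propto n^{p-1}$ in Definition~\ref{def:general class complexity} controls the minimum by $\epsf\sum_m m^{p-1}$ in expectation, and unbiasedness of $\tilde h_n$ transfers the resulting inequality to $\E[h_n(\hat F_n)]$. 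The asymptotic behavior of the residual sum $\sum_n n^{p-2}$ splits into the four theorem regimes: $\Theta(N^{p-1}/(p-1))$ for $p>1$; $\Theta(\log N)$ for $p=1$; $\Theta(1)$ for $0<p<1$; and for $p=0$ the outer weight $w_n^2/w_{1:n}=1/n$ directly contributes $\sum 1/n=\Theta(\ln N+1)$, essentially recovering Proposition~\ref{pr:constant regret of conceptual algorithm}.

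Finally, I would combine the two bounds, use $\epsf\sum_m m^{p-1}=\Theta(\epsf N^p/p)$ for $p>0$, and divide by $w_{1:N}\ge N^{p+1}/(p+1)$ to obtain all four cases. The main obstacle is bookkeeping constants tightly enough to recover the stated prefactors $p(p+1)^2/(p-1)$, $(p+1)^2/p$, $\ln(N+1)$, and the $e^{p/N}$ fudge factor. This requires the tight two-sided estimate $n^{p+1}/(p+1)\le w_{1:n}\le (n+1)^{p+1}/(p+1)$ paired with $(1+1/n)^p\le e^{p/n}\le e^{p/N}$, careful case analysis of $\sum n^{p-2}$ at the integrable/non-integrable threshold $p=1$, and for $p=0$ direct summation since $w_{1:N}=N$ and $\sum 1/n=\ln N+O(1)$.
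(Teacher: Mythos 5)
Your proposal is correct and follows essentially the same route as the paper: it derives the same pathwise bound $\regretw(\Pi)\le\sum_n\frac{w_n^2}{2\mu_f w_{1:n}}h_n(\hat F_n)$ from the VI optimality condition plus strong convexity (the paper's Lemma~\ref{lm:regret outer}), then bounds $\sum_n n^{p-1}h_n(\hat F_n)$ by treating model learning as a secondary FTL problem with weights $n^{p-1}$ (the paper's Lemma~\ref{lm:regret inner} and Lemma~\ref{lm:poly special}), and finally divides by $w_{1:N}=\Omega(N^{p+1}/(p+1))$ with the $e^{p/N}$ factor from $(1+1/N)^p\le e^{p/N}$. The only cosmetic difference is that you state the VI optimality condition as an equality rather than the constrained inequality, but your actual regret bound handles the constrained case correctly.
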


\begin{proof}
	The solution $\pi_{n+1}$ of the VI problem~\eqref{eq:VI problem} satisfies the optimality condition of
	\begin{align*}
	\pi_{n+1} = \argmin_{\pi \in \Pi} \sum_{m=1}^n w_m f_m(\pi_n) + w_{n+1} \hat F_{n+1}(\pi_{n+1}, \pi).
	\end{align*}
	Therefore, we can derive the bound of $\RR(p)$\footnote{The expectation of $\RR(p)$ is not required here because \algVI assumes the problem is deterministic.}
	as
	\begin{align}
	\RR(p) &= \frac{\regret^w(\Pi)}{w_{1:N}} \nonumber\\
	&\le \frac{p+1}{2\mu_f w_{1:N} } \sum_{n=1}^{N}  n^{p-1} \norm{\nabla_2 F(\pi_n, \pi_n) - \nabla_2 \hat F_n(\pi_n, \pi_n)}_*^2 & \text{(Lemma~\ref{lm:regret outer})} \nonumber\\
	&\le  \frac{p+1}{2 \mu_f w_{1:N}} \sum_{n=1}^{N}  n^{p-1} h_n(\pi_n)& \text{(Property of $h_n$)}  \label{eq:Rp algvi} 
	\end{align}	
	Next,  we treat $n^{p-1} h_n$ as the per-round cost for an online learning problem, and utilize Lemma~\ref{lm:regret inner} to upper bound the accumulated cost.
	In particular, we set $w_n$ in Lemma~\ref{lm:regret inner}  to $n^{p-1}$ and $l_n$ to $h_n$.
	Finally,  $w_{1:N} = \sumnN n^p$ can be lower bounded using Lemma~\ref{lm:poly sum}.
	Hence, for $p > 1$, we have
	\begin{align*}
	\RR(p) &\le \frac{p+1}{2 \mu_f} \frac{p+1}{N^{p+1}} \paren{
		\frac{G_h^2}{2\mu_h} \frac{p}{p-1} (N+1)^{p-1} + \frac{1}{p} (N+1)^{p}\epsf} \\
	&=	\frac{G_h^2}{4\mu_f\mu_h} \frac{p (p+1)^2 }{p-1} \paren{\frac{N+1}{N}}^{p-1}
	\frac{1}{N^2} + \frac{1}{2\mu_f}\frac{(p+1)^2} {p} 
	\paren{\frac{N+1}{N}}^p
	\frac{1}{N}\epsf \\
	&\le	\frac{G_h^2}{4\mu_f\mu_h} \frac{p (p+1)^2 e^{\frac{p}{N}}}{p-1} \frac{1}{N^2} + \frac{1}{2\mu_f}\frac{(p+1)^2e^{\frac{p}{N}}} {p} \frac{1}{N}\epsf, 
	\end{align*}
	where in the last inequality we utilize the fact that 
	$1 + x  \le e^x, \forall x \in \R$.
	Cases other than $p > 1$ follow  from  straightforward
	algebraic simplification. 
\end{proof}

\constantRegretOfConceptualAlgorithm*
\begin{proof}
	Proved in Theorem~\ref{th:weighted performance of conceptual algorithm (complete)} by setting $p = 0$.
\end{proof}

\subsection{Proof of Section~\ref{sec:practical algorithm (analysis)}} \label{app:proof of practical algorithm}

\strongerFTL*
\begin{proof}
	The proof is based on observing $l_{n} = l_{1:n} - l_{1:n-1}$ and $l_{1:N}$ as a telescoping sum:
	\begin{align*}
	\regret(\XX) &= \sum_{n=1}^N l_n(x_n) - l_{1:N}(x_N^\star) \\
	&= \sum_{n=1}^N \paren{l_{1:n}(x_n) - l_{1:n-1}(x_n)} - 
	\sum_{n=1}^N \paren{  l_{1:n}(x_{n}^\star) -l_{1:n-1}(x_{n-1}^\star)}  \\
	&= \sum_{n=1}^N \paren{ l_{1:n}(x_n) - l_{1:n}(x_n^\star) - \Delta_n},
	\end{align*}
	where for notation simplicity we define $l_{1:0} \equiv 0$.
\end{proof}

\pathwiseRegretBound*

\begin{proof}
	We utilize our new Lemma~\ref{lm:stronger FTL}.
	First, we bound $\sum_{n=1}^N l_{1:n}(\pi_n) - l_{1:n}(\pi_n^\star)$, where
	$\pi_n^\star = \argmin_{\pi \in \Pi} l_{1:n}(\pi)$. We achieve this by Lemma~\ref{lm:regret addon}. Let $l_n = w_n \bar{f}_n = w_n(\lr{g_n}{\pi} + r_n(\pi))$. To use Lemma~\ref{lm:regret addon}, we note that because $r_n$ is centered at $\pi_n$, $\pi_{n+1}$ satisfies
	\begin{align*}
	\pi_{n+1} 
	&= \argmin_{\pi \in \Pi} \sum_{m=1}^{n} w_m \bar{f}(\pi) + w_{n+1} \lr{\hat g_{n+1}}{\pi} \\
	&= \argmin_{\pi \in \Pi} \sum_{m=1}^{n} \underset{l_n(\pi)}{\underbrace{w_m \bar{f}(\pi)  }}
	+
	\underset{v_{n+1}(\pi)}{\underbrace{
			w_{n+1} \lr{\hat g_{n+1}}{\pi} +  w_{n+1} r_{n+1} (\pi_{n+1}) 	
	}}
	\end{align*}
	Because by definition $l_n$ is $w_n \alpha \mu_f$-strongly convex, it follows from Lemma~\ref{lm:regret addon} and Lemma~\ref{lm:poly sum} that 
	\begin{align*}
	\sum_{n=1}^N l_{1:n}(\pi_n) - l_{1:n}(\pi_n^\star)
	&\le \frac{1}{\alpha \mu_f} \sum_{n=1}^N \frac{w_n^2 }{w_{1:n}} \norm{\hat g_n - g_n }_*^2 
	\le \frac{p+1}{2\alpha \mu_f} \sum_{n=1}^N n^{p-1} \norm{ g_{n} - \hat g_{n}}_*^2.
	\end{align*}
	Next, we bound $\Delta_{n+1}$ as follows
	\begin{align*}
	\Delta_{n+1}  
	&=l_{1:n}(\pi_{n+1}) - l_{1:n}(\pi_{n}^\star)\\ 
	&\ge \lr{\nabla l_{1:n}(\pi_{n}^\star)}{\pi_{n+1} - \pi_{n}^\star} + \frac{\alpha\mu_f w_{1:n}}{2} \norm{\pi_{n+1} - \pi_{n}^\star}^2 &\text{(Strong convexity)}\\
	&\ge \frac{\alpha\mu_f w_{1:n}}{2} \norm{\pi_{n+1} - \pi_{n}^\star}^2 & \text{(Optimality condition of $\pi_{n}^*$)} \\
	&= \frac{\alpha \mu_fw_{1:n}}{2} \norm{\pi_{n+1} - \hat \pi_{n+1}}^2  & \text{(Definition of $ \hat \pi_{n+1}$)} \\
	& \ge \frac{\alpha \mu_f n^{p+1}}{2(p+1)} \norm{\pi_{n+1} - \hat {\pi}_{n+1}}^2.& \text{(Definition of $w_n$ and Lemma~\ref{lm:poly sum})} 
	\end{align*}
	Combining these results proves the bound. \qedhere
\end{proof}

\expectedQuadraticError*
\begin{proof}
	By Lemma~\ref{lm:norm squared bound}, we have 
	\begin{align*}
	\expectb{\norm{g_{n} - \hat{g}_{n}}^2_* }\leq 
	4\Big(& \expectb{\norm{g_{n} - \nabla_2 F(\pi_{n}, \pi_{n})}^2_*} +  
	\expectb{\norm{\nabla_2 F(\pi_{n}, \pi_{n}) - \nabla_2 \hat F_{n}(\pi_{n}, \pi_{n})}_*^2} + \\
	& \expectb{\norm{\nabla_2 \hat F_{n}(\pi_{n}, \pi_{n})- \nabla_2 \hat F_{n}(\hat\pi_{n}, \hat \pi_{n})}_*^2} + 
	\expectb{\norm{\nabla_2 \hat F_{n}(\hat \pi_{n}, \hat \pi_{n}) - \hat g_{n}}^2_*}\Big).
	\end{align*}
	Because the random quantities are generated in order
	$\dots, \pi_{n},  g_n, \hat F_{n+1}, \hat {\pi}_{n+1} , \hat g_{n+1}, \pi_{n+1}, g_{n+1}\dots$,
	by the variance assumption (Assumption~\ref{as:variance}), 
	the first and fourth terms can be bounded by
	\begin{align*}
	\expectb{\norm{g_{n} - \nabla_2 F(\pi_{n}, \pi_{n})}^2_*}  &= 
	\E_{\pi_n} \left[ \E_{g_n} [ \norm{g_{n} - \nabla_2 F(\pi_{n}, \pi_{n})}^2_* | \pi_{n} ] \right] \le \sigma_g^2, \\
	\expectb{\norm{\nabla_2 \hat F_{n}(\hat \pi_{n}, \hat\pi_{n}) - \hat g_{n}}^2_*}  &= 
	\E_{\hat{F}_n , \hat{\pi}_n} \left[ \E_{\hat{g}_n}[\norm{\nabla_2 \hat F_{n}(\hat \pi_{n}, \hat \pi_{n}) - \hat g_{n}}^2_* \big| \hat{\pi}_{n}, \hat F_{n} ] \right]  \le \sigma_{\hat{g}}^2.
	\end{align*}
	And, for the second term, we have 
	\begin{align*}
	\expectb{\norm{\nabla_2 F(\pi_{n}, \pi_{n}) - \nabla_2 \hat F_{n}(\pi_{n}, \pi_{n})}_*^2} 	\le
	\expectb{h_n(\hat F_n)} = 	\expectb{\tilde h_n(\hat F_n)} 
	\end{align*}	
	Furthermore, due to the Lipschitz assumption of $\nabla_2 \hat {F}_{n+1}$ (Assumption~\ref{as:Lipschitz continuity}), the third term is bounded by
	\begin{align*}
	\expectb{\norm{\nabla_2 \hat F_{n}(\pi_{n}, \pi_{n})- \nabla_2 \hat F_{n}(\hat \pi_{n}, \hat \pi_{n})}^2_*} \le L^2 \expectb{ \norm{\pi_{n} - {\hat \pi}_{n}}^2 }.
	\end{align*}
	Combing the bounds above, we conclude the lemma.
	
\end{proof}

\weightedPerformanceOfPracticalAlgorithm*
\begin{proof}
	We prove a more general version of Theorem~\ref{th:weighted performance of conceptual algorithm} below.
\end{proof}
\begin{theorem} \label{th:weighted performance of practical algorithm (complete)}
	For \algprox,
	\begin{equation*}
	\begin{split}
	\RR(p) &\le
	\frac{4}{\alpha}\RR_{\mathrm{\algVI}}(p) + \epsp + \sigma(p)\paren{\sigma_g^2 + \sigma_{\hat g}^2}+ 
	\frac{(p+1)\nu_p}{N^{p+1}}, \\
	\sigma(p)& \le 
	\begin{cases}
	\frac{2}{\alpha \mu_f}  \frac{(p+1)^2 e^{\frac{p}{N}} }{p }\frac{1}{N}, & \text{if } p > 0 \\
	\frac{2}{\alpha \mu_f} \frac{\ln N + 1}{N}, & \text{if } p = 0 \\
	\end{cases} \\
	\nu(p) & =  2e\paren{\frac{(p+1)LG_f}{\alpha \mu_f}}^2  \sum_{n=2}^{\nceil} n^{2p-2}
	- 
	\frac{e G_f^2}{2} \sum_{n=2}^{\nceil} (n-1)^{p+1} n^{p-1} = O(1),  \quad \nceil = \ceil{\frac{2e^{\frac{1}{2}}(p+1)LG_f}{\alpha \mu_f}}
	\end{split}
	\end{equation*}
	where $\RR_{\mathrm{\algVI}}(p)$ is the upper bound of the average regret $\RR(p)$  in Theorem~\ref{th:weighted performance of conceptual algorithm (complete)}, and the expectation is due to sampling $\tilde f_n$ and $\tilde h_n$.
\end{theorem}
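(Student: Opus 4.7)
The plan is to follow the four-step reduction outlined in the proof sketch. First, I would replace the per-round costs $w_n\tilde{f}_n$ by surrogate linearized costs $w_n\bar{f}_n$, where $\bar{f}_n(\pi)\coloneqq\lr{g_n}{\pi}+r_n(\pi)$. Because $\tilde{f}_n$ is $\mu_f$-strongly convex and $r_n$ is $\alpha\mu_f$-strongly convex with $\pi_n$ as its minimum and $r_n(\pi_n)=0$, a one-line convexity argument gives $\tilde{f}_n(\pi_n)-\tilde{f}_n(\pi)\le \bar{f}_n(\pi_n)-\bar{f}_n(\pi)$ pointwise, so $\E[\regret^w(\Pi)]\le\E[\regretwp(\Pi)]$. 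This reduces the problem to bounding the expected pathwise weighted regret against the surrogate losses, for which \algprox's update rule is a genuine prediction-augmented FTL scheme.

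Second, I would apply the Stronger FTL Lemma (Lemma~\ref{lm:stronger FTL}) to $\regretwp(\Pi)$. Since $\pi_{n+1}$ minimizes $\sum_{m\le n}w_m\bar{f}_m$ perturbed by the linear term $w_{n+1}\lr{\hat g_{n+1}}{\cdot}$, a standard stability calculation together with the optimality condition for $\pi_n^\star=\hat\pi_{n+1}$ yields Lemma~\ref{lm:pathwise regret bound}: a positive term $\tfrac{p+1}{2\alpha\mu_f}\sum n^{p-1}\norm{g_n-\hat g_n}_*^2$ coming from the $l_{1:n}(x_n)-l_{1:n}(x_n^\star)$ piece, and a strictly negative stability term $-\tfrac{\alpha\mu_f}{2(p+1)}\sum(n-1)^{p+1}\norm{\pi_n-\hat\pi_n}^2$ coming from the $-\Delta_n$ terms. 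The negative stability term is the key extra ingredient that the weak Lemma~\ref{lm:strong FTL} does not provide, and it is precisely what will cancel the Lipschitz-induced drift in the next step.

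Third, I would take expectations and expand $\norm{g_n-\hat g_n}_*^2$ via a four-way triangle inequality (Lemma~\ref{lm:expected quadratic error}) into sampling noise $\sigma_g^2$, model-gradient noise $\sigma_{\hat g}^2$, Lipschitz drift $L^2\E\norm{\pi_n-\hat\pi_n}^2$ (by Assumption~\ref{as:Lipschitz continuity}), and model prediction error $\E[\tilde h_n(\hat F_n)]$. Combined with the negative stability term, the net coefficient on $\E\norm{\pi_n-\hat\pi_n}^2$ is $\rho_n=O(n^{p-1}L^2-n^{p+1}\alpha^2\mu_f^2)$, which is non-positive once $n\ge\nceil=\lceil 2e^{1/2}(p+1)LG_f/(\alpha\mu_f)\rceil$. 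The finitely many terms with $n<\nceil$ are absorbed into the $\nu_p$ constant using the crude bound $\norm{\pi_n-\hat\pi_n}=O(G_f/\mu_f)$ from FTL stability together with the bounded-gradient assumption.

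Finally, I would bound the remaining model-learning term $\E[\sum_n n^{p-1}\tilde h_n(\hat F_n)]$ by observing that $\hat F_{n+1}$ is itself obtained by FTL on the $m^{p-1}\mu_h$-strongly convex losses $(w_m/m)\tilde h_m=m^{p-1}\tilde h_m$, so the standard strongly convex FTL regret bound (the same inner lemma used in Theorem~\ref{th:weighted performance of conceptual algorithm (complete)}) contributes an $O(N^{p-1}G_h^2/\mu_h)+O(N^p\epsf)$ term. Dividing by $w_{1:N}=\Theta(N^{p+1})$ via the polynomial-sum bound, and recognizing that this model-regret piece produces exactly the same expression as $\RR_{\algVI}(p)$ up to the factor $4/\alpha$, delivers the compact form $(4/\alpha)\RR_{\algVI}(p)+\epsp+\sigma(p)(\sigma_g^2+\sigma_{\hat g}^2)+(p+1)\nu_p/N^{p+1}$. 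The main obstacle I anticipate is the careful bookkeeping of the weights: one needs the identity $w_n/n=n^{p-1}$ to make the strongly convex FTL analysis of the \emph{model} combine cleanly with the $n^{p-1}$ prefactor coming from Lemma~\ref{lm:pathwise regret bound}, and one must verify that the threshold $\nceil$ and the bound on $\nu_p$ are indeed $O(1)$ uniformly in $N$, so that the transient terms do not spoil the $1/N^2$ rate.
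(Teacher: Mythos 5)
Your proposal follows the paper's own proof essentially step for step: the reduction from $\tilde f_n$ to the surrogate $\bar f_n$, the Stronger FTL Lemma yielding the negative stability term in Lemma~\ref{lm:pathwise regret bound}, the four-way decomposition of $\E[\norm{g_n-\hat g_n}_*^2]$ in Lemma~\ref{lm:expected quadratic error}, the cancellation of the Lipschitz drift for $n\ge \nceil$ with the transient terms absorbed into $\nu_p$ via FTL stability, and the identification of the $n^{p-1}\tilde h_n$ model-regret term with $\frac{4}{\alpha}\RR_{\mathrm{\algVI}}(p)$. This is correct and is the same argument as the paper's; the only remaining work is the explicit constant bookkeeping (e.g., the exact form of $\rho_n$ and $\nu_p$), which your outline already sets up correctly.
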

\begin{proof}
Recall $\RR(p) = \E[\frac{\regretw(\Pi)}{w_{1:N}} ]$, where 
\begin{align*} 
\regret^w(\Pi) = \sum_{n=1}^{N} w_n \tilde{f}_n (\pi_n) - \min_{\pi \in \Pi}  \sum_{n=1}^{N} w_n \tilde{f}_n (\pi).
\end{align*}
Define $\bar f_n(\pi) := \lr{g_n}{\pi} + r_n(\pi)$. Since $\tilde f_n$ is $ \mu_f$-strongly convex, $r_n$ is $ \alpha \mu_f$-strongly convex,  and $r(\pi_n)=0$, $\bar f_n$ satisfies
\begin{align*}
\tilde f_n(\pi_n) - \tilde f_n(\pi) \le \bar f_n(\pi_n) - \bar f_n(\pi), \quad\forall \pi \in \Pi.
\end{align*}
which implies $\RR(p) \leq \E[ \frac{\regretwp(\Pi)}{w_{1:N}}  ]$, where 
\begin{align*}
\regretwp(\Pi) \coloneqq \sum_{n=1}^N w_n \bar f_n(\pi_n) - \min_{\pi \in \Pi} \sum_{n=1}^N w_n \bar f_n(\pi)
\end{align*}
is regret of an online learning problem with per-round cost $ w_n \bar f_n$. 

Lemma~\ref{lm:pathwise regret bound}
upper bounds $\regretwp(\Pi)$ by using Stronger FTL lemma (Lemma~\ref{lm:stronger FTL}).
Since the second term in Lemma~\ref{lm:pathwise regret bound} is negative, which is in our favor, we just need to upper bound the expectation of the first item.  
Using triangular inequality, we proceed to bound $\expectb{\norm{g_{n} - \hat {g}_{n}}_*^2}$, which measures how well we are able to predict the next per-round cost using the model.

By substituting the result of Lemma~\ref{lm:expected quadratic error} into Lemma~\ref{lm:pathwise regret bound}, we see
\begin{equation} \label{eq:regret path loss app}
\begin{split}
\expect{\regretwp(\Pi)} \le \; &
\expectB {\sum_{n=1}^{N} \rho_n \norm{\pi_n - \hat{\pi}_n }^2} + 
\paren{\frac{2(p+1)}{\alpha \mu_f} \sum_{n=1}^N n^{p-1}}\paren{\sigma_g^2 + \sigma_{\hat g}^2} + \\
&\frac{2(p+1)}{\alpha \mu_f}  \expectB{\sum_{n=1}^{N} n^{p-1}\tilde h_n(\hat F_n)} 
\end{split}
\end{equation} 
where 
$\rho_n =\frac{2 (p+1)L^2}{ \alpha \mu_f}n^{p-1} - \frac{\alpha \mu_f}{2(p+1)} (n-1)^{p+1}$.
When $n$ is large enough, $\rho_n \le 0$, and hence the first term of~\eqref{eq:regret path loss app} is $O(1)$.
To be more precise, $\rho_n \le 0$ if 
\begin{align*}
&\frac{2(p+1) L^2}{\alpha \mu_f} n^{p-1}\le  \frac{\alpha \mu_f}{2(p+1)} (n-1)^{p+1} \\
\Longleftrightarrow  \;
&(n-1)^2 \ge \paren{\frac{2(p+1)LG_f}{\alpha \mu_f}}^2
\paren{\frac{n}{n-1}}^{p-1} \\
\Longleftarrow \;
& (n-1)^2 \ge  \paren{\frac{2(p+1)LG_f}{\alpha \mu_f}}^2
e^{\frac{p-1}{n-1}} \\
\Longleftarrow \;
& (n-1)^2 \ge  \paren{\frac{2(p+1)LG_f}{\alpha \mu_f}}^2
e &\text{(Assume $n \ge p$)} \\
\Longleftarrow \;
& n \ge  \frac{2e^{\frac{1}{2}}(p+1)LG_f}{\alpha \mu_f} + 1
\end{align*}
Therefore, we just need to bound the first $\nceil = \ceil{\frac{2e^{\frac{1}{2}}(p+1)LG_f}{\alpha \mu_f}}$ terms of $\rho_n\norm{\pi_n - \hat\pi_n}^2$. Here we use a basic fact of convex analysis in order to bound $\norm{\pi_n - \hat \pi_n}^2$
\begin{lemma} \label{lm:difference between optima}
	Let $\XX$ be a compact and convex set and let $f,g$ be convex functions. Suppose $f+g$ is $\mu$-strongly convex. Let $x_1 \in \argmin_{x \in \XX} f(x)$ and $x_2 = \argmin_{x \in \XX} \paren{f(x)+g(x)}$. Then $\norm{x_1 - x_2} \leq \frac{\norm{\nabla g(x_1)}_*}{\mu}$.
\end{lemma}
\begin{proof} [Proof of Lemma~\ref{lm:difference between optima}]
	Let $h = f+g$. Because $h$ is $\mu$-strongly convex and $x_2 = \argmin_{x\in \XX} h(x)$
	\begin{align*}
	\frac{\mu}{2} \norm{x_1 - x_2}^2 \leq h(x_1) - h(x_2) &\leq \lr{\nabla h(x_1)}{x_1 - x_2} - \frac{\mu}{2} \norm{x_1 - x_2}^2 \\
	&\leq  \lr{ \nabla g(x_1)}{x_1 - x_2} - \frac{\mu}{2} \norm{x_1 - x_2}^2
	\end{align*}
	This implies $\mu \norm{x_1 - x_2}^2 \leq  \lr{ \nabla g(x_1)}{x_1 - x_2}  \leq \norm{\nabla g(x_1)}_* \norm{x_1 - x_2}$. Dividing both sides by $\norm{x_1 - x_2}$ concludes the lemma.
\end{proof}

Utilizing Lemma~\ref{lm:difference between optima} and the definitions of $\pi_n$ and $\hat \pi_n$, we have, for $n \ge 2$, 
\begin{align*}
\norm{\pi_n - \hat \pi_n}^2 &\le \frac{1}{\alpha \mu_f w_{1:n-1}}\norm{w_n\hat g_n}_*^2  \\
&\le \frac{(p+1) G_f^2}{\alpha \mu_f} \frac{n^{2p}}{(n-1)^{p+1}} & \text{(Bounded $\hat g_n$ and Lemma~\ref{lm:poly sum})}\\
&\le \frac{(p+1)e^{\frac{p+1}{n-1}} G_f^2}{\alpha \mu_f}  n^{p-1} &\text{($1+x \le e^x$)} \\
&\le \frac{e (p+1) G_f^2}{\alpha \mu_f}  n^{p-1} &\text{(Assume $n \ge p + 2$)}. 
\end{align*}
and therefore, after assuming initialization $\pi_1  = \hat \pi_1$, we have the bound
\begin{equation}  \label{eq:bound pi}
\sum_{n=2}^{\nceil} \rho_n \norm{\pi_n - \hat \pi_n}^2 
\le
2e\paren{\frac{(p+1)LG_f}{\alpha \mu_f}}^2  \sum_{n=2}^{\nceil} n^{2p-2}
- 
\frac{e G_f^2}{2} \sum_{n=2}^{\nceil} (n-1)^{p+1} n^{p-1} 
\end{equation}
For the third term of~\eqref{eq:regret path loss app}, 
we can tie it back to the bound of $\RR(p)$ of \algVI, which we denote $\RR_{\mathrm{\algVI}}(p)$.
More concretely, recall that for \algVI in~\eqref{eq:Rp algvi}, we have
\begin{align*}
\RR(p)	\le \frac{p+1}{2 \mu_f w_{1:N}} \sum_{n=1}^{N}  n^{p-1} h_n(\pi_n),
\end{align*}
and we derived the upper bound ($\RR_{\mathrm{\algVI}}(p)$) for the RHS term.
By observing that the third term of~\eqref{eq:regret path loss app} after averaging is 
\begin{equation} \label{eq:bound hn}
\begin{split}
\frac{2(p+1)}{\alpha \mu_f w_{1:N}}  \expectB{\sum_{n=1}^{N} n^{p-1}\tilde h_n(\hat F_n)}  
&= \expectB{ \frac{4}{\alpha} \paren{\frac{p+1}{2 \mu_fw_{1:N}} \sum_{n=1}^{N} n^{p-1}\tilde h_n(\hat F_n)}} \\
&\le \frac{4}{\alpha}  \expectB{ \RR_{\mathrm{\algVI}}(p)} \\
&= \frac{4}{\alpha} \RR_{\mathrm{\algVI}}(p).
\end{split}
\end{equation}

Dividing~\eqref{eq:regret path loss app} by $w_{1:N}$, and plugging in~\eqref{eq:bound pi}, \eqref{eq:bound hn}, we see
\begin{align*}
\RR(p) &\leq \E[  \regretwp(\Pi) /w_{1:N} ] \\
&\le \frac{4}{\alpha} \RR_{\mathrm{\algVI}}(p) +  \frac{1}{w_{1:N}} \paren{\nu_p + \paren{\frac{2(p+1)}{\alpha \mu_f} \sum_{n=1}^N n^{p-1}}\paren{\sigma_g^2 + \sigma_{\hat g}^2} }
\end{align*}
where $\nu_p = 2e\paren{\frac{(p+1)LG_f}{\alpha \mu_f}}^2  \sum_{n=2}^{\nceil} n^{2p-2}
- \frac{e G_f^2}{2} \sum_{n=2}^{\nceil} (n-1)^{p+1} n^{p-1}$,
$\nceil = \ceil{\frac{2e^{\frac{1}{2}}(p+1)LG_f}{\alpha \mu_f}}$.

Finally, we consider the case $p > 1$ as stated in Theorem~\ref{th:weighted performance of practical algorithm}
\begin{align*}
\RR(p) &\le \frac{4}{\alpha} \paren{\frac{G_h^2}{4\mu_f\mu_h} \frac{p (p+1)^2 e^{\frac{p}{N}}}{p-1} \frac{1}{N^2} + \frac{1}{2\mu_f}\frac{(p+1)^2e^{\frac{p}{N}}} {p} \frac{1}{N}\epsf} + 
\frac{p+1}{N^{p+1}} \paren{\nu_p + \paren{\frac{2(p+1)}{\alpha \mu_f} \frac{n^p}{p}} \paren{\sigma_g^2 + \sigma_{\hat g}^2} }  \\
&\le \frac{ (p+1)^2 e^{\frac{p}{N}}}{\alpha \mu_f} \paren{\frac{G_h^2}{\mu_h} \frac{p}{p-1} \frac{1}{N^2} + 
	\frac{2}{p} \frac{\sigma_g^2 + \sigma_{\hat g}^2 + \epsf}{N}} + \frac{(p+1) \nu_p}{N^{p+1}},
\end{align*}
where $\nu_p = 2e\paren{\frac{(p+1)LG_f}{\alpha \mu_f}}^2 \paren{\frac{ (\nceil+1)^{2p-1}}{2p-1} -1}
- \frac{e G_f^2}{2}\frac{(\nceil-1)^{2p+1}}{2p+1}$, $\nceil = \ceil{\frac{2e^{\frac{1}{2}}(p+1)LG_f}{\alpha \mu_f}}$.
\end{proof}

\section{Model Learning through Learning Dynamics Models} \label{app:learning dynamics}
So far we have stated model learning rather abstractly, which only requires
$h_n(\hat {F})$ to be an upper bound of  $\norm{\nabla_2 {F} (\pi_{n}, \pi_{n}) - \nabla_2 \hat F(\pi_{n}, \pi_{n})  }_*^2$. 
Now we give a particular example of $h_n$ and $\tilde{h}_n$ when the predictive model is constructed as a simulator with online learned dynamics models.
Specifically, we consider learning a transition model $M \in \MM$ online that induces a bivariate function $\hat F$, where $\MM$ is the class of transition models.
Let $D_{KL}$ denote the KL divergence 
and let $d_{\pi_n}^M$ be the generalized stationary distribution (cf.~\eqref{eq:RL problem}) generated by running policy $\pi_n$ under transition model $M$.
We  define, for $M_n \in \MM$,
$
\hat F_n(\pi', \pi) :=  \E_{d^{M_n}_{\pi'}} [ D(\pi^* ||\pi )]
$. 
We show the error of $\hat F_n$ can be bounded by  the KL-divergence error of $M_n$.
\begin{restatable}{lemma}{errorBoundInMarginalKL} \label{lm:error bound in marginal KL}
	Assume $\nabla D(\pi^* || \cdot)$ is $L_{D}$-Lipschitz continuous with respect to $\norm{\cdot}_*$. It holds that
	$\norm{\nabla_2 F(\pi_{n}, \pi_{n}) - \nabla_2 \hat {F}_n (\pi_{n}, \pi_{n})  }_*^2
	\leq  2^{-1}  (L_D \diam(\Sbb))^2 D_{KL}( d_{\pi_n} || d^{M_n}_{\pi_n}) $.
\end{restatable}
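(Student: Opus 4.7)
My plan is to reduce the gradient gap to a discrepancy between the two state distributions acting on a common integrand, and then to convert that discrepancy from total variation to KL via Pinsker's inequality. By the definitions of $F$ and $\hat F_n$ and linearity of differentiation under the expectation, both gradients are expectations of the same vector-valued function $\phi(s) := \nabla_\pi D(\pi^*_s \,\|\, \pi_s)\big|_{\pi = \pi_n}$, but taken under $d_{\pi_n}$ and $d^{M_n}_{\pi_n}$ respectively. So the difference is
\begin{equation*}
\nabla_2 F(\pi_n,\pi_n) - \nabla_2 \hat F_n(\pi_n,\pi_n)
= \E_{s \sim d_{\pi_n}}[\phi(s)] - \E_{s \sim d^{M_n}_{\pi_n}}[\phi(s)].
\end{equation*}

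To bound this in dual norm, I would invoke a coupling (Kantorovich--Rubinstein style) argument. Let $(X, Y)$ be a maximal coupling of $d_{\pi_n}$ and $d^{M_n}_{\pi_n}$, i.e. one achieving $\Pr[X \ne Y] = \TV{d_{\pi_n}}{d^{M_n}_{\pi_n}}$. Then the difference above equals $\E[\phi(X) - \phi(Y)]$, and on the event $X \ne Y$ the Lipschitz continuity of $\nabla D(\pi^* \,\|\, \cdot)$ together with the diameter of $\Sbb$ yields $\norm{\phi(X) - \phi(Y)}_* \le L_D \diam(\Sbb)$; on the event $X = Y$ the integrand is zero. By Jensen's inequality on the dual norm,
\begin{equation*}
\norm{\nabla_2 F(\pi_n,\pi_n) - \nabla_2 \hat F_n(\pi_n,\pi_n)}_*
\le L_D \diam(\Sbb)\, \TV{d_{\pi_n}}{d^{M_n}_{\pi_n}}.
\end{equation*}

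Squaring both sides and applying Pinsker's inequality $\TV{p}{q} \le \sqrt{\tfrac{1}{2} D_{KL}(p\,\|\,q)}$ with $p = d_{\pi_n}$ and $q = d^{M_n}_{\pi_n}$ immediately produces the claimed bound $\tfrac{1}{2}(L_D \diam(\Sbb))^2 D_{KL}(d_{\pi_n} \,\|\, d^{M_n}_{\pi_n})$.

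The main subtlety is bookkeeping around the Lipschitz hypothesis: I have to interpret $L_D$-Lipschitz continuity of $\nabla D(\pi^* \,\|\, \cdot)$ together with $\diam(\Sbb)$ in compatible norms so that the oscillation $\norm{\phi(s) - \phi(s')}_*$ is controlled by $L_D \cdot d(s,s')$ for states $s, s' \in \Sbb$. Once this is in place, the remainder is a standard coupling-plus-Pinsker reduction, and no regret-style machinery from the main text is needed.
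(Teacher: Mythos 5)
Your proof is correct and follows essentially the same route as the paper's: both reduce the dual-norm gradient gap to a discrepancy between $d_{\pi_n}$ and $d^{M_n}_{\pi_n}$ tested against the Lipschitz integrand $s \mapsto \nabla D(\pi^*_s||\pi_s)$, bound that by $L_D\,\diam(\Sbb)$ times a distance between the two distributions, and finish with Pinsker's inequality. The only cosmetic difference is that you pass to total variation directly via a maximal coupling, whereas the paper routes through the Wasserstein distance and then applies $D_W \leq \diam(\Sbb)\, D_{TV} \leq \diam(\Sbb)\sqrt{D_{KL}/2}$ --- the same chain of inequalities with the first step's proof inlined (and both arguments share the same implicit reading of the $L_D$-Lipschitz hypothesis as controlling the oscillation of the integrand over $\Sbb$).
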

Directly minimizing the marginal KL-divergence $D_{KL}(d_{\pi_n}, {d}^{M_n}_{\pi_n})$ is a nonconvex problem and requires backpropagation through time. 
To make the problem simpler, we further upper bound it in terms of the KL divergence between the true and the modeled \emph{transition probabilities}. 

To make the problem concrete, here we consider $T$-horizon RL problems.
\begin{restatable}{proposition}{errorBoundInExpectedDynamicsKL} \label{pr:error bound in expected dynamics KL}
	For a $T$-horizon problem with dynamics $P$, let $M_n$ be the modeled  dynamics. Then $\exists C>0$ s.t 
	$
	\norm{\nabla_2 {F} (\pi_{n}, \pi_{n}) - \nabla_2 \hat F_n(\pi_{n}, \pi_{n})}_*^2
	\leq  \frac{C}{T} \sum_{t=0}^{T-1} (T-t)\E_{d_{\pi_n,t}}\E_{\pi}\left[ D_{KL}(P || M_{n} ) \right]
	$.
\end{restatable}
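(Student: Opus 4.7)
The plan is to chain Lemma~\ref{lm:error bound in marginal KL} with a two-step chain-rule decomposition of KL divergence. Lemma~\ref{lm:error bound in marginal KL} already bounds the gradient error by the marginal KL $D_{KL}(d_{\pi_n}\|d_{\pi_n}^{M_n})$ between the generalized stationary distributions under $P$ and $M_n$, so the task reduces to controlling this marginal KL by a sum of transition KLs of the form $\E_{d_{\pi_n,t}}\E_{\pi_n}[D_{KL}(P\|M_n)]$.

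First I would exploit the $T$-horizon factorization $d_\pi(s,t) = \frac{1}{T}d_{\pi,t}(s)$. Since the time marginal is uniformly $1/T$ under both $P$ and $M_n$, the chain rule for KL (conditioning on $t$) gives $D_{KL}(d_{\pi_n}\|d_{\pi_n}^{M_n}) = \frac{1}{T}\sum_{t=0}^{T-1} D_{KL}(d_{\pi_n,t}\|d_{\pi_n,t}^{M_n})$. For each fixed $t$, I would introduce the joint trajectory distributions $\rho^P_{\pi_n,t}$ and $\rho^{M_n}_{\pi_n,t}$ over prefixes $(s_0,a_0,\dots,s_t)$ obtained by rolling out $\pi_n$ under $P$ and $M_n$ respectively, and invoke the data-processing inequality, yielding $D_{KL}(d_{\pi_n,t}\|d_{\pi_n,t}^{M_n}) \leq D_{KL}(\rho^P_{\pi_n,t}\|\rho^{M_n}_{\pi_n,t})$.

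Next, because the two trajectory distributions share the same initial-state law and the same policy conditionals $\pi_n(a_\tau|s_\tau)$, those factors cancel in the log-ratio, and the chain rule for KL yields the telescope $D_{KL}(\rho^P_{\pi_n,t}\|\rho^{M_n}_{\pi_n,t}) = \sum_{\tau=0}^{t-1}\E_{d_{\pi_n,\tau}}\E_{\pi_n}[D_{KL}(P(\cdot|s,a)\|M_n(\cdot|s,a))]$. Substituting back, swapping the order of summation via $\sum_{t=0}^{T-1}\sum_{\tau=0}^{t-1} = \sum_{\tau=0}^{T-2}(T-1-\tau)$, and using the loose bound $T-1-\tau\leq T-\tau$ to match the form stated in the proposition yields $D_{KL}(d_{\pi_n}\|d_{\pi_n}^{M_n})\leq \frac{1}{T}\sum_{t=0}^{T-1}(T-t)\E_{d_{\pi_n,t}}\E_{\pi_n}[D_{KL}(P\|M_n)]$. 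Combining with Lemma~\ref{lm:error bound in marginal KL} and setting $C = \tfrac{1}{2}(L_D\,\diam(\Sbb))^2$ finishes the proof.

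The step I expect to be most delicate is the chain-rule telescoping: one must carefully write out $\log(\rho^P_{\pi_n,t}/\rho^{M_n}_{\pi_n,t})$, confirm that the initial-state and all policy factors drop out, and verify that the expectations in the surviving transition-KL terms are taken under the true-dynamics marginals $d_{\pi_n,\tau}$ (not the modeled ones), which is what makes the right-hand side of the statement a quantity the learner can actually estimate from samples. The time-index off-by-one is purely cosmetic and is absorbed into the constant via $T-1-t\leq T-t$.
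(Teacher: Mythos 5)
Your proposal is correct and follows essentially the same route as the paper: reduce to the marginal KL via Lemma~\ref{lm:error bound in marginal KL}, pass to trajectory distributions, telescope the KL by the chain rule (with the policy and initial-state factors cancelling), and swap the order of summation to produce the $(T-t)$ weights. Your write-up is in fact slightly more careful than the paper's, which silently absorbs the data-processing step and the $T-1-\tau$ versus $T-\tau$ off-by-one into its (stated-as-equality) final line.
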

Therefore, we can simply takes $h_n$ as the upper bound in Proposition~\ref{pr:error bound in expected dynamics KL}, and $\tilde{h}$ as its empirical approximation by sampling state-action transition triples through running policy $\pi_n$ in the real environment. This construction agrees with the causal relationship assumed in the Section~\ref{sec:conceptual algorithm}.

\subsection{Proofs}
\errorBoundInMarginalKL*
\begin{proof}
	First, we use the definition of dual norm 
	\begin{align} \label{eq:by def of dual norm}
	\norm{\nabla_2 \hat {F} (\pi_{n}, \pi_{n}) - \nabla_2 F(\pi_{n}, \pi_{n})  }_*
	= \max_{\delta: \norm{\delta}\leq 1} (\E_{ d_{\pi_n}} - \E_{ d_{\pi_n}^{M_n}}) \left[ \lr{\delta}{\nabla D(\pi^*|| \pi_n)} \right]  
	\end{align}
	and then we show that $\lr{\delta}{\nabla D(\pi^*|| \pi_n)}$ is $L_D$-Lipschitz continuous: for $\pi, \pi' \in \Pi$,
	\begin{align*}
	\lr{\delta}{\nabla D(\pi^*|| \pi) - \nabla D(\pi^*|| \pi')} \leq \norm{\delta} \norm{ \nabla D(\pi^*|| \pi) - \nabla D(\pi^*|| \pi')}_* \leq L_{D} \norm{\pi - \pi'}
	\end{align*}
	Note in the above equations $\nabla$ is with respect to $D(\pi^*|| \cdot)$. 
	
	Next we bound the right hand side of~\eqref{eq:by def of dual norm} using Wasserstein distance $D_W$, which is defined as follows~\citep{gibbs2002choosing}:  for two probability distributions $p$ and $q$ defined on a metric space 
	$D_W(p, q) \coloneqq \sup_{f:\text{Lip}(f(\cdot)) \leq 1 }  \E_{x\sim p}[f(x)] - \E_{x\sim q} [f(x)] $.
	
	Using the property that $\lr{\delta}{\nabla D(\pi^*|| \pi_n)}$ is $L_D$-Lipschitz continuous, we can derive
	\begin{align*}
	\norm{\nabla_2 \hat {F} (\pi_{n}, \pi_{n}) - \nabla_2 F(\pi_{n}, \pi_{n})  }_*
	\leq L_{D}  D_W( d_{\pi_n}, \hat{d}_{\pi_n})   
	\leq    \frac{L_D \diam(\Sbb)}{\sqrt{2}} \sqrt{D_{KL}( d_{\pi_n} || \hat{d}^n_{\pi_n})}  
	\end{align*}
	in which the last inequality is due to the relationship between $D_{KL}$ and $D_W$~\citep{gibbs2002choosing}. 	
\end{proof}

\errorBoundInExpectedDynamicsKL*
\begin{proof}
	Let $\rho_{\pi,t}$ be the state-action trajectory up to time $t$ generated by running policy $\pi$, and let $\hat{\rho}_{\pi,t}$ be that of the dynamics model. To prove the result, we use a simple fact: 
	\begin{lemma}
		Let $p$ and $q$ be two distributions. 
		\begin{align*}
		\KL{p(x,y)}{q(x,y)} = \KL{p(x)}{q(x)} + \E_{p(x)}\KL{p(y|x)}{q(y|x)}
		\end{align*}
	\end{lemma}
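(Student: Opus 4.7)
The plan is to unfold the definition of KL divergence, apply the factorization $p(x,y) = p(x) p(y \mid x)$ and $q(x,y) = q(x) q(y \mid x)$ to the log-ratio, and then separate the resulting expectation into two pieces using linearity and the tower property.

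Concretely, I would start from $\KL{p(x,y)}{q(x,y)} = \E_{(x,y)\sim p(x,y)}\bracket{\log \frac{p(x,y)}{q(x,y)}}$. Substituting the joint factorizations gives
\begin{align*}
\log \frac{p(x,y)}{q(x,y)} = \log \frac{p(x)}{q(x)} + \log \frac{p(y\mid x)}{q(y\mid x)},
\end{align*}
so by linearity of expectation the KL divergence splits into two terms. The first term, $\E_{(x,y)\sim p(x,y)}\bracket{\log \frac{p(x)}{q(x)}}$, has an integrand that depends only on $x$; marginalizing $y$ from $p(x,y)$ yields $p(x)$, so this term collapses to $\KL{p(x)}{q(x)}$.

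For the second term, I would apply the tower property, writing $p(x,y) = p(x)\, p(y\mid x)$ and pulling out the outer expectation to get $\E_{p(x)}\bracket{\E_{p(y \mid x)}\bracket{\log \frac{p(y \mid x)}{q(y\mid x)}}}$, which is exactly $\E_{p(x)}\KL{p(y\mid x)}{q(y\mid x)}$ by the definition of the conditional KL divergence. Summing the two pieces yields the stated identity.

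There is essentially no main obstacle here, as this is a textbook chain rule for relative entropy; the argument is purely mechanical once the log of the ratio is decomposed. The only point requiring mild care is regularity, namely assuming $q \ll p$ (so the log-ratios are well-defined $p$-almost-everywhere) and that all three KL quantities are finite; given the paper's setting this is implicit, so no further work is needed.
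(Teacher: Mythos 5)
Your proof is correct and is the standard chain-rule argument for relative entropy; the paper itself states this lemma as a bare fact inside the proof of Proposition~\ref{pr:error bound in expected dynamics KL} and offers no proof at all, so there is nothing to compare against — your mechanical decomposition of $\log\frac{p(x,y)}{q(x,y)}$ followed by the tower property is exactly what is needed. One small correction to your regularity remark: the condition that makes the log-ratios well-defined $p$-almost-everywhere (and $\KL{p}{q}$ finite) is $p \ll q$, not $q \ll p$; you need $q$ to be positive wherever $p$ is, so that $p/q$ does not blow up on the support of $p$.
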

	Then the rest follows from Lemma~\ref{lm:error bound in marginal KL} and the following inequality.
	\begin{align*}
	D_{KL}( d_{\pi_n} || \hat{d}_{\pi_n}) 
	&\leq \frac{1}{T}\sum_{t=0}^{T-1} D_{KL}( \rho_{\pi_n,t} || \hat{\rho}_{\pi_n,t}) \\
	&= \frac{1}{T}\sum_{t=0}^{T-1} \E_{\rho_{\pi_n, t}}\left[ \sum_{\tau=0}^{t-1}  \ln \frac{ p_M(s_{\tau+1}|s_\tau, a_\tau)  }{ p_{\hat{M}}(s_{\tau+1}|s_\tau, a_\tau) } \right] \\
	&=  \frac{1}{T} \sum_{t=0}^{T-1} (T-t)\E_{d_{\pi,t}}\E_{\pi}\left[ D_{KL}(p_M || p_{\hat{M}} ) \right] \qedhere
	\end{align*}
\end{proof}

\section{Relaxation of Strong Convexity Assumption} \label{app:convex cases}
The strong convexity assumption (Assumption~\ref{as:function structure}) can be relaxed to just convexity.
We focus on studying the effect of $\tilde f_n$ and/or $\tilde h_n$ being just convex on $\RR(p)$ in Theorem~\ref{th:dagger performance} and Theorem~\ref{th:weighted performance of practical algorithm} in big-O notation.
Suggested by Lemma~\ref{lm:strong FTL}, when strong convexity is not assumed, additional regularization has to be added in order to keep the stabilization terms $l_{1:n}(x_n) - l_{1:n}(x_n^\star)$ small.

\begin{lemma}[FTRL with prediction]\label{lm:regret outer convex}
	Let $l_n$ be convex with bounded gradient and let $\XX$ be a compact set.
	In round $n$, let regularization $r_n$ be $\mu_n$-strongly convex for some $\mu_n \geq 0$ such that $r_n(x_n) = 0$ and $x_n \in \argmin_{\XX} r_n(x)$,
	and let
	$v_{n+1}$ be a (non)convex function such that  $\sum_{m=1}^n w_m\paren{l_n + r_n} + w_{n+1} v_{n+1}$ is convex.
	Suppose that learner plays Follow-The-Regularized-Leader (FTRL) with prediction, i.e.	
	$x_{n+1} = \argmin_{x\in \XX} \sum_{m=1}^n\left( w_m\paren{l_n + r_n} + w_{n+1} v_{n+1}\right) (x)$,
	and suppose that $\sum_{m=1}^n w_m \mu_m = \Omega(n^k) > 0$  and $\sum_{m=1}^n w_m r_n(x) \leq O(n^k) $ for all $x\in \XX$ and some $k\geq0$.
	Then, for $w_n = n^p$,
	\begin{align*}
	\regretw(\XX) = O(N^k)+ \sum_{n=1}^N O\paren{n^{2p - k}} \norm{\nabla l_n(x_n) - \nabla v_n(x_n)}^2_*
	\end{align*}
	
\end{lemma}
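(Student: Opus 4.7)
The plan is to reduce the weighted regret against unregularized losses to the regret against the regularized losses, then to apply the Stronger FTL Lemma (Lemma~\ref{lm:stronger FTL}) and bound each stabilization term using FTRL stability driven by the prediction error of $v_n$ relative to $l_n$.

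\textbf{Step 1 (reduction to regularized regret).} Define $\tilde l_n := l_n + r_n$. Because $x_n \in \argmin_{\XX} r_n$ and $r_n(x_n) = 0$, we may take $r_n \ge 0$ on $\XX$ without loss of generality, so $\sum_n w_n l_n(x_n) = \sum_n w_n \tilde l_n(x_n)$ while $\min_{x\in\XX} \sum_n w_n \tilde l_n(x) \le \min_{x\in\XX} \sum_n w_n l_n(x) + \max_{x\in\XX}\sum_n w_n r_n(x)$. By hypothesis the last term is $O(N^k)$, so it suffices to bound the regret of the $w_n\tilde l_n$-sequence.

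\textbf{Step 2 (Stronger FTL and stability).} Apply Lemma~\ref{lm:stronger FTL} with losses $L_m := w_m \tilde l_m$, obtaining $\regret^w(\XX) \le \sum_{n=1}^N \bigl(L_{1:n}(x_n) - L_{1:n}(x_n^\star)\bigr)$ after dropping the nonnegative $\Delta_n$ terms. Now use that the learner plays $x_n = \argmin_{\XX}\bigl(L_{1:n-1}(x) + w_n v_n(x)\bigr)$ whereas $x_n^\star = \argmin_{\XX} L_{1:n}(x)$. By the assumed convexity of $L_{1:n-1} + w_n v_n$ and the $\mu$-strong convexity of $L_{1:n}$ with $\mu := \sum_{m=1}^n w_m \mu_m = \Omega(n^k)$, a standard first-order/optimality-cone argument (the same tool as Lemma~\ref{lm:difference between optima}) yields
\begin{equation*}
\norm{x_n - x_n^\star} \le \tfrac{2 w_n}{\mu}\,\norm{\nabla \tilde l_n(x_n) - \nabla v_n(x_n)}_*,
\end{equation*}
and consequently $L_{1:n}(x_n) - L_{1:n}(x_n^\star) \le \tfrac{2 w_n^2}{\mu}\,\norm{\nabla \tilde l_n(x_n) - \nabla v_n(x_n)}_*^2$. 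Since $x_n \in \argmin r_n$, we have $\nabla r_n(x_n) = 0$ (in the variational sense on $\XX$), hence $\nabla \tilde l_n(x_n) = \nabla l_n(x_n)$.

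\textbf{Step 3 (assemble).} With $w_n = n^p$ and $\mu = \Omega(n^k)$ each summand is $O(n^{2p-k})\,\norm{\nabla l_n(x_n) - \nabla v_n(x_n)}_*^2$; summing over $n$ and adding the $O(N^k)$ payment from Step~1 delivers the stated bound.

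\textbf{Main obstacle.} The delicate point is that $v_n$ is permitted to be nonconvex, so classical FTRL stability does not apply out of the box. The hypothesis that $L_{1:n-1} + w_n v_n$ is convex is exactly what we need to make $x_n$ a well-defined variational inequality solution, and the strong convexity of $L_{1:n}$ (supplied by the regularizers aggregating to $\Omega(n^k)$) is what converts the first-order optimality of $x_n$ for the \emph{surrogate} loss into a quadratic-in-gradient-error bound for the \emph{true} loss. A secondary care is handling the identification $\nabla \tilde l_n(x_n) = \nabla l_n(x_n)$ correctly on the boundary of $\XX$, which follows from the centered-minimum characterization of $r_n$.
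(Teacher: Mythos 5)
Your proof is correct and follows essentially the same route as the paper: the same $O(N^k)$ reduction from the unregularized to the regularized regret using $r_n(x_n)=0$ and $\max_{x}\sum_n w_n r_n(x)=O(N^k)$, followed by the FTL-with-prediction bound on the strongly convex losses $w_n(l_n+r_n)$ (the paper invokes its Lemma~\ref{lm:regret outer} as a black box where you re-derive the per-round stabilization bound inline, with an immaterially worse constant). Your explicit remark that $\nabla r_n(x_n)=0$ is needed to replace $\nabla(l_n+r_n)(x_n)$ by $\nabla l_n(x_n)$ is a point the paper leaves implicit.
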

\begin{proof}
	The regret of the online learning problem with \emph{convex} per-round cost $w_n l_n$ can be bounded by the regret of the online learning problem with \emph{strongly convex} per-round cost $w_n \paren{l_n + r_n}$ as follows. 	Let $x_n^\star \in \argmin_{x \in \XX} \sum_{n=1}^N w_n l_n(x)$.
	\begin{align*}
	\regretw(\XX) &= 
	\sum_{n=1}^N w_n l_n(x_n) - \min_{x \in \XX} \sum_{n=1}^N w_n l_n(x) \\
	&= \sum_{n=1}^N w_n \paren{l_n(x_n) + r_n(x_n)} - \sum_{n=1}^N w_n \paren{l_n(x_n^\star)  +r_n(x_n^\star)} + 
	\sum_{n=1}^N w_n r_n(x_n^\star) \\
	& \le  \paren{\sum_{n=1}^N w_n \paren{l_n(x_n) + r_n(x_n)}  - \min_{x \in \XX} \sum_{n=1}^N w_n \paren{l_n(x) + r_n(x)}} + O(N^k).
	\end{align*}
	Since the first  term is the regret of the online learning problem with \emph{strongly convex} per-round cost $w_n\paren{l_n + r_n}$, and $x_{n+1} = \argmin_{\XX} \paren{ \sum_{m=1}^n w_m\paren{l_n + r_n} + w_{n+1} v_{n+1}}$, we can bound the first term via Lemma~\ref{lm:regret outer} by setting $w_n = n^p$ and $\sum_{m=1}^n w_m \mu_m = O(n^k)$.
\end{proof}

The lemma below is a corollary of Lemma~\ref{lm:regret outer convex}.
\begin{lemma}[FTRL] \label{lm:regret inner convex}
	Under the same condition in Lemma~\ref{lm:regret outer convex}, 
	suppose that learner plays FTRL, i.e.
	$x_{n+1} = \argmin_{\XX}\sum_{m=1}^n w_m\paren{l_n + r_n}$. 
	Then, for $w_n = n^p$ with $p > -\frac{1}{2}$, choose $\{ r_n \}$ such that 
	$\sum_{m=1}^n w_m \mu_m = \Omega(n^{p+1/2}) > 0$ and it achieves	
	$
	\regretw(\XX) = O(N^{p+\frac{1}{2}})
	$ and  $\frac{\regretw(\XX)}{w_{1:N}} = O(N^{-1/2})$.
	
\end{lemma}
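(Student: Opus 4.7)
The plan is to derive this FTRL bound as a direct specialization of the FTRL-with-prediction bound (Lemma~\ref{lm:regret outer convex}) by taking the prediction term $v_{n+1}$ to be identically zero. Under that choice, the update $x_{n+1} = \argmin_{\XX} \sum_{m=1}^n w_m (l_m + r_m) + w_{n+1} v_{n+1}$ collapses to plain FTRL $x_{n+1} = \argmin_{\XX} \sum_{m=1}^n w_m (l_m + r_m)$, and convexity of the composite objective holds trivially. All other preconditions of Lemma~\ref{lm:regret outer convex} are inherited verbatim from the hypotheses.

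Next I would invoke Lemma~\ref{lm:regret outer convex} to obtain
\begin{equation*}
\regretw(\XX) \;=\; O(N^k) \;+\; \sum_{n=1}^{N} O\!\left(n^{2p-k}\right)\,\norm{\nabla l_n(x_n)}_*^2.
\end{equation*}
Since $l_n$ has bounded gradients (a hypothesis of Lemma~\ref{lm:regret outer convex}), $\norm{\nabla l_n(x_n)}_*^2$ is uniformly bounded, so the second term becomes $O\!\left(\sum_{n=1}^N n^{2p-k}\right)$. Choosing the regularizers so that $\sum_{m=1}^n w_m \mu_m = \Omega(n^{p+1/2})$, i.e.\ taking $k = p + 1/2$, I would then substitute and simplify:
\begin{equation*}
\regretw(\XX) \;=\; O\!\left(N^{p+1/2}\right) + O\!\left(\sum_{n=1}^N n^{p-1/2}\right).
\end{equation*}

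The only quantitative step is bounding the residual sum, using the standard fact that $\sum_{n=1}^N n^{q} = \Theta(N^{q+1})$ whenever $q > -1$. Under the hypothesis $p > -1/2$ we have $p-1/2 > -1$, so $\sum_{n=1}^N n^{p-1/2} = O(N^{p+1/2})$, and hence $\regretw(\XX) = O(N^{p+1/2})$. Finally, since $w_n = n^p$ with $p > -1/2 > -1$, Lemma~\ref{lm:poly sum} (or the same $\Theta$-estimate) gives $w_{1:N} = \Theta(N^{p+1})$, yielding $\regretw(\XX)/w_{1:N} = O(N^{-1/2})$ as claimed.

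There is essentially no real obstacle: the argument is a specialization plus one polynomial-sum estimate. The only thing to watch is ensuring that the hypothesis $p > -1/2$ is used in exactly the right place — it is precisely what makes the exponent $p - 1/2$ strictly greater than $-1$ so that $\sum_{n=1}^N n^{p-1/2}$ grows no faster than $N^{p+1/2}$, matching the leading $O(N^k)$ term from the regularizer and giving the advertised sublinear (in $N$) average regret.
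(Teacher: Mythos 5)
Your proposal is correct and follows essentially the same route as the paper: specialize Lemma~\ref{lm:regret outer convex} (the paper implicitly takes $v_{n+1}=0$ where you make it explicit), use the bounded-gradient hypothesis, bound $\sum_{n=1}^N n^{2p-k}$ by Lemma~\ref{lm:poly sum}, and balance the two terms at $k=p+\tfrac12$, with $p>-\tfrac12$ entering exactly where you place it (ensuring $2p-k>-1$). The only cosmetic difference is that the paper derives $k=p+\tfrac12$ by balancing $O(N^k)$ against $O(N^{2p-k+1})$ rather than substituting the choice up front.
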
	
\begin{proof}	
	Let $\sum_{m=1}^n w_m \mu_m = \Theta(n^{k})>0$ for some $k\geq 0 $. 
	First, if  $2p - k >  -1$, then we have 
	\begin{align*}
	\regret(\XX) 
	&\le O(N^k) + \sum_{n=1}^N O\paren{n^{2p-k}} \norm{\nabla l_n(x_n)}_*^2 
	&\text{(Lemma~\ref{lm:regret outer convex})} \\
	&\le O(N^k) + \sum_{n=1}^N O\paren{n^{2p-k}} 
	&\text{($l_n$ has bounded gradient)} \\
	&\le O(N^k) + O\paren{N^{2p-k+1}}
	&\text{(Lemma~\ref{lm:poly sum})} \\	
	\end{align*}
	In order to have the best rate, we balance the two terms $O(N^k)$ and $O\paren{N^{2p-k+1}}$
	\begin{align*}
	k = 2p -k + 1 \Longrightarrow k = p + \frac{1}{2},
	\end{align*}
	That is, $p > -\frac{1}{2}$, because $2p - (p+\frac{1}{2}) > -1$.
	This setting achieves regret in $O(N^{p+\frac{1}{2}})$. 
	Because $w_{1:N} = O(N^{p+1})$, the average regret is in $O(N^{-\frac{1}{2}})$.
\end{proof}
With these lemmas, we are ready to derive the upper bounds of $\RR(p)$  when either $\tilde f_n$ or $\tilde h_n$ is just convex, with some minor modification of Algorithm~\ref{alg:algs}. 
For example, when $\tilde{f}_n$ is only convex, $r_n$ will not be $\alpha \mu_f$ strongly; instead we will concern the strongly convexity of $\sum_{m=1}^{n} w_n r_n$. 
Similarly, if $\tilde{h}_n$ is only convex, the model cannot be updated by FTL as in line 5 of Algorithm~\ref{alg:algs}; instead it has to be updated by FTRL.

In the following, we will derive the rate for \algVI (i.e. $\tilde{f}_n = f_n$ and $\tilde{h} = h$) and assume  $\epsilon_{{\hat \FF}}^w = 0 $ for simplicity. 
The same rate applies to the \algprox when there is no noise. To see this, for example, if $\tilde f_n$ is only convex, we can treat $r_n$  as an additional regularization and we can see
\begin{align*}
\RR(p) = \expectB{\frac{\regretw(\Pi)}{w_{1:N}} }
\le  \frac{1}{w_{1:N}} \expectB{	\underbrace{\sum_{n=1}^N w_n \bar f_n(\pi_n) - \min_{\pi \in \Pi} \sum_{n=1}^N w_n \bar f_n(\pi)}_{
		\regretwp(\Pi)}  + \sum_{n=1}^N w_n r_n(\pi^\star_N)}
\end{align*}
where $\pi^\star_N = \argmin_{\pi \in \Pi} \sum_{n=1}^N \tilde f_n(\pi)$. 
As in the proof of Theorem~\ref{th:weighted performance of practical algorithm}, 
$\regretwp$ is decomposed into several terms: the $\tilde h_n$ part in conjunction with $\sum_{n=1}^N w_n r_n(\pi^\star_N)$ constitute the same $\RR(p)$ part for \algVI, while other terms in $\regretwp$ are kept the same.

\paragraph{Strongly convex $\tilde f_n$ and convex $\tilde h_n$}

Here we assume $p > \frac{1}{2}$. Under this condition, we have
\begin{align*}
\regretw(\Pi)  &= \sum_{n=1}^N O(n^{p-1}) \tilde h_n(\hat F_n) 
&\text{(Lemma~\ref{lm:regret outer})} \\
&= O\paren{N^{p-\frac{1}{2}}}
&\text{(Lemma~\ref{lm:regret inner convex})}
\end{align*}
Because $w_{1:N} = \Omega(N^{p+1})$, the average regret $\RR(p) = O(N^{-3/2})$.

\paragraph{Convex $\tilde f_n$ and strongly convex $\tilde h_n$}
Here we assume $p > 0$. 
Suppose $r_{1:n}$ is $\Theta(n^k)$-strongly convex and $2p - k > 0$. 
Under this condition, we have
\begin{align*}
\regretw(\Pi) &= O(N^k)+ \sum_{n=1}^N O\paren{n^{2p - k}} \tilde h_n(\hat F_{n+1}) &\text{(Lemma~\ref{lm:regret outer convex})} \\
&= O\paren{N^k}+ O\paren{N^{2p-k}}. &\text{(Lemma~\ref{lm:regret inner})}
\end{align*}
We balance the two terms and arrive at
\begin{align*}
k = 2p - k \Longrightarrow k = p,
\end{align*}
which satisfies the condition $2p - k > 0$, if $p > 0$.
Because $w_{1:N} = \Omega(N^{p+1})$, the average regret $\RR(p) = O(N^{-1})$.	

\paragraph{Convex $\tilde f_n$ and convex $\tilde h_n$}
Here we assume $p\geq 0$. 
Suppose $r_{1:n}$ is $\Theta(n^k)$-strongly convex and $2p - k > -\frac{1}{2}$.
Under this condition, we have 
\begin{align*}
\regretw(\Pi) &= O(N^k)+ \sum_{n=1}^N O\paren{n^{2p - k}} \tilde h_n(\hat F_{n+1}) &\text{(Lemma~\ref{lm:regret outer convex})} \\
&= O\paren{N^k}+ O\paren{N^{2p -k +\frac{1}{2}}} &\text{(Lemma~\ref{lm:regret outer convex})} 
\end{align*}
We balance the two terms and see
\begin{align*}
k = 2p - k + \frac{1}{2} \Longrightarrow k = p + \frac{1}{4},
\end{align*}	
which satisfies the condition $2p - k > -\frac{1}{2}$, if $p \ge 0$.
Because $w_{1:N} = \Omega(N^{p+1})$, the average regret  $\RR(p) = O(N^{-3/4})$.

\paragraph{Convex $f_n$ without model}
Setting $p = 0$ in Lemma~\ref{lm:regret inner convex}, we have
$\regret(\Pi) = O(N^{\frac{1}{2}})$. 

Therefore, the average regret becomes $O(N^{-\frac{1}{2}})$.

\paragraph{Stochastic problems}
The above rates assume that there is no noise in the gradient and the model is realizable. If the general case, it should be selected $k = p+1$ for strongly convex $\tilde{f}_n$ and $k = p + \frac{1}{2}$ for convex $\tilde{f}_n$. The convergence rate will become $O(\frac{ \epsilon_{{\hat \FF}} + \sigma_g^2 + \sigma_{\hat{g}^2}}{N} )$ and  $O(\frac{ \epsilon_{{\hat \FF}} + \sigma_g^2 + \sigma_{\hat{g}^2}}{\sqrt N} )$, respectively.

\section{Connection with Stochastic Mirror-Prox} \label{app:stochastic mirror-prox}

\def\VI{\mathrm{VI}}
\def\DVI{\mathrm{DVI}}
\def\SOL{\mathrm{SOL}}
\def\DSOL{\mathrm{DSOL}}
\def\Prox{\mathrm{Prox}}
\def\ERR{\mathrm{ERR}}

In this section, we discuss how \algprox generalizes stochastic \mirrorprox by~\citet{juditsky2011solving,nemirovski2004prox} and how the new Stronger FTL Lemma~\ref{lm:stronger FTL} provides more constructive and flexible directions to design new algorithms. 

\subsection{Variational Inequality Problems}

\mirrorprox~\citep{nemirovski2004prox} was first proposed  to solve VI problems with monotone operators, which is a unified framework of ``convex-like'' problems, including convex optimization, convex-concave saddle-point problems, convex multi-player games, and equilibrium problems, etc (see~\citep{facchinei2007finite} for a tutorial). Here we give the definition of VI problems and review some of its basic properties. 

\begin{definition} \label{def:VI problems}
	Let $\XX$ be a convex subset in an Euclidean space $\EE$ and let $F:\XX \to \EE$ be an operator, the \emph{VI problem}, denoted as $\VI(\XX,F)$, is to find a vector $x^* \in \XX$ such that 
	\begin{align*}
	\lr{F( x^*)}{x - x^*} \geq 0, \qquad \forall x \in \XX.
	\end{align*}
	The set of solutions to this problem is denoted as $\SOL(\XX,F)$
\end{definition}

It can be shown that, when $\XX$ is also compact, then $\VI(\XX,F)$ admits at least one solution~\citep{facchinei2007finite}. 
For example, if  $F(x) = \nabla f(x)$ for some  function $f$, then solving $\VI(\XX,F)$ is equivalent to finding stationary points.

VI problems are, in general, more difficult than optimization. To make the problem more structured, we will consider the problems equipped with some \emph{general convex} structure, which we define below. When $F(x) = \nabla f(x)$ for some convex function $f$, the below definitions agree with their convex counterparts. 

\begin{definition} \label{def:monotone properties}
	An operator $F:\XX \to \EE$ is called
	\begin{enumerate}
		\item \emph{pseudo-monotone} on $\XX$ if for all $x, y \in \XX$, 
		\begin{align*}
		\lr{F(y)}{x-y} \geq 0 \implies  \lr{F(x)}{x-y} \geq 0 
		\end{align*}
		\item \emph{monotone} on $\XX$ if for all $x, y \in \XX$, 
		\begin{align*}
		\lr{F(x) - F(y)}{x-y} \geq 0 
		\end{align*}
		\item  \emph{strictly monotone} on $\XX$ if for all $x, y \in \XX$, 
		\begin{align*}
		\lr{F(x) - F(y)}{x-y} > 0 
		\end{align*}
		\item $\mu$-\emph{strongly monotone} on $\XX$ if for all $x, y \in \XX$, 
		\begin{align*}
		\lr{F(x) - F(y)}{x-y} \geq  \mu \norm{x-y}^2
		\end{align*}
	\end{enumerate}
\end{definition}

A VI problem is a special case of general equilibrium problems~\citep{bianchi1996generalized}. Therefore, for a VI problem, we can also define its dual VI problem. 
\begin{definition} \label{def:dual VI problems}
	Given a VI problem $\VI(\XX, F)$, the \emph{dual VI problem}, denoted as $\DVI(\XX,F)$, is to find a vector $x^*_D \in \XX$ such that 
	\begin{align*}
	\lr{F(x)}{x-x^*_D} \geq 0, \qquad \forall x \in \XX.
	\end{align*}
	The set of solutions to this problem is denoted as $\DSOL(\XX,F)$. 
\end{definition}

The solution sets of the primal and the dual VI problems are connected as given in next proposition, whose proof e.g. can be found in~\citep{konnov2000duality}.
\begin{proposition} \label{pr:primal and dual VIs}
	\mbox{}
	\begin{enumerate}
		\item If $F$ is pseudo-monotone, then $\SOL(\XX,F) \subseteq \DSOL(\XX,F)$.
		\item If $F$ is continuous, then $\DSOL(\XX,F) \subseteq \SOL(\XX,F)$.
	\end{enumerate}
\end{proposition}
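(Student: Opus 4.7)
The plan is to prove the two set inclusions separately: part (1) is essentially a direct unfolding of the definition of pseudo-monotonicity, whereas part (2) is the classical Minty-style argument that requires continuity to pass to the limit along a line segment. Both parts only use convexity of $\XX$ (no compactness) and the characterizations in Definitions~\ref{def:VI problems} and~\ref{def:dual VI problems}.

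For part (1), I would take any $x^\ast \in \SOL(\XX, F)$, so that $\lr{F(x^\ast)}{x - x^\ast} \geq 0$ for every $x \in \XX$. Applying the pseudo-monotonicity hypothesis with the roles $y = x^\ast$ and $x = x$ converts this inequality into $\lr{F(x)}{x - x^\ast} \geq 0$ for every $x \in \XX$, which is exactly the statement that $x^\ast \in \DSOL(\XX, F)$. No limiting argument is required here.

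For part (2), let $x^\ast_D \in \DSOL(\XX, F)$, so $\lr{F(x)}{x - x^\ast_D} \geq 0$ for every $x \in \XX$. To promote this into a primal solution, I would fix an arbitrary $y \in \XX$ and, using convexity of $\XX$, form the segment $x_t \coloneqq (1-t) x^\ast_D + t y \in \XX$ for $t \in (0,1]$. Plugging $x_t$ into the dual inequality and dividing by $t > 0$ gives $\lr{F(x_t)}{y - x^\ast_D} \geq 0$. Continuity of $F$ then lets me send $t \to 0^+$ and obtain $\lr{F(x^\ast_D)}{y - x^\ast_D} \geq 0$; since $y \in \XX$ was arbitrary, this shows $x^\ast_D \in \SOL(\XX, F)$.

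The only genuine subtlety is in the second part: one must be careful that continuity of $F$ (at $x^\ast_D$ along the segment, which is all that is used) is what makes the limit $F(x_t) \to F(x^\ast_D)$ legitimate, since the dual inequality gives no information about $F$ at $x^\ast_D$ itself a priori. Part (1), by contrast, is obstacle-free and purely algebraic. No compactness of $\XX$ or differentiability of $F$ is needed, and the two parts combine to give the standard corollary that a continuous and pseudo-monotone $F$ has $\SOL(\XX,F) = \DSOL(\XX,F)$.
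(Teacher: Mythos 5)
Your proof is correct. The paper itself gives no proof of this proposition --- it simply defers to the cited reference (Konnov, 2000) --- and your argument is exactly the standard one that such references contain: part (1) is the direct unfolding of pseudo-monotonicity with $y = x^\ast$, and part (2) is the classical Minty argument, where convexity of $\XX$ (assumed in Definition~\ref{def:VI problems}) guarantees $x_t \in \XX$ and continuity of $F$ justifies the limit $t \to 0^+$. Nothing is missing.
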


However, unlike primal VI problems, a dual VI problem does not always have a solution even if $\XX$ is compact. To guarantee the existence of solution to $\DSOL(\XX,F)$ it needs stronger structure, such as pseudo-monotonicity as shown in Proposition~\ref{pr:primal and dual VIs}.
Like solving primal VI problems is related to finding \emph{local} stationary points in optimization, solving dual VI problems is related to finding \emph{global} optima when $F(x) =  \nabla f(x)$ for some function $f$~\citep{komlosi1999stampacchia}.

\subsection{Stochastic Mirror-Prox}

Stochastic \mirrorprox solves a monotone VI problem by indirectly finding a solution to its dual VI problem using stochastic first-order oracles. This is feasible because of Proposition~\ref{pr:primal and dual VIs}. The way it works is as follows: 
given an initial condition $x_1 \in \XX$, it initializes $\hat{x}_1 = x_1$; 
at iteration $n$, it receives unbiased estimates $g_n$ and $\hat{g}_n$ satisfying $\E[g_n] = F(x_n)$ and $\E[\hat{g}_n] = F(\hat{x}_n)$ and then performs updates
\begin{align}  \label{eq:stochastic mirror prox (original)}
\begin{split}
x_{n+1} &= \Prox_{\hat{x}_n }(\gamma_n \hat{g}_n )\\
\hat{x}_{n+1} &= \Prox_{\hat{x}_n }(\gamma_n  g_{n+1} )   
\end{split}
\end{align}
where $\gamma_n > 0$ is the step size, and the proximal operator $\Prox$ is defined as 
\begin{align*}
\Prox_y(g) = \argmin_{x\in\XX} \lr{g}{x} + B_\omega(x||y)
\end{align*}
and $B_\omega(x||y) = \omega(x) - \omega(y) - \lr{\nabla \omega(y)}{x-y}$ is the Bregman divergence with respect to an $\alpha$-strongly convex function $\omega$. At the end, stochastic \mirrorprox outputs
\begin{align*}
\bar{x}_N = \frac{\sum_{n=1}^{N} \gamma_n x_n }{\gamma_{1:n}}
\end{align*}
as the final decision.

For stochastic \mirrorprox, the accuracy of an candidate solution $x$ is based on the error
\begin{align*}
\ERR(x) \coloneqq  \max_{y \in \XX} \lr{F(y)}{x-y}.
\end{align*}
This choice of error follows from the optimality criterion of the dual VI problem in Definition~\ref{def:dual VI problems}. That is, $\ERR(x) \leq 0 $ if and only if $x \in \DSOL(\XX, F)$. From Proposition~\ref{pr:primal and dual VIs}, we know that if the problem is pseudo-monotone, a dual solution is also a primal solution. Furthermore, we can show an approximate dual solution is also an approximate primal solution.

Let $\Omega^2 = \max_{x,y \in \XX} B_\omega (x||y) $.
Now we recap the main theorem of~\citep{juditsky2011solving}.\footnote{Here simplify the condition they made by assuming $F$ is Lipschitz continuous and $g_n$ and $\hat{g}_n$ are unbiased.}
\begin{theorem} \label{th:stochastic mirror prox performance (original)}
	{\normalfont \citep{juditsky2011solving}} 
	Let $F$ be monotone. Assume $F$ is $L$-Lipschitz continuous, i.e.
	\begin{align*}
	\norm{F(x) - F(y)}_* \leq L\norm{x - y} \qquad \forall x,y\in\XX
	\end{align*}
	and for all $n$, the sampled vectors are unbiased and have bounded variance, i.e. 
	\begin{align*}
	\E[g_n] = F(x_n), &\qquad \E[\hat{g}_n] = F(\hat{x}_n)\\
	\E[\norm{g_n - F(x_n)}_*^2 ]  \leq \sigma^2, &\qquad 
	\E[\norm{\hat{g}_n - F(\hat{x}_n)}_*^2 ]  \leq \sigma^2
	\end{align*}
	Then for $\gamma_n = \gamma$ with $0 < \gamma_n \leq \frac{\alpha}{\sqrt{3}L}$, it satisfies that 
	\begin{align*}
	\E[\ERR(\bar{x}_N) ] \leq \frac{2\alpha \Omega^2}{N \gamma} +  \frac{ 7 \gamma \sigma^2 }{\alpha} 
	\end{align*}
	In particular, if 
	$
	\gamma = \min \{ \frac{\alpha}{\sqrt{3} L }, \alpha \Omega\sqrt{\frac{2}{7 N \sigma^2}}  \}
	$, then 
	\begin{align*}
	\E[\ERR(\bar{x}_N) ] \leq \max \left\{ \frac{7}{2}\frac{\Omega^2 L }{\alpha }\frac{1}{N}, \Omega  \sqrt{\frac{14 \sigma^2}{ N }}    \right\}
	\end{align*}
\end{theorem}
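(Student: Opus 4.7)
The plan is to prove this theorem as an application of the FTL-style analysis developed for \algprox. I recognize that stochastic \mirrorprox is effectively a convex-mode instance of \algprox with uniform weights $w_n = \gamma$ and a single initial Bregman regularizer $r_1(\cdot) = B_\omega(\cdot \| x_1)/\gamma$ (strongly convex with modulus $\alpha/\gamma$), applied to the virtual online learning problem where at round $n$ the learner plays $x_n$ against the linear loss $\ell_n(x) = \gamma \lr{F(x_n)}{x}$, using $\hat{g}_n$ at $\hat{x}_n$ as the prediction of the as-yet-unseen gradient $g_{n+1}$.

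First, I would reduce $\ERR(\bar{x}_N)$ to a weighted regret. Using monotonicity of $F$ and $\bar{x}_N = \gamma_{1:N}^{-1}\sum_n \gamma_n x_n$, for any fixed $y \in \XX$,
\[ \lr{F(y)}{\bar{x}_N - y} \leq \tfrac{1}{\gamma_{1:N}}\sum_n \gamma_n \lr{F(x_n)}{x_n - y}. \]
Writing $F(x_n) = g_n + (F(x_n) - g_n)$ and noting that the second term is a zero-mean martingale difference against any comparator that does not depend on the noise, I obtain $\E[\ERR(\bar{x}_N)] \leq \gamma_{1:N}^{-1}\,\E[\regretw(\XX)]$ for the virtual problem. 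Next, I would invoke Lemma~\ref{lm:pathwise regret bound} specialized to uniform weights $w_n = \gamma$ (with the role of $\alpha \mu_f$ played by $\alpha/\gamma$, the strong-convexity modulus of $r_1$), which yields
\[ \E[\regretw(\XX)] \leq \alpha \Omega^2 + \tfrac{\gamma^2}{\alpha}\sum_n \E[\norm{g_n - \hat{g}_{n-1}}_*^2] - \tfrac{\alpha}{2}\sum_n \E[\norm{x_n - \hat{x}_n}^2], \]
where the $\alpha \Omega^2$ term absorbs the initial regularizer via $B_\omega(y \| x_1) \leq \Omega^2$. Then I would apply a version of Lemma~\ref{lm:expected quadratic error} adapted to this setting, together with $L$-Lipschitz continuity of $F$, to get $\E[\norm{g_n - \hat{g}_{n-1}}_*^2] \leq 8\sigma^2 + 2L^2 \E[\norm{x_n - \hat{x}_{n-1}}^2]$. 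The step-size condition $\gamma \leq \alpha/(\sqrt{3}L)$ makes $2\gamma^2 L^2/\alpha \leq \alpha/2$, which is exactly enough to absorb the Lipschitz piece into the negative stabilization term from the Stronger FTL lemma, leaving $\alpha\Omega^2 + O(N\gamma^2 \sigma^2/\alpha)$; dividing by $\gamma_{1:N} = N\gamma$ gives the stated rate.

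The main obstacle is the discrepancy between MP's recursive proximal updates, where the Bregman reference $\hat{x}_n$ is refreshed every round, and the cumulative FTRL form of \algprox, which uses a single fixed regularizer centered at $x_1$. For quadratic $\omega$ the two coincide, but for general Bregman divergences one must either use the three-point identity $B_\omega(y \| \hat{x}_{n+1}) = B_\omega(y \| \hat{x}_n) - B_\omega(\hat{x}_{n+1} \| \hat{x}_n) + \lr{\nabla \omega(\hat{x}_n) - \nabla \omega(\hat{x}_{n+1})}{y - \hat{x}_{n+1}}$ to telescope the refreshed divergences back into a single $B_\omega(y \| x_1) \leq \Omega^2$ contribution, or extend Lemma~\ref{lm:stronger FTL} to admit moving regularizers through its $-\Delta_n$ terms. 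A secondary subtlety is the martingale handling above, since the $y$ attaining the max in $\ERR$ is data-dependent; one sidesteps this by noting that for monotone continuous $F$ on compact $\XX$ the set of dual solutions is nonempty, so a fixed comparator suffices. Relative to the algebraic proof of \citet{juditsky2011solving}, this FTL-based route makes the contribution of each ingredient---noise, Lipschitz constant, step size, and Bregman diameter---transparent inside a single clean inequality.
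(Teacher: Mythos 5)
First, be aware that the paper does not prove this statement: Theorem~\ref{th:stochastic mirror prox performance (original)} is quoted from \citet{juditsky2011solving}, whose algebraic proof is precisely what the paper positions its FTL analysis against. The closest in-paper analogue is Theorem~\ref{th:vi error of algprox}, which follows essentially your route (reduce $\ERR$ to a weighted regret via Lemma~\ref{lm:regret and VI err}, apply the Stronger FTL lemma, absorb the Lipschitz piece into the negative stabilization term) and, for $p=0$, arrives at a bound of the form $O(1)/N + 2\sqrt{6}\,\sigma\Omega/\sqrt{N}$ --- which the paper itself describes as matching Juditsky's bound only ``with a slightly worse constant.'' Your sketch inherits this limitation: a factor-of-$8$ variance bound and the budget you propose will yield a valid $O\paren{\alpha\Omega^2/(N\gamma) + \gamma\sigma^2/\alpha}$ rate, but not the specific constants $2$, $7$, $\sqrt{3}$ and $\sqrt{14}$ asserted in the statement. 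Indeed your own arithmetic does not close: $\gamma \le \alpha/(\sqrt{3}L)$ gives $2\gamma^2L^2/\alpha \le 2\alpha/3$, which is \emph{not} $\le \alpha/2$, so the Lipschitz piece is not fully absorbed by the stabilization term with the coefficients you wrote down (and the indices do not even match: your positive term carries $\norm{x_n-\hat{x}_{n-1}}^2$ while the negative term carries $\norm{x_n-\hat{x}_n}^2$). At best this proves a weaker version of the theorem with degraded constants.

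Second, the obstacle you yourself flag as ``the main obstacle'' is a genuine gap that you do not close. The theorem concerns the recursive proximal updates \eqref{eq:stochastic mirror prox (original)}, in which the Bregman reference point is refreshed every round, whereas Lemma~\ref{lm:pathwise regret bound} analyzes the cumulative FTRL-form iterate. Proposition~\ref{pr:equivalence between algprox and mirrorprox} establishes the equivalence of the two only when $\XX$ is \emph{unconstrained}, but the theorem's setting forces a compact (hence constrained) $\XX$: $\Omega^2=\max_{x,y\in\XX}B_\omega(x||y)$ must be finite and $\ERR$ takes a max over $\XX$. On a constrained set the lazy (FTRL) and greedy (Prox) updates produce different iterates even for quadratic $\omega$, so as written your argument bounds the error of a different algorithm from the one in the statement, and neither of your two candidate repairs (three-point telescoping, or a moving-regularizer extension of Lemma~\ref{lm:stronger FTL}) is actually carried out. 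A smaller but real issue is your handling of the data-dependent comparator: replacing the maximizer in $\ERR(\bar{x}_N)=\max_{y\in\XX}\lr{F(y)}{\bar{x}_N-y}$ by a fixed dual solution does not bound $\ERR$, since the max must range over all of $\XX$. The clean way out is the one the paper takes in Lemma~\ref{lm:regret and VI err} and Theorem~\ref{th:vi error of algprox}: keep the max inside the expectation and pay for it with a pathwise regret bound in terms of $\norm{g_n-\hat{g}_n}_*^2$, which never requires the noise to vanish against the comparator.
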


If the problem is deterministic, the original bound of \citet{nemirovski2004prox} is as follows. 
\begin{theorem} \label{th:mirror prox performance (original)}
	{\normalfont \citep{nemirovski2004prox}}
	Under the same assumption in Theorem~\ref{th:stochastic mirror prox performance (original)}, suppose the problem is deterministic. For $\gamma \leq \frac{\alpha}{\sqrt{2} L}$,
	\begin{align*}
	\ERR(\bar{x}_N) \leq \sqrt{2}  \frac{ \Omega^2 L}{ \alpha} \frac{1}{N}
	\end{align*}
\end{theorem}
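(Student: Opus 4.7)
The plan is to carry out the classical \mirrorprox telescoping analysis, using the three-point identity for Bregman divergences together with the step-size restriction $\gamma \leq \alpha/(\sqrt{2}L)$ to absorb a cross term arising from Lipschitz continuity of $F$, and then invoke monotonicity to pass from gradient inner products at the iterates to the dual gap $\ERR(\bar x_N)$.

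First I would reduce the dual error to a sum of per-iterate quantities. Since $\gamma$ is constant, $\bar x_N$ is the uniform average of the $x_n$, and monotonicity of $F$ yields $\lr{F(y)}{x_n - y} \leq \lr{F(x_n)}{x_n - y}$ for every $y \in \XX$, so that
\[ \ERR(\bar x_N) \leq \frac{1}{N\gamma}\,\max_{y \in \XX}\,\sum_{n} \gamma\lr{F(x_n)}{x_n - y}. \]
It therefore suffices to bound the right-hand side sum by $\Omega^2$ for each $y$. Next I would derive a per-step inequality by applying the three-point identity for the Bregman proximal map to the extrapolation update $x_{n+1} = \Prox_{\hat x_n}(\gamma F(\hat x_n))$ at test point $z = \hat x_{n+1}$ and to the correction update $\hat x_{n+1} = \Prox_{\hat x_n}(\gamma F(x_{n+1}))$ at $z = y$. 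Summing the two bounds and regrouping via the decomposition $\lr{F(x_{n+1})}{x_{n+1}-y} = \lr{F(x_{n+1})}{\hat x_{n+1}-y} + \lr{F(\hat x_n)}{x_{n+1}-\hat x_{n+1}} + \lr{F(x_{n+1})-F(\hat x_n)}{x_{n+1}-\hat x_{n+1}}$ gives
\[ \lr{\gamma F(x_{n+1})}{x_{n+1} - y} \leq B_\omega(y\|\hat x_n) - B_\omega(y\|\hat x_{n+1}) + \gamma\lr{F(x_{n+1}) - F(\hat x_n)}{x_{n+1} - \hat x_{n+1}} - B_\omega(x_{n+1}\|\hat x_n) - B_\omega(\hat x_{n+1}\|x_{n+1}). \]

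The crux is absorbing the cross term into the two negative Bregman penalties. By Lipschitz continuity, Cauchy--Schwarz, and Young's inequality balanced at $\alpha/2$, the cross term is bounded by $(\gamma^2 L^2/\alpha)\norm{x_{n+1}-\hat x_n}^2 + (\alpha/4)\norm{x_{n+1}-\hat x_{n+1}}^2$. Because $\omega$ is $\alpha$-strongly convex we have $B_\omega(u\|v) \geq (\alpha/2)\norm{u-v}^2$, and the condition $\gamma \leq \alpha/(\sqrt{2}L)$, equivalently $\gamma^2 L^2/\alpha \leq \alpha/2$, ensures both quadratic residuals are dominated by their matching Bregman penalties. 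The per-step inequality then collapses to $\lr{\gamma F(x_{n+1})}{x_{n+1} - y} \leq B_\omega(y\|\hat x_n) - B_\omega(y\|\hat x_{n+1})$, which telescopes to $\sum_n \gamma\lr{F(x_n)}{x_n - y} \leq B_\omega(y\|\hat x_1) \leq \Omega^2$. Combined with the reduction above and the maximal choice $\gamma = \alpha/(\sqrt{2}L)$, this delivers $\ERR(\bar x_N) \leq \sqrt{2}\,\Omega^2 L/(\alpha N)$, as claimed.

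The main obstacle is the tight constant $\sqrt{2}$ in the step-size bound: the Young split must be arranged so that both quadratic residuals simultaneously match the $\alpha/2$ strong-convexity constant of $\omega$, and any looser balancing shrinks the admissible range for $\gamma$ and inflates the final constant. A cleaner alternative, which the paper highlights its framework subsumes, is to recast the extrapolation and correction as consecutive FTRL-with-prediction steps and apply the Stronger FTL Lemma~\ref{lm:stronger FTL}: the negative $\Delta_n$ term it surfaces is exactly the stabilizer that absorbs the cross term, turning the algebraic Young/strong-convexity manipulation into a structural cancellation.
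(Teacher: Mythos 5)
Your argument is correct and establishes exactly the stated bound, but note that the paper does not prove this theorem: it quotes it from Nemirovski (2004) as the classical result whose ``algebraic proof'' the paper's FTL machinery is meant to replace. What you have written is a faithful reconstruction of that classical extragradient argument, and every step checks out --- the reduction of $\ERR(\bar x_N)$ to $\max_{y\in\XX}\frac{1}{N\gamma}\sum_n \gamma\lr{F(x_n)}{x_n-y}$ via monotonicity (this is the paper's Lemma~\ref{lm:regret and VI err}), the two prox inequalities, the regrouping of $\lr{F(x_{n+1})}{x_{n+1}-y}$, the absorption of the cross term under $\gamma\le\alpha/(\sqrt{2}L)$, and the telescoping to $\Omega^2/(N\gamma)$. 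The route the paper itself takes for this family of results is different: it recasts the extrapolation and correction steps as FTL-with-prediction updates (Proposition~\ref{pr:equivalence between algprox and mirrorprox}), applies the Stronger FTL Lemma~\ref{lm:stronger FTL}, and in the deterministic specialization of Theorem~\ref{th:vi error of algprox} obtains $\ERR(\bar x_N)\le L\Omega^2/N$ under the weaker condition $\eta\le\alpha/L$, i.e., the tighter constant of Theorem~\ref{th:mirror prox performance (recent)}; what that buys is a structural cancellation (the negative $\Delta_n$ terms) in place of the Young/strong-convexity bookkeeping, plus immediate extensions to non-uniform weights and inexact vector fields. One correction to your closing remark: in your split the two residuals are $\gamma^2L^2/\alpha$ and $\alpha/4$, so they do \emph{not} simultaneously match the $\alpha/2$ strong-convexity budget --- the second is slack. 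Rebalancing Young's inequality at $\alpha$ instead of $\alpha/2$ makes the second residual tight, \emph{enlarges} the admissible range to $\gamma\le\alpha/L$, and removes the $\sqrt{2}$ entirely; the $\sqrt{2}$ in the original statement is an artifact of the particular balancing, not a barrier, which is exactly the slack the paper's FTL-style derivation eliminates.
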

Unlike the uniform scheme above, a recent analysis by~\citet{ho2017exploiting} also provides a performance bound the weighted average version of \mirrorprox when the problem is deterministic.
\begin{theorem} \label{th:mirror prox performance (recent)}
	{\normalfont \citep{ho2017exploiting}}
	Under the same assumption in Theorem~\ref{th:stochastic mirror prox performance (original)},
	suppose the problem is deterministic.
	Let $\{w_n \geq 0\}$ be a sequence of weights and let the step size to be $\gamma_n = \frac{\alpha}{L}\frac{w_{1:n}}{\max_m w_m}$. 
	\begin{align*}
	\ERR(\bar{x}_N) \leq \frac{\Omega^2 L}{\alpha} \frac{\max_n w_n}{w_{1:N}}
	\end{align*}
\end{theorem}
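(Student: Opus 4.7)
The plan is to adapt Nemirovski's classical Mirror-Prox analysis to a weighted-averaging setting. At round $n$, each of the two proximal updates satisfies a three-point Bregman inequality: for every $u \in \XX$, the definition of $\hat x_{n+1} = \Prox_{\hat x_n}(\gamma_n F(x_{n+1}))$ gives
\[
\gamma_n \lr{F(x_{n+1})}{\hat x_{n+1}-u} \le B_\omega(u||\hat x_n) - B_\omega(u||\hat x_{n+1}) - B_\omega(\hat x_{n+1}||\hat x_n),
\]
while the definition of $x_{n+1} = \Prox_{\hat x_n}(\gamma_n F(\hat x_n))$, specialized at $u = \hat x_{n+1}$, gives
\[
\gamma_n \lr{F(\hat x_n)}{x_{n+1}-\hat x_{n+1}} \le B_\omega(\hat x_{n+1}||\hat x_n) - B_\omega(\hat x_{n+1}||x_{n+1}) - B_\omega(x_{n+1}||\hat x_n).
\]
Summing and rearranging yields the canonical one-round Mirror-Prox bound with a cross term $\gamma_n\lr{F(x_{n+1})-F(\hat x_n)}{x_{n+1}-\hat x_{n+1}}$ and two negative Bregman penalties $-B_\omega(x_{n+1}||\hat x_n)$, $-B_\omega(\hat x_{n+1}||x_{n+1})$.

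Next, by Cauchy-Schwarz, the Lipschitz assumption $\norm{F(x_{n+1})-F(\hat x_n)}_* \le L\norm{x_{n+1}-\hat x_n}$, Young's inequality, and the strong-convexity lower bound $B_\omega(y||z) \ge \tfrac{\alpha}{2}\norm{y-z}^2$, the cross term is absorbed completely by the two negative Bregman terms provided $\gamma_n L \le \alpha$. The per-round inequality then collapses to $\gamma_n \lr{F(x_{n+1})}{x_{n+1}-u} \le B_\omega(u||\hat x_n) - B_\omega(u||\hat x_{n+1})$. Monotonicity of $F$ converts the left-hand side into $\gamma_n \lr{F(u)}{x_{n+1}-u}$, and telescoping over $n=1,\dots,N$ with $B_\omega(u||\hat x_1) \le \Omega^2$ gives $\sum_{n=1}^{N} \gamma_n \lr{F(u)}{x_{n+1}-u} \le \Omega^2$ for every $u \in \XX$.

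Finally, setting $\bar x_N := \sum_n \gamma_n x_{n+1}/\gamma_{1:N}$, linearity in $u$ and maximizing over $u\in\XX$ deliver $\ERR(\bar x_N) \le \Omega^2/\gamma_{1:N}$. Plugging in the prescribed $\gamma_n$ and showing $\gamma_{1:N} \ge \frac{\alpha}{L}\frac{w_{1:N}}{\max_m w_m}$ then produces the claimed $\frac{\Omega^2 L}{\alpha}\frac{\max_m w_m}{w_{1:N}}$ bound. The main obstacle I anticipate is the bookkeeping around the step-size schedule: verifying that $\gamma_n$ simultaneously satisfies the per-round feasibility $\gamma_n L \le \alpha$ and induces the claimed normalization $\max_m w_m/w_{1:N}$. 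As stated, $\gamma_n = \tfrac{\alpha}{L}\tfrac{w_{1:n}}{\max_m w_m}$ only satisfies $\gamma_n L \le \alpha$ in degenerate cases, so I would expect the intended schedule is effectively $\gamma_n \propto w_n/\max_m w_m$, which cleanly yields both properties. Reconciling the weighted telescoping with this normalization is the step that distinguishes this weighted result from the uniform Nemirovski bound.
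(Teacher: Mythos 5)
Your argument is correct in substance, but it takes a genuinely different route from the one this paper uses for results of this type. The paper does not re-derive Theorem~\ref{th:mirror prox performance (recent)} by the classical prox algebra at all: it states it as a citation of \citet{ho2017exploiting}, and then recovers the same deterministic bound as a special case of its own FTL-style machinery. Concretely, the paper first shows (Proposition~\ref{pr:equivalence between algprox and mirrorprox}) that the two prox steps are an instance of the FTL-with-prediction update \eqref{eq:practical policy update}, then bounds the weighted regret via the Stronger FTL Lemma (Lemma~\ref{lm:stronger FTL}, through Lemma~\ref{lm:regret addon}), converts regret to $\ERR(\bar{x}_N)$ using monotonicity (Lemma~\ref{lm:regret and VI err}), and specializes Theorem~\ref{th:vi error of algprox} to the deterministic case to match the Ho--Nguyen bound. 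Your three-point Bregman inequalities, cross-term absorption under $\gamma_n L \le \alpha$, and telescoping are exactly the ``algebraic proof'' of \citet{nemirovski2004prox} that the paper is deliberately replacing; what the paper's route buys is that the same regret decomposition extends with no extra work to non-uniform weights, stochastic gradients, and an online-learned vector field $\nabla_2\hat{F}_n$, whereas your route is the more self-contained and elementary derivation for this specific deterministic statement. You are also right to flag the step size: as printed, $\gamma_n = \frac{\alpha}{L}\frac{w_{1:n}}{\max_m w_m}$ violates the per-round feasibility condition $\gamma_n L \le \alpha$ whenever more than one weight is nonzero (since $w_{1:N} \ge \max_m w_m$), and the claimed bound $\frac{\Omega^2 L}{\alpha}\frac{\max_n w_n}{w_{1:N}}$ is precisely what your telescoping yields for $\gamma_n = \frac{\alpha}{L}\frac{w_n}{\max_m w_m}$, so the statement's $w_{1:n}$ should be read as $w_n$; your diagnosis and your resolution of the weighted normalization are both sound.
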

Theorem~\ref{th:mirror prox performance (recent)} (with $w_n = w$) tightens Theorem~\ref{th:stochastic mirror prox performance (original)} and Theorem~\ref{th:mirror prox performance (original)} by a constant factor.

\subsection{Connection with \algprox}

To relate stochastic \mirrorprox and \algprox, we first rename the variables in~\eqref{eq:stochastic mirror prox (original)} by setting $\hat{x}_{n+1} \coloneqq \hat{x}_{n}$  and $\gamma_{n+1} \coloneqq  \gamma_n$
\begin{align*} 
\begin{split}
x_{n+1} &= \Prox_{\hat{x}_n }(\gamma_n \hat{g}_n )\\
\hat{x}_{n+1} &= \Prox_{\hat{x}_n }(\gamma_n  g_{n+1} )   
\end{split}
\qquad \qquad \Longleftrightarrow
\begin{split}
x_{n+1} &= \Prox_{\hat{x}_{n+1} }(\gamma_{n+1} \hat{g}_{n+1} )\\
\hat{x}_{n+2} &= \Prox_{\hat{x}_{n+1} }(\gamma_{n+1}  g_{n+1} )   
\end{split}
\end{align*}
and then reverse the order of updates and write them as
\begin{align} \label{eq:stochastic mirror prox (new index)}
\begin{split}
\hat{x}_{n+1} &= P_{\hat{x}_{n} }(\gamma_{n}  g_{n} ) \\
x_{n+1} &= P_{\hat{x}_{n+1} }(\gamma_{n+1} \hat{g}_{n+1} )
\end{split}
\end{align}

Now we will show that the update in~\eqref{eq:stochastic mirror prox (new index)} is a special case of~\eqref{eq:practical policy update}, which we recall below
\begin{align} \tag{\ref{eq:practical policy update}}
\begin{split}
\hat {\pi}_{n+1} &= \argmin_{\pi \in \Pi} \sum_{m=1}^n w_m\big(\lr{g_m}{\pi} + r_m(\pi) \big),\\
\pi_{n+1} &= \argmin_{\pi \in \Pi} \sum_{m=1}^n w_m\big(\lr{g_m}{\pi} + r_m(\pi) \big) + w_{n+1} \lr{\hat g_{n+1}}{\pi},
\end{split}
\end{align}
That is, we will show that $x_n = \pi_n$ and $\hat{x} = \hat{\pi}_n$ under certain setting.

\begin{proposition} \label{pr:equivalence between algprox and mirrorprox}
	Suppose $w_n = \gamma_n$, $\hat{F}_n = F$, $r_1(\pi) = B_{\omega}(\pi || \pi_1)$ and $r_n = 0$ for $n > 1$. 
	If $\Pi = \XX$ is unconstrained, then $x_n = \pi_n$ and $\hat{x}_n = \hat{\pi}_n$ as defined in~\eqref{eq:stochastic mirror prox (new index)} and~\eqref{eq:practical policy update}.
\end{proposition}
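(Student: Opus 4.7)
The plan is to induct on $n$, showing $\hat x_n = \hat\pi_n$ and $x_n = \pi_n$ simultaneously. The key enabler is unconstrainedness: when $\Pi = \XX$ is open, both updates admit closed forms via first-order optimality, so I can translate the ``cumulative'' FTRL-style argmin of \algprox into the ``recursive'' mirror-descent update of \mirrorprox by working in the dual variable $\nabla\omega(\cdot)$. The base case $\hat x_1 = x_1 = \pi_1 = \hat\pi_1$ holds by initialization.

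For the inductive step, first I would handle $\hat\pi_{n+1}$. Since $r_m \equiv 0$ for $m>1$, the only regularizer in the \algprox argmin is $w_1 r_1(\pi) = w_1 B_\omega(\pi\|\pi_1)$, whose gradient in $\pi$ is $w_1(\nabla\omega(\pi) - \nabla\omega(\pi_1))$. Zeroing the gradient of the unconstrained objective and plugging in $w_m = \gamma_m$ yields
\begin{align*}
\nabla\omega(\hat\pi_{n+1}) \;=\; \nabla\omega(\pi_1) \;-\; \frac{1}{\gamma_1}\sum_{m=1}^n \gamma_m g_m .
\end{align*}
On the \mirrorprox side, the unconstrained $\Prox$ operator satisfies $\nabla\omega(\hat x_{m+1}) = \nabla\omega(\hat x_m) - \gamma_m g_m$ (again by first-order optimality), which telescopes to $\nabla\omega(\hat x_{n+1}) = \nabla\omega(\hat x_1) - \sum_{m=1}^n \gamma_m g_m$. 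After the natural normalization $\gamma_1 = 1$ (equivalently, absorbing the initial stepsize into the definition of $r_1$), these two dual updates coincide, and strong convexity of $\omega$ -- which makes $\nabla\omega$ a bijection on the interior of its domain -- gives $\hat\pi_{n+1} = \hat x_{n+1}$. The same argument handles $\pi_{n+1}$ versus $x_{n+1}$: the additional linear term $w_{n+1}\langle \hat g_{n+1}, \pi\rangle$ in the \algprox update contributes $-\gamma_{n+1}\hat g_{n+1}$ to the dual variable, precisely matching the second $\Prox$ step of \mirrorprox from $\hat x_{n+1}$ to $x_{n+1}$.

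The main obstacle is essentially bookkeeping, together with recognizing the role of the single anchored Bregman regularizer $w_1 B_\omega(\cdot\|\pi_1)$: it encodes the initial dual state $\nabla\omega(\hat x_1)$ around which \mirrorprox builds its recursion, and nothing more. This collapse works only because $\Pi$ is unconstrained and $\nabla\omega$ is a bijection on its effective domain; in the constrained case one would additionally have to match normal cones at each step, and the strict stepwise equivalence would break down. Verifying that the $1/\gamma_1$ factor above is harmless (either by the normalization convention or by rescaling $r_1$) is the only algebraic subtlety worth flagging.
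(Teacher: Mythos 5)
Your proof is correct and follows essentially the same route as the paper's: induction on $n$, writing the unconstrained first-order optimality conditions of both updates in the mirror variable $\nabla\omega(\cdot)$ and matching them (the paper subtracts consecutive optimality conditions rather than telescoping the full sum, which is a cosmetic difference). Your explicit flagging of the $\gamma_1=1$ normalization (equivalently, weighting $r_1$ so that the regularizer enters with unit coefficient) is a point the paper's proof silently assumes, so it is worth keeping.
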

\begin{proof}
	We prove the assertion by induction.
	For $n=1$, it is trivial, since $\pi_1 = \hat{\pi}_1 = x_1  = \hat{x}_1$. Suppose it is true for $n$. We show it also holds for $n+1$.

	We first show $\hat{x}_{n+1}  = \hat{\pi}_{n+1}$. 
	By the optimality condition of $\hat{\pi}_{n+1}$, it holds that 
	\begin{align*}
	0 &= \sum_{m=1}^{n} w_m g_m+ \nabla \omega (\hat{\pi}_{n+1}) - \nabla \omega(\pi_1)  \\
	&= \left( w_n g_{n}+ \nabla \omega (\hat{\pi}_{n+1}) - \nabla \omega (\hat{\pi}_{n})  \right)
	+ \left( \sum_{m=1}^{n-1} w_m g_m  + \nabla \omega (\hat{\pi}_{n})   - \nabla \omega(\pi_1)  \right) \\
	&=  w_n g_{n}+ \nabla \omega (\hat{\pi}_{n+1}) - \nabla \omega (\hat{\pi}_{n})   
	\end{align*}
	where the last equality is by the optimality condition of $\hat{\pi}_n$. 
	This is exactly the optimality condition of $\hat{x}_{n+1}$ given in~\eqref{eq:stochastic mirror prox (new index)}, as $\hat{x}_n = \hat{\pi}_n$ by induction hypothesis and $w_n = \gamma_n$. 
	Finally, because  $\Prox$ is single-valued, it implies $\hat{x}_{n+1}  = \hat{\pi}_{n+1}$.

	Next we show that $\pi_{n+1} = x_{n+1}$. By optimality condition of $\pi_{n+1}$, it holds that
	\begin{align*}
	0 &= w_{n+1} \hat{g}_{n+1} + \sum_{m=1}^{n} w_m g_m+ \nabla \omega ({\pi}_{n+1}) - \nabla \omega(\pi_1)  \\
	&= \left(  w_{n+1} \hat{g}_{n+1}  + \nabla \omega ({\pi}_{n+1}) - \nabla \omega (\hat{\pi}_{n+1})  \right)
	+ \left( \sum_{m=1}^{n} w_m g_m  + \nabla \omega (\hat{\pi}_{n+1})   - \nabla \omega(\pi_1)  \right) \\
	&=   w_{n+1} \hat{g}_{n+1}  + \nabla \omega ({\pi}_{n+1}) - \nabla \omega (\hat{\pi}_{n+1})  
	\end{align*}
	This is the optimality condition also for $x_{n+1}$, since we have shown that $\hat{\pi}_{n+1} = \hat{x}_{n+1}$. The rest of the argument follows similarly as above.
\end{proof}

In other words, stochastic \mirrorprox is a special case of \algprox, when $\hat{F}_n = F$ (i.e. the update of $\pi_n$ also queries the environment not the simulator) and the regularization is constant. 
The condition that $\XX$ and $\Pi$ are unconstrained is necessary to establish the exact equivalence between $\Prox$-based updates and FTL-based updates. This is a known property in the previous studies on the equivalence between lazy mirror descent and FTRL~\citep{mcmahan2017survey}. Therefore, when $\hat{F}_n = F$, we can view \algprox as a lazy version of \mirrorprox. It has been empirical observed the FT(R)L version sometimes empirically perform better than the $\Prox$ version~\citep{mcmahan2017survey}.

With the connection established by Proposition~\ref{pr:equivalence between algprox and mirrorprox}, we can use a minor modification of the strategy used in Theorem~\ref{th:weighted performance of practical algorithm} to prove the performance of \algprox when solving VI problems. 
To show the simplicity of the FTL-style proof compared with the algebraic proof of~\citet{juditsky2011solving}, below we will prove from scratch but only using the new Stronger FTL Lemma (Lemma~\ref{lm:stronger FTL}).

To do so, we introduce a lemma to relate expected regret and $\ERR(\bar{x}_N)$. 
\begin{lemma} \label{lm:regret and VI err}
	Let $F$ be a monotone operator. For any $\{x_n \in \XX \}_{n=1}^N$ and $\{w_n \geq 0 \}$,
	\begin{align*}
	\E[ \ERR(\bar{x}_N) ]\leq \E\left[ \max_{x \in \XX} \frac{1 }{w_{1:N}} \sum_{n=1}^{N} w_n \lr{F(x_n)}{x_n - x}  \right]
	\end{align*}
	where $
	\bar{x}_N = \frac{\sum_{n=1}^{N} w_n x_n }{w_{1:n}}
	$.
\end{lemma}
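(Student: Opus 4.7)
The plan is to reduce the dual VI error $\ERR(\bar x_N)$ to a quantity that looks like an instantaneous linearization at each iterate $x_n$, by invoking monotonicity of $F$ pointwise, and then interchange the $\max$ with the weighted average.

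First, I would expand $\ERR(\bar x_N)$ using the definition of $\bar x_N$ and linearity of the inner product: for any fixed $x \in \XX$,
\[
\lr{F(x)}{\bar x_N - x} = \frac{1}{w_{1:N}} \sum_{n=1}^{N} w_n \lr{F(x)}{x_n - x}.
\]
Next I would apply the key monotonicity step. Since $F$ is monotone, for every $x, x_n \in \XX$,
\[
\lr{F(x) - F(x_n)}{x - x_n} \geq 0,
\]
which rearranges to $\lr{F(x)}{x_n - x} \leq \lr{F(x_n)}{x_n - x}$. Because $w_n \geq 0$, taking a convex combination preserves the inequality, giving
\[
\lr{F(x)}{\bar x_N - x} \leq \frac{1}{w_{1:N}} \sum_{n=1}^{N} w_n \lr{F(x_n)}{x_n - x}.
\]

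Finally I would take the max over $x \in \XX$ on both sides, obtaining
\[
\ERR(\bar x_N) = \max_{x \in \XX} \lr{F(x)}{\bar x_N - x} \leq \max_{x \in \XX} \frac{1}{w_{1:N}} \sum_{n=1}^{N} w_n \lr{F(x_n)}{x_n - x},
\]
and then take expectation, which yields the claimed inequality. The argument is essentially a one-line application of monotonicity; there is no real obstacle, and in particular we do not need to swap expectation with the $\max$ beyond the trivial direction since the bound already holds pointwise in the sample path before taking $\E[\cdot]$. The only subtlety worth flagging is that monotonicity is used in its ``primal-to-dual'' direction — turning the dual gap $\max_x \lr{F(x)}{\bar x_N - x}$ into a primal-style gap $\max_x \sum_n w_n \lr{F(x_n)}{x_n - x}$ — which is precisely the bridge that lets one later identify this RHS with the weighted regret $\regretwp$ used in the analysis of \algprox.
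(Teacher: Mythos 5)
Your proof is correct and follows essentially the same route as the paper's: both hinge on the single monotonicity inequality $\lr{F(x)}{x_n - x} \leq \lr{F(x_n)}{x_n - x}$ together with linearity of the inner product over the weighted average. The only cosmetic difference is that the paper instantiates $x$ at the maximizer $x^\star$ of the dual gap before applying monotonicity, whereas you prove the bound pointwise in $x$ and then take the max, which is equivalent.
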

\begin{proof}
	Let $x^\star \in \argmax_{x \in \XX} \lr{F(x)}{\bar{x}_N-x}$. By monotonicity, for all $x_n$, 
	$
	\lr{F(x^\star)}{x_n -x^\star } \leq \lr{F(x_n)}{x_n - x^\star} 
	$.
	and therefore
	\begin{align*}
	\E[ \ERR(\bar{x}_N) ] &= \E\left[ \frac{1}{w_{1:N}} \sum_{n=1}^{N} w_n \lr{F(x^\star)}{x_n -x^\star } \right]   \\
	&\leq \E\left[ \frac{1 }{w_{1:N}} \sum_{n=1}^{N} w_n \lr{F(x_n)}{x_n - x^\star} \right] 
	\leq  \E\left[  \max_{x \in \XX} \frac{1 }{w_{1:N}} \sum_{n=1}^{N} w_n \lr{F(x_n)}{x_n - x}  \right]
	\end{align*}
\end{proof}

\begin{theorem} \label{th:vi error of algprox}
	Under the same assumption as in Theorem~\ref{th:stochastic mirror prox performance (original)}. 
	Suppose  $w_n = n^p$ and $r_n(x) =\beta_n B_{\omega}(x || x_n)$, where $\beta_n$ is selected such that $\sum_{n=1}^{N} w_n \beta_n = \frac{1}{\eta} n^k$ for some $ k\geq 0$ and $\eta > 0$. 
	If $k > p$, then  
	\begin{align*}
	\E[ \ERR(\bar{x}_N) ] &\leq  \frac{1}{w_{1:N}} \left(\frac{ \alpha  \Omega^2 }{\eta}   N^k + \frac{3 \sigma^2 \eta }{\alpha}\sum_{n=1}^{N} n^{2p-k} \right) + \frac{O(1)}{w_{1:N}} 
	\end{align*}
	
\end{theorem}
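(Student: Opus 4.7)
The strategy is to transfer the FTL-based analysis from the proof of Theorem~\ref{th:weighted performance of practical algorithm} to the VI setting by means of Lemma~\ref{lm:regret and VI err}. That lemma reduces the task to bounding
$\E[\max_{x \in \XX} \tfrac{1}{w_{1:N}}\sum_{n} w_n \lr{F(x_n)}{x_n - x}]$, and I would split the inner product as $\lr{F(x_n)}{x_n - x} = \lr{g_n}{x_n - x} + \lr{F(x_n) - g_n}{x_n - x}$, treating the first sum as a linearized weighted regret and the second as a stochastic noise correction. By Proposition~\ref{pr:equivalence between algprox and mirrorprox}, the iterates $\{x_n, \hat x_n\}$ coincide with the \algprox iterates driven by the per-round costs $\bar f_n(x) := \lr{g_n}{x} + r_n(x)$ with model prediction $\hat F_n = F$.

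For the linearized part, I would use that $r_n(x_n) = 0$ and $r_n \geq 0$ to write $\sum_n w_n \lr{g_n}{x_n - x} \le \regretwp(\XX) + \sum_n w_n r_n(x)$, where $\regretwp$ is the weighted regret of $\{\bar f_n\}$. Since $B_\omega(x\|x_n) \le \Omega^2$, the last sum is bounded by $\Omega^2 \sum_n w_n \beta_n = \Omega^2 N^k/\eta$, which after division by $w_{1:N}$ produces the first term of the stated bound. To bound $\regretwp(\XX)$, I would invoke an analog of Lemma~\ref{lm:pathwise regret bound} obtained from the Stronger FTL Lemma~\ref{lm:stronger FTL} together with Lemma~\ref{lm:regret addon}: using that $\sum_{m=1}^n w_m \bar f_m$ is $(\alpha n^k / \eta)$-strongly convex in the Bregman sense, this yields
\begin{equation*}
\regretwp(\XX) \leq \tfrac{\eta}{2\alpha}\sum_n n^{2p-k}\|g_n - \hat g_n\|_*^2 \;-\; \tfrac{\alpha}{2\eta}\sum_n (n-1)^k \|x_n - \hat x_n\|^2.
\end{equation*}
Since $\hat F_n = F$, Lemma~\ref{lm:expected quadratic error} specializes to $\E[\|g_n - \hat g_n\|_*^2] \le 4(2\sigma^2 + L^2\,\E[\|x_n - \hat x_n\|^2])$. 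Substituting, the $L^2$-contribution scales like $n^{2p-k}\|x_n - \hat x_n\|^2$ while the negative path term scales like $n^k\|x_n - \hat x_n\|^2$; the hypothesis $k > p$ forces $2p-k < k$, so the positive Lipschitz term is dominated by the negative pathwise term for all $n$ above a fixed threshold, contributing only an $O(1)$ remainder. What remains is $\tfrac{8\eta\sigma^2}{\alpha}\sum_n n^{2p-k}$.

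For the stochastic cross-term $\E[\max_x \sum_n w_n \lr{F(x_n) - g_n}{x_n - x}]$, I would decouple using an auxiliary mirror-descent sequence $y_n$ driven by the martingale noise $\xi_n := g_n - F(x_n)$ and the same Bregman regularizer. This gives $\max_x \sum_n w_n \lr{\xi_n}{x} \le \sum_n w_n \lr{\xi_n}{y_n}$ plus the standard mirror-descent regret bound of order $\tfrac{\eta}{\alpha}\sum_n n^{2p-k}\|\xi_n\|_*^2$. Since $x_n$ and $y_n$ are non-anticipating, $\E[\lr{\xi_n}{x_n - y_n} \mid \FF_{n-1}] = 0$, so only the variance piece survives; together with the term obtained above, the total noise contribution is absorbed into the claimed $\tfrac{3\sigma^2\eta}{\alpha}\sum_n n^{2p-k}$ (after consolidating constants).

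The principal obstacle is precisely this cross-term: because the maximizer $x^\star$ depends on the realized noise $\{g_n\}$, a naive conditional-expectation argument fails, and one must either invoke the auxiliary martingale/mirror-descent coupling above or, alternatively, absorb $\lr{\xi_n}{x}$ via Fenchel–Young against the $(\alpha N^k/\eta)$-strong convexity of $\sum_n w_n r_n$. Either route gives the stated rate, and the $O(1)/w_{1:N}$ residual collects the finite initial terms where the Lipschitz contribution has not yet been dominated by the negative pathwise term. Modulo careful tracking of numerical constants — in particular matching the exact $\alpha$ prefactors in the two summands — this produces the bound claimed in Theorem~\ref{th:vi error of algprox}.
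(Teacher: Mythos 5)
Your proposal shares the paper's skeleton: reduce $\E[\ERR(\bar{x}_N)]$ to a weighted regret via Lemma~\ref{lm:regret and VI err}, peel off the regularizer contribution $\max_{x\in\XX}\sum_{n} w_n r_n(x)\le \alpha\Omega^2 N^k/\eta$, bound what remains through the Stronger FTL Lemma (via Lemma~\ref{lm:regret addon}) as $\frac{\eta}{2\alpha}\sum_n n^{2p-k}\norm{g_n-\hat{g}_n}_*^2-\frac{\alpha}{2\eta}\sum_n (n-1)^k\norm{x_n-\hat{x}_n}^2$, and use $k>p$ so the Lipschitz part of the first sum is dominated by the negative pathwise term beyond a finite index, leaving the $O(1)$ residual plus the $\sigma^2$ sum. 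The one genuine divergence is your explicit split $\lr{F(x_n)}{x_n-x}=\lr{g_n}{x_n-x}+\lr{F(x_n)-g_n}{x_n-x}$ and the auxiliary martingale/mirror-descent sequence for the second piece. The paper does not do this: it defines $l_n(x)=w_n(\lr{F(x_n)}{x}+r_n(x))$ yet applies the FTL machinery to iterates that minimize the \emph{sampled} losses, emerging directly with $\norm{g_n-\hat{g}_n}_*^2$; the gap between the regret against $F(x_n)$ and the regret against $g_n$ --- whose maximizer is noise-dependent, so conditioning does not kill it --- is left implicit. Your treatment of that cross-term is the more careful one (it is exactly what \citet{juditsky2011solving} do to justify their constant), and what it buys is rigor at the price of constants: the auxiliary sequence adds a further $O(\frac{\eta\sigma^2}{\alpha}\sum_n n^{2p-k}+\frac{\alpha\Omega^2}{\eta}N^k)$, and your factor $4$ from Lemma~\ref{lm:expected quadratic error} is looser than the paper's three-term bound $\norm{g_n-\hat{g}_n}_*^2\le 3(2\sigma^2+L^2\norm{x_n-\hat{x}_n}^2)$, which is the sole source of the stated $3\sigma^2\eta/\alpha$. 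So your route proves the claimed rate and dependence on $N^k$, $\sum_n n^{2p-k}$, and $\Omega^2$, but not the literal numerical constant in the display --- which, to be fair, the paper's own argument does not fully earn either.
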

\begin{proof}
	To simplify the notation, define  $l_n(x) =w_n (\lr{F(x_n)}{x} + r_n(x))$ and let 
	\begin{align*}
	\regret^w(\XX) &=   \sum_{n=1}^{N} w_n \lr{F(x_n)}{x_n } - \min_{x \in \XX} \sum_{n=1}^{N} w_n \lr{F(x_n)}{x} \\
	\RR^w(\XX) &= \sum_{n=1}^{N} l_n(x_n)  -  \min_{x\in\XX} \sum_{n=1}^{N} l_n(x)
	\end{align*}
	By this definition, it holds that
	\begin{align*}
	\regret^w(\XX) \leq \RR^w(\XX)  + \max_{x\in\XX} \sum_{n=1}^{N} w_n r_n (x)
	\end{align*}

	In the following, we bound the two terms in the upper bound above.
	First, by applying Stronger FTL Lemma (Lemma~\ref{lm:stronger FTL}) with $l_n$ and we can show that 
	\begin{align*}
	\RR^w(\XX) &\leq \sum_{n=1}^{N}  l_{1:n} (x_n)- l_{1:n}( x_{n}^\star) - \Delta_n\\
	&\leq \sum_{n=1}^{N} \frac{\eta}{2 \alpha} n^{2p-k}  \norm{g_n - \hat{g}_n}_*^2  - \frac{\alpha (n-1)^{k-1}}{2 \eta} \norm{x_n - \hat{x}_n}^2
	\end{align*}
	where $x_n^\star \coloneqq \argmax_{x \in \XX} l_{1:n}(x)$. 
	Because by Lemma~\ref{lm:norm squared bound} and Lipschitz continuity of $F$, it holds
	\begin{align}  \label{eq:estimation bound}
	\norm{g_n - \hat{g}_n}_*^2 \leq 3 (L^2 \norm{x_n - \hat{x}_n}^2 + 2\sigma^2 )
	\end{align}
	Therefore, we can bound 
	\begin{align} \label{eq:upper bound of R}
	\RR^w(\XX) \leq  \sum_{n=1}^{N} \left( \frac{3}{2} \frac{L^2 \eta  }{\alpha} n^{2p-k}  - \frac{\alpha}{2 \eta} (n-1)^k \right)  \norm{x_n - \hat{x}_n}^2
	+ \frac{3 \sigma^2 \eta }{\alpha}\sum_{n=1}^{N} n^{2p-k}
	\end{align}
	If $k>p$, then the first term above is $O(1)$ independent of $N$.
	On the other hand, 
	\begin{align} \label{eq:upper bound of radius}
	\max_{x\in\XX} \sum_{n=1}^{N} w_n r_n (x) \leq  \frac{  \alpha  \Omega^2 }{\eta}  N^k
	\end{align}
	Combining the two bounds and Lemma~\ref{lm:regret and VI err}, i.e.
	$
	\E[ \ERR(\bar{x}_N) ] \leq \E\left[ \frac{ \regret^w(\XX)}{w_{1:N}} \right] 
	$
	concludes the proof.
\end{proof}

\paragraph{Deterministic Problems}
For deterministic problems, we specialize the proof  Theorem~\ref{th:vi error of algprox} gives.
We set $k=p=0$, $x_1 = \argmin_{x \in \XX } \omega(x)$, which removes the $2$ factor in \eqref{eq:upper bound of radius}, and modify $3$ to $1$ in~\eqref{eq:estimation bound} (because the problem is deterministic). 
By  recovering the constant in the proof, we can show that 
\begin{align*}
\E[ \ERR(\bar{x}_N) ] &\leq  \frac{1}{N} \left(\frac{  \alpha  \Omega^2 }{\eta}   +  \sum_{n=1}^{N} \left( \frac{1}{2} \frac{L^2 \eta  }{\alpha}   - \frac{\alpha}{2 \eta} \right)  \norm{x_n - \hat{x}_n}^2 \right)  
\end{align*}
Suppose . We choose $\eta$  to make the second term non-positive, i.e. 
\begin{align*}
\frac{1}{2} \frac{L^2 \eta  }{\alpha}  - \frac{\alpha}{2 \eta} \leq 0  \impliedby \eta \leq \frac{\alpha}{L} 
\end{align*}
and the error bound becomes
\begin{align*}
\E[ \ERR(\bar{x}_N) ] &\leq  \frac{  L  \Omega^2}{N}
\end{align*}
This bound and the condition on $\eta$ matches that in~\citep{ho2017exploiting}.

\paragraph{Stochastic Problems}
For stochastic problems, we use the condition specified in Theorem~\ref{th:vi error of algprox}.
Suppose $2p - k > -1$. To balance the second term in~\eqref{eq:upper bound of R} and~\eqref{eq:upper bound of radius}, we choose 
\begin{align*}
2p-k+1 = k \implies k = p + \frac{1}{2}
\end{align*}
To satisfy the hypothesis $2p - k > -1$, it requires $ p > -\frac{1}{2}$. 
Note with this choice, it satisfies the condition $k> p$ required in Theorem~\ref{th:vi error of algprox}.
Therefore, the overall bound becomes
\begin{align*}
\E[ \ERR(\bar{x}_N) ]  &\leq  \frac{1}{w_{1:N}} \left(   \frac{ \alpha  \Omega^2  }{\eta} N^{p+\frac{1}{2}} +  \frac{3 \sigma^2 \eta }{\alpha}\sum_{n=1}^{N} n^{p-\frac{1}{2}} \right) + \frac{O(1)}{w_{1:N}}\\
&\leq \frac{p+1}{N^{p+1}} \left(  \frac{  \alpha  \Omega^2  }{\eta} +  \frac{3\eta  \sigma^2}{\alpha (p+\frac{1}{2})} \right) (N+1)^{p+ \frac{1}{2}}   + \frac{O(1)}{N^{p+1}}  \\
&\leq e^{\frac{p+1/2}{N}} (p+1) \left(  \frac{ \alpha  \Omega^2  }{\eta} +  \frac{3\eta  \sigma^2}{\alpha (p+\frac{1}{2})} \right) N^{-\frac{1}{2}}   + \frac{O(1)}{N^{p+1}}  \\
\end{align*}
where we use Lemma~\ref{lm:poly sum} and  $(\frac{N+1}{N})^{p+1/2} \leq e^{\frac{p+1/2}{N}}$. 
If we set $\eta$ such that 
\begin{align*}
\frac{  \alpha  \Omega^2  }{\eta} =  \frac{3\eta  \sigma^2}{\alpha (p+\frac{1}{2})} \implies \eta = \alpha \frac{\Omega}{\sigma} \sqrt{\frac{p+\frac{1}{2} }{3}}
\end{align*}
Then 
\begin{align} \label{eq:VI error candidate}
\E[ \ERR(\bar{x}_N) ]  &\leq 2 e^{\frac{p+1/2}{N}} (p+1)  \Omega \sigma \sqrt{ \frac{3}{p+\frac{1}{2}}    }  N^{-\frac{1}{2}} + \frac{O(1)}{N^{p+1}} 
\end{align}

For example, if $p=0$, then 
\begin{align*}
\E[ \ERR(\bar{x}_N) ] \leq \frac{O(1)}{N} +   \frac{ 2 \sqrt{6}\sigma\Omega  e^{\frac{p+1/2}{N}} }{\sqrt{N}}
\end{align*}
which matches the bound in by~\citet{juditsky2011solving} with a slightly worse constant. 
We leave a complete study of tuning $p$ as future work.

\subsection{Comparison of stochastic \mirrorprox and \algprox in Imitation Learning}

The major difference between stochastic \mirrorprox and \algprox is whether the gradient from the environment is used to also update the decision~$\pi_{n+1}$. It is used in the \mirrorprox, whereas \algprox uses the estimation from simulation. Therefore, for $N$ iterations, \algprox requires only $N$ interactions, whereas \mirrorprox requires $2N$ interactions.

The price \algprox pays extra when using the estimated gradient is that a secondary online learning problem has to be solved. This shows up in the term, for example of strongly convex problems,
\begin{align*}
\frac{ (p+1) G_h^2}{2 \mu_h}\frac{1}{N^2}  +
\frac{\epsf + \sigma_g^2 + \sigma_{\hat g}^2 }{N}  
\end{align*}
in Theorem~\ref{th:weighted performance of practical algorithm}.
If both gradients are from the environment, then $\epsilon_{{\hat \FF}}^w = 0$ and $\sigma_{{\hat g}}^2 = \sigma_{g}^2$. Therefore, if we ignore the $O(\frac{1}{N^2})$ term, using an estimated gradient to update $\pi_{n+1}$ is preferred, if it requires less interactions to get to the magnitude of  error, i.e.
\begin{align*}
2 \times   2  \sigma_{g}^2 \geq      \epsilon_{{\hat \FF}}^w  + \sigma_{{\hat g}}^2  + \sigma_{g}^2
\end{align*}
in which the multiplier of $2$ on the left-hand side is due to \algprox only requires one interaction per iterations, whereas stochastic \mirrorprox requires two.

Because  $\sigma_g^2$ is usually large in real-world RL problems and $\sigma_{{\hat g}}^2$ can be made close to zero easily (by running more simulations), if our model class is reasonably expressive, then \algprox is preferable. Essentially, this is because \algprox can roughly cut the noise of gradient estimates by half.

The preference over \algprox would be more significant for convex problems, because the error decays slower over iterations (e.g. $\frac{1}{\sqrt{N}}$) and therefore more iterations are required by the stochastic \mirrorprox approach to counter balance the slowness due to using noisy gradient estimator.

\section{Experimental Details} \label{app:experiments}

\subsection{Tasks}
Two robot control tasks (Cartpole and Reacher3D) powered by the DART physics engine [19] were used as the task environments. 

\paragraph{Cartpole}
The Cart-Pole Balancing task is a classic control problem, of which the goal is to keep the pole balanced in an upright posture with force only applied to the cart. 
The state and action spaces are both continuous, with dimension $4$ and $1$, respectively. 
The state includes the horizontal position and velocity of the cart, and the angle and angular velocity of the pole.
The time-horizon of this task is $1000$ steps. There is a small uniformly random perturbation injected to initial state, and the transition is deterministic. The agent receives $+1$ reward for every time step it stays in a predefined region, and a rollout terminates when the agent steps outside the region.

\paragraph{Reacher3D}
In this task, a 5-DOF (degrees-of-freedom) manipulator is controlled to reach a random target position in a 3D space. The reward is the sum of the negative distance to the target point from the finger tip and a control magnitude penalty. The actions correspond to the torques applied to the 5 joints.
The time-horizon of this task is $500$ steps. 
At the beginning of each rollout, 
the target point to reach is reset to a random location.

\subsection{Algorithms}
\paragraph{Policies}
We employed Gaussian policies in our experiments, i.e. for any state $s \in \Sbb$, $\pi_s$ is Gaussian distributed. 
The mean of $\pi_s$ was modeled by either a linear function or a neural network that has $2$ hidden layers of size $32$ and $\tanh$ activation functions. 
The covariance matrix of $\pi_s$ was restricted to be diagonal and independent of state.
The expert policies in the IL experiments
share the same architecture as the corresponding learners (e.g. a linear learner is paired with a linear expert) and were trained using actor-critic-based policy gradients.

\paragraph{Imitation learning loss}
With regard to the IL loss, we set $D(\pi^*_s||\pi_s)$ in~\eqref{eq:IL problem} to be the KL-divergence between the two Gaussian distributions: 
$D(\pi^*_s||\pi_s) = \KL{\pi_s}{\pi^*_s}$.
(We observed that using $\KL{\pi_s}{\pi^*_s}$ converges noticeably faster than using $\KL{\pi^*_s}{\pi_s}$).

\paragraph{Implementation details of \algprox}
The regularization of \algprox was set to  $r_n(\pi) = \frac{\mu_f \alpha_n}{2} \norm{\pi -\pi_n}^2$ such that $ \sum w_n \alpha_n \mu_f = (1 + cn^{p+1/2}) / \eta_n$, where $c=0.1$ and $\eta_n$ was adaptive to the norm of the prediction error. 
Specifically, we used $\eta_n = \eta \lambda_n$: $\eta>0$ and $\lambda_n$ is a moving-average estimator of the norm of $e_n = g_n - \hat{g}_n$ defined as
\begin{align*}
\bar{\lambda}_n &= \beta \bar{\lambda}_{n-1} + (1-\beta) \norm{e_n}_2\\
\lambda_n &= \bar{\lambda}_n/(1-\beta^n)
\end{align*}
where $\beta$ was chosen to be $0.999$. This parameterization is motivated by the form of the optimal step size of \algprox in Theorem~\ref{th:weighted performance of practical algorithm}, and by the need of having adaptive step sizes so different algorithms are more comparable. 
The model-free setting was implemented by setting $\hat{g}_n = 0 $ in \algprox, and the same adaptation rule above was used (which in this case effectively adjusts the learning rate based on $\norm{g_n}$). 
In the experiments, $\eta$ was selected to be $0.1$ and $0.01$ for $p = 0$ and $p = 2$, respectively,
so the areas under the effective learning rate $\eta_n w^p /(1 + c n^{p+1/2})$ for $p=0$ and $p=2$ are close, making \algprox perform similarly in these two settings.

In addition to the update rule of \algprox, a running normalizer, which estimates the upper and the lower bounds of the state space, was used to center the state before it was fed to the policies.

\paragraph{Dynamics model learning}
The dynamics model used in the experiments is deterministic (the true model is deterministic too). It is represented by a neural network with $2$ hidden layers of size $64$ and $\tanh$ activation functions.
Given a batch of transition triples $\{(s_{t_k}, a_{t_k}, s_{t_k+1})\}_{k=1}^K $ collected by running $\pi_n$ under the true dynamics in each round,
we set the per-round cost for model learning as
$\frac{1}{K} \sum_{k=1}^K \norm{s_{t_{k+1}} - M(s_{t_k}, a_{t_k})}_2^2$, where $M$ is the neural network dynamics model.
It can be shown that this loss is an upper bound of  $\norm{\nabla_2 {F} (\pi_{n}, \pi_{n}) - \nabla_2 \hat F_n(\pi_{n}, \pi_{n})}_*^2$
by applying a similar proof as in Appendix~\ref{app:learning dynamics}.
The minimization was achieved through gradient descent using ADAM~\cite{kingma2014adam} with a fixed number of iterations ($2048$) and fixed-sized mini-batches ($128$). The step size of ADAM was set to $0.001$.

\newpage
\section{Useful Lemmas} \label{app:lemmas}

This section summarizes some useful properties of polynomial partial sum, sequence in Banach space, and  variants of FTL in online learning. These results will be useful to the proofs in Appendix~\ref{app:proofs of analysis}.

\subsection{Polynomial Partial Sum} \label{app:polynomial series}

\begin{lemma} \label{lm:poly sum}
	This lemma  provides  estimates of $\sum_{n=1}^N n^p$.
	\begin{enumerate}
		\item For $p > 0$,
		$
		\frac{N^{p+1}}{p+1} = \int_{0}^{N} x^p dx \leq \sum_{n=1}^{N} n^p \leq \int_{1}^{N+1} x^p dx \leq \frac{(N+1)^{p+1}}{p+1}.
		$
		\item For $p = 0$, $\sum_{n=1}^N n^p = N$.
		\item For  $-1< p < 0$, 
		\begin{align*}
		\textstyle
		\frac{(N+1)^{p+1}-1}{p+1} = \int_1^{N+1} x^p dx \le \sum_{n=1}^{N} n^p \le 1 + \int_1^N x^p dx = \frac{N^{p+1} + p}{p+1} \le \frac{(N+1)^{p+1}}{p+1}.
		\end{align*}
		\item For $p = -1$, $\ln (N+1) \le  \sum_{n=1}^N n^p\le \ln N + 1$.
		\item For $p < -1$, $\sum_{n=1}^{N} n^p  \le \frac{N^{p+1} + p}{p+1}  = O(1)$. 
		For $p = -2$,  $\sum_{n=1}^{N} n^p \le \frac{N^{-1} - 2}{-2+1} \le 2$.
	\end{enumerate}
	
\end{lemma}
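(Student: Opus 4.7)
The proof strategy is a straightforward application of the integral comparison test, case-split on the sign of $p$ to track monotonicity of $x \mapsto x^p$. In each case, I would relate $\sum_{n=1}^N n^p$ to $\int x^p\,dx$ by comparing $n^p$ with the integral of $x^p$ on a unit interval adjacent to $n$, choosing the interval on the side that gives the desired inequality direction.

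First, for $p>0$ the map $x \mapsto x^p$ is increasing, so on $[n-1,n]$ we have $x^p \le n^p$ and on $[n,n+1]$ we have $n^p \le x^p$. Integrating and summing gives $\int_0^N x^p\,dx \le \sum_{n=1}^N n^p \le \int_1^{N+1} x^p\,dx$, and evaluating the antiderivative $x^{p+1}/(p+1)$ yields the stated sandwich (with the trivial estimate $(N+1)^{p+1}-1 \le (N+1)^{p+1}$ absorbing the lower endpoint on the right). The case $p=0$ is immediate.

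For $-1 < p < 0$ the function is decreasing and positive, so the monotonicity inequalities flip: $n^p \le \int_{n-1}^n x^p\,dx$ for $n\ge 2$ and $\int_n^{n+1} x^p\,dx \le n^p$ for $n\ge 1$. Summing produces $\int_1^{N+1} x^p\,dx \le \sum_{n=1}^N n^p \le 1 + \int_1^N x^p\,dx$, and since $p+1>0$ the antiderivative still has the form $x^{p+1}/(p+1)$; algebra yields the two claimed bounds, and the final inequality $\frac{N^{p+1}+p}{p+1} \le \frac{(N+1)^{p+1}}{p+1}$ follows from $N^{p+1} + p \le (N+1)^{p+1}$, itself a consequence of concavity of $x \mapsto x^{p+1}$ for $p+1 \in (0,1)$ (or just a direct derivative comparison).

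For $p=-1$ I would reuse the same monotonicity bounds and note $\int x^{-1}\,dx = \ln x$, giving $\ln(N+1) \le \sum_{n=1}^N 1/n \le 1 + \ln N$. For $p<-1$, the same decreasing-monotonicity comparison gives $\sum_{n=1}^N n^p \le 1 + \int_1^N x^p\,dx = (N^{p+1}+p)/(p+1)$, which is $O(1)$ since $p+1<0$ makes $N^{p+1} \to 0$ and $p/(p+1) > 0$ is a constant; the specific $p=-2$ estimate follows by direct substitution. There is no real obstacle here beyond bookkeeping of signs when dividing by $p+1$ and being careful with the endpoint $n=1$ in the decreasing case (where the naive lower-integral bound is vacuous), which is why the case split by the sign of $p+1$ is unavoidable.
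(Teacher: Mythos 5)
Your proof is correct and matches the paper's (implicit) argument exactly: the lemma is stated in the paper with the integral comparisons built into the chain of inequalities and no separate proof is given, the intended justification being precisely the monotone integral-comparison bookkeeping you carry out, including the careful handling of the $n=1$ term in the decreasing case. One small remark: the final inequality $N^{p+1}+p\le (N+1)^{p+1}$ for $-1<p<0$ needs no concavity argument, since $p<0$ gives $N^{p+1}+p<N^{p+1}\le (N+1)^{p+1}$ immediately.
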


\begin{lemma} \label{lm:poly special}
	For $p \ge -1$, $N \in \N$,
	\begin{align*}
	S(p) = \sum_{n=1}^N \frac{n^{2p}}{\sum_{m=1}^n m^p}  \le
	\begin{cases} 
	\frac{p+1}{p} (N+1)^p, & \text{for } p > 0 \\
	\ln(N+1), & \text{for } p =0 \\
	O(1), & \text{for } -1 <p <0 \\
	2,  & \text{for } p = -1 \\ 
	\end{cases}.
	\end{align*}
\end{lemma}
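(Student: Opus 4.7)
The plan is to reduce $S(p)$ in each regime to a single-index polynomial sum of the form $\sum_{n=1}^N n^{\alpha(p)}$ that can be handled directly by Lemma~\ref{lm:poly sum}. In each case I would first produce a sufficiently sharp lower bound on the inner denominator $\sum_{m=1}^n m^p$, substitute it to obtain an upper estimate $c(p)\, n^{\alpha(p)}$ on the summand of $S(p)$, and then invoke Lemma~\ref{lm:poly sum} a second time on the outer sum. The four cases correspond to $p>0$, $p=0$, $-1<p<0$, and $p=-1$.

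For $p>0$, item 1 of Lemma~\ref{lm:poly sum} gives $\sum_{m=1}^n m^p \geq \tfrac{n^{p+1}}{p+1}$, so $n^{2p}/\sum_{m=1}^n m^p \leq (p+1)\, n^{p-1}$ with exponent $p-1>-1$. Items 1--3 of Lemma~\ref{lm:poly sum} then yield $\sum_{n=1}^N n^{p-1} \leq (N+1)^p/p$ (the $p=1$ boundary being the elementary $N \leq N+1$), which gives $S(p) \leq \frac{p+1}{p}(N+1)^p$. For $p=0$ the denominator is simply $n$, so $S(0)=\sum_{n=1}^N 1/n$, bounded by $1+\ln N$ via item 4 of Lemma~\ref{lm:poly sum}; I suspect the stated $\ln(N+1)$ is a typo, since for $N=1$ already $S(0)=1>\ln 2$. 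For $-1<p<0$, rather than use the integral bound of item 3 (which leaves an awkward $(n+1)^{p+1}-1$ denominator), I would exploit monotonicity: since $m\mapsto m^p$ is decreasing, $m^p \geq n^p$ for every $m\leq n$, so $\sum_{m=1}^n m^p \geq n\cdot n^p = n^{p+1}$. This gives $n^{2p}/\sum_{m=1}^n m^p \leq n^{p-1}$ with $p-1<-1$, and item 5 of Lemma~\ref{lm:poly sum} bounds the outer sum by an absolute constant. Finally, for $p=-1$ the trivial $\sum_{m=1}^n 1/m \geq 1$ together with item 5 applied at exponent $-2$ gives $S(-1) \leq \sum_{n=1}^N n^{-2} \leq 2$.

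The obstacles are mostly cosmetic. The main conceptual choice is noticing that for $-1<p<0$ the cleanest route is the crude monotonicity lower bound $\sum_{m=1}^n m^p \geq n^{p+1}$ rather than the tighter integral estimate, because the former immediately produces a summable exponent in the outer sum. A second nuisance is the apparent mismatch at $p=0$ between the stated $\ln(N+1)$ and the standard harmonic-sum bound $1+\ln N$; either the statement needs amendment or one invokes a slightly different upper envelope. Modulo these points, the proof is essentially two sequential applications of Lemma~\ref{lm:poly sum}, one to the denominator and one to the resulting outer partial sum.
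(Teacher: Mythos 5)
Your proposal is correct and follows the same overall template as the paper's proof (lower-bound the inner sum via Lemma~\ref{lm:poly sum}, then apply Lemma~\ref{lm:poly sum} again to the resulting outer power sum), and the $p>0$ case is identical. But in the two negative-exponent cases you make genuinely different, and in fact cleaner, choices. For $-1<p<0$ the paper keeps the integral lower bound $\sum_{m=1}^n m^p \ge \frac{(n+1)^{p+1}-1}{p+1}$ and then disposes of $\sum_n \frac{n^{2p}}{(n+1)^{p+1}-1}$ by a limit-comparison argument against $n^{p-1}$, which only yields convergence (hence $O(1)$) without an explicit constant; your monotonicity bound $\sum_{m=1}^n m^p \ge n\cdot n^p = n^{p+1}$ reduces the summand directly to $n^{p-1}$ and lets item 5 of Lemma~\ref{lm:poly sum} give an explicit bound in one step. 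For $p=-1$ the paper lower-bounds the harmonic number by $\ln(n+1)$ and then claims $\sum_n \frac{1}{n^2\ln(n+1)} \le \sum_n \frac{1}{n^2}$, which actually fails at the $n=1$ term since $\ln 2 < 1$; your trivial bound $\sum_{m=1}^n m^{-1} \ge 1$ sidesteps this and makes the chain $S(-1) \le \sum_n n^{-2} \le 2$ airtight. Finally, you are right that the stated $\ln(N+1)$ at $p=0$ is off: $S(0)=\sum_{n=1}^N n^{-1}$ and the correct envelope from item 4 of Lemma~\ref{lm:poly sum} is $\ln N + 1$ (the paper itself uses $\ln N + 1$ where this case feeds into Theorem~\ref{th:weighted performance of conceptual algorithm (complete)}), so the lemma statement should be amended rather than your proof.
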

\begin{proof}
	If $p \ge 0$, 
	by Lemma~\ref{lm:poly sum}, 	
	\begin{align*}
	S(p) = (p+1) \sum_{n=1}^N  n^{p-1} \le
	\begin{cases}
	\frac{p+1}{p} (N+1)^p, & \text{for } p > 0 \\
	\ln(N+1), & \text{for } p =0 \\
	\end{cases}.
	\end{align*}
	If $ -1 < p < 0$,
	by Lemma~\ref{lm:poly sum}, 
	$
	S(p) \le (p+1) \sum_{n=1}^N \frac{n^{2p}}{(n+1)^{p+1}-1}
	$.
	Let $a_n = \frac{n^{2p}}{(n+1)^{p+1}-1} $,  and $b_n = n^{p-1}$.
	Since $\lim_{n \to \infty} \frac{a_n}{b_n} = 1$ and  by Lemma~\ref{lm:poly sum} $\sum_{n=0}^\infty b_n$ converges,  thus $\sum_{n=0}^\infty a_n$ converges too.
	Finally, if $p = -1$, by Lemma~\ref{lm:poly sum},
	$
	S(-1)\le
	\sum_{n=1}^N \frac{1}{n^2 \ln(n+1)} \le
	\sum_{n=1}^N \frac{1}{n^2} \le 2.
	$
\end{proof}

\subsection{Sequence in Banach Space}
\begin{lemma} \label{lm:norm squared bound}
	Let $\{a = x_0, x_1, \cdots, x_N=b\}$ be a sequence in a Banach space with norm $\norm{\cdot}$. Then for any $N \in \N_+$,	
	$
	\norm{a - b}^2 \le N \sum_{n=1}^{N} \norm{x_{n-1} - x_{n}}^2
	$.
\end{lemma}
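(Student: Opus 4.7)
The plan is to combine a telescoping identity with the triangle inequality and then the Cauchy–Schwarz (equivalently QM–AM) inequality. Writing $b - a = x_N - x_0 = \sum_{n=1}^N (x_n - x_{n-1})$ turns the squared quantity on the left-hand side into a squared norm of a sum of $N$ terms, which is exactly the shape on the right-hand side after one standard inequality.

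First I would apply the triangle inequality to the telescoped expression to obtain
\begin{equation*}
\norm{a-b} \;=\; \Bigl\| \sum_{n=1}^{N} (x_{n-1}-x_{n}) \Bigr\| \;\le\; \sum_{n=1}^{N} \norm{x_{n-1}-x_{n}}.
\end{equation*}
Then I would square both sides and invoke the Cauchy–Schwarz inequality applied to the vectors $(1,\dots,1) \in \R^N$ and $(\norm{x_{0}-x_{1}},\dots,\norm{x_{N-1}-x_{N}}) \in \R^N$, which yields
\begin{equation*}
\Bigl( \sum_{n=1}^{N} \norm{x_{n-1}-x_{n}} \Bigr)^{2} \;\le\; N \sum_{n=1}^{N} \norm{x_{n-1}-x_{n}}^{2}.
\end{equation*}
Chaining these two bounds gives the claim.

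There is really no hard step here: the proof is a two-line estimate that only uses properties of norms in a Banach space (no completeness or geometry is needed). If I had to flag a subtlety, it would just be to make sure the telescoping orientation is handled correctly (the sign flips inside the norm are immaterial) and that the application of Cauchy–Schwarz is the standard one for converting an $\ell^1$ bound into an $\ell^2$ bound with the dimension factor $N$. No additional assumptions beyond those already in the hypothesis are required.
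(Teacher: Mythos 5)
Your proof is correct and follows essentially the same route as the paper: both start from the triangle inequality applied to the telescoped sum $\sum_{n=1}^N (x_{n-1}-x_n)$ and then convert the resulting $\ell^1$ bound into the $\ell^2$ bound with the factor $N$. The only cosmetic difference is that you cite Cauchy--Schwarz for that last step, whereas the paper derives the same inequality by expanding the square and applying $2ab \le a^2 + b^2$ to each cross term.
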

\begin{proof}
	First we note that by triangular inequality it satisfies that 
	$
	\norm{a - b} \leq \sum_{n=1}^{N} \norm{x_{n-1} - x_{n}}
	$. Then we use the basic fact that $ 2 a b \leq a^2 + b^2$ in the second inequality below and prove the result.
	\begin{align*}
	\norm{a - b}^2 &\leq \sum_{n=1}^{N} \norm{x_{n-1} - x_{n}}^2 + \sum_{n=1}^N \sum_{m=1;m\neq n}^N  \norm{x_{n-1} - x_{n}}  \norm{x_{m-1} - x_{m}} \\
	&\leq  \sum_{n=1}^{N} \norm{x_{n-1} - x_{n}}^2 + \sum_{n=1}^N \sum_{m=1;m\neq n}^N  \frac{1}{2}\left( \norm{x_{n-1} - x_{n}}^2 +   \norm{x_{m-1} - x_{m}}^2  \right) \\
	&=  \sum_{n=1}^{N} \norm{x_{n-1} - x_{n}}^2 +  \frac{N-1}{2} \sum_{n=1}^N   \norm{x_{n-1} - x_{n}}^2 + \frac{1}{2}\sum_{n=1}^N \sum_{m=1;m\neq n}^N   \norm{x_{m-1} - x_{m}}^2  \\
	&=  \sum_{n=1}^{N} \norm{x_{n-1} - x_{n}}^2 +  (N-1) \sum_{n=1}^N   \norm{x_{n-1} - x_{n}}^2 \\
	&=  N \sum_{n=1}^{N} \norm{x_{n-1} - x_{n}}^2  \qedhere
	\end{align*}
\end{proof}

\subsection{Basic Regret Bounds of Online Learning}

For the paper to be self-contained, we summarize some fundamental results of regret bound when the learner in an online problem updates the decisions by variants of FTL. Here we consider a general setup and therefore use a slightly different notation from the one used in the main paper for policy optimization.

\paragraph{Online Learning Setup} Consider an online convex optimization problem. Let $\XX$ be a compact decision set in a normed space with norm $\norm{\cdot}$. In round $n$, the learner plays $x_n \in \XX$
receives a convex loss $l_n: \XX \to \R$ satisfying  $\norm{\nabla l_n(x_n)}_* \leq G$, and then make a new decision $x_{n+1} \in \XX$.
The \emph{regret} is defined as 
\begin{align*}
\regret(\XX)  = \sum_{n=1}^{N} l_n(x_n) - \min_{x \in \XX} \sum_{n=1}^N l_n(x)
\end{align*}
More generally, let $\{ w_n \in \R_+ \}_{n=1}^N$ be a sequence of weights. The \emph{weighted regret} is defined as
\begin{align*}
\regret^w (\XX)  = \sum_{n=1}^{N} w_n l_n(x_n) - \min_{x \in \XX} \sum_{n=1}^N w_n l_n(x)
\end{align*}
In addition, we define a constant $\epsx$  (which can depend on $\{l_n\}_{n=1}^N$) such that 
\begin{align*}
\epsx \geq \min_{x \in \XX} \frac{\sum_{n=1}^N w_n l_n (x)}{w_{1:N}}.
\end{align*}

In the following, we prove some basic properties of FTL with prediction. At the end, we show the result of FTL as a special case. These results are based on the Strong FTL Lemma (Lemma~\ref{lm:strong FTL}), which can also be proven by Stronger FTL Lemma (Lemma~\ref{lm:stronger FTL}).

\strongFTL*

To use Lemma~\ref{lm:strong FTL}, we first show an intermediate bound.
\begin{lemma} \label{lm:regret addon}
	In round $n$, let $l_{1:n}$ be $\mu_{1:n}$-strongly convex for some $\mu_{1:n} > 0$, and let $v_{n+1}$ be a (non)convex function such that $l_{1:n} + v_{n+1}$ is convex. Suppose the learner plays FTL with prediction, i.e.
	$
	x_{n+1} \in \argmin_{x\in \XX} \left(l_{1:n} + v_{n+1}\right)(x)
	$. 	Then it holds
	\begin{align*}
	\sum_{n=1}^N \paren{l_{1:n}(x_n) - l_{1:n}(x_n^\star)}
	\le \sum_{n=1}^{N} \frac{1}{2 \mu_{1:n}}  \|\nabla l_n(x_{n}) - \nabla v_n(x_{n}) \|_*^2
	\end{align*}
	where $x_n^\star = \argmin_{\XX} \sum_{n=1}^N l_n(x) $.
\end{lemma}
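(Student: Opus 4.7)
My plan is to bound each summand $l_{1:n}(x_n) - l_{1:n}(x_n^\star)$ separately and then sum. Fix $n$. Since $l_{1:n}$ is $\mu_{1:n}$-strongly convex and $x_n^\star = \argmin_{x\in\XX} l_{1:n}(x)$, strong convexity at $x_n$ with reference point $x_n^\star$ gives
\[
l_{1:n}(x_n) - l_{1:n}(x_n^\star)
\;\le\; \langle \nabla l_{1:n}(x_n),\, x_n - x_n^\star\rangle - \tfrac{\mu_{1:n}}{2}\,\|x_n - x_n^\star\|^2 .
\]
This is the upper bound I will exploit.

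Next I would use the definition of the FTL-with-prediction update. Because $x_n \in \argmin_{x\in\XX}(l_{1:n-1} + v_n)(x)$ and $l_{1:n-1} + v_n$ is convex by hypothesis, the first-order optimality condition yields $\langle \nabla l_{1:n-1}(x_n) + \nabla v_n(x_n),\, x_n - x_n^\star\rangle \le 0$. Splitting $\nabla l_{1:n} = \nabla l_{1:n-1} + \nabla l_n$ and substituting gives
\[
\langle \nabla l_{1:n}(x_n),\, x_n - x_n^\star\rangle
\;\le\; \langle \nabla l_n(x_n) - \nabla v_n(x_n),\, x_n - x_n^\star\rangle .
\]
For the boundary case $n=1$ the same reasoning works with the convention $l_{1:0}\equiv 0$.

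Combining the two displays and applying Fenchel–Young in the form $\langle a,b\rangle \le \tfrac{1}{2\mu_{1:n}}\|a\|_*^2 + \tfrac{\mu_{1:n}}{2}\|b\|^2$ with $a=\nabla l_n(x_n)-\nabla v_n(x_n)$ and $b=x_n-x_n^\star$ lets me cancel the $-\tfrac{\mu_{1:n}}{2}\|x_n-x_n^\star\|^2$ term, leaving
\[
l_{1:n}(x_n) - l_{1:n}(x_n^\star) \;\le\; \frac{1}{2\mu_{1:n}}\,\|\nabla l_n(x_n) - \nabla v_n(x_n)\|_*^2 .
\]
Summing over $n=1,\dots,N$ then yields the claimed bound.

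The only mildly subtle step is the optimality inequality: one must verify that the combined function $l_{1:n-1}+v_n$ is indeed convex on $\XX$ so that the variational inequality $\langle \nabla(l_{1:n-1}+v_n)(x_n), x-x_n\rangle \ge 0$ characterizes a global minimum; this is exactly what the hypothesis ``$l_{1:n}+v_{n+1}$ is convex'' supplies (shifted by one index). Everything else is a routine strong-convexity plus Fenchel–Young calculation, so I do not anticipate any real obstacle.
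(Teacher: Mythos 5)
Your proposal is correct and follows essentially the same route as the paper's proof: strong convexity of $l_{1:n}$ at $x_n$, the first-order optimality condition for $x_n \in \argmin_{x\in\XX}(l_{1:n-1}+v_n)(x)$, and then the dual-norm/Fenchel--Young inequality $\lr{a}{b} \le \frac{1}{2\mu}\norm{a}_*^2 + \frac{\mu}{2}\norm{b}^2$ to absorb the negative quadratic term (the paper phrases this last step as $\max_d \lr{a}{d} - \frac{\mu}{2}\norm{d}^2 = \frac{1}{2\mu}\norm{a}_*^2$, which is the same inequality). No gaps.
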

\begin{proof}
	For any $x \in \XX$, since $l_{1:n}$ is $\mu_{1:n}$ strongly convex, we have
	\begin{align} \label{eq:regret addon strong convexity}
	l_{1:n}(x_n) - l_{1:n}(x) 
	&\le  \lr{\nabla l_{1:n}(x_n)}{x_n - x} - \frac{\mu_{1:n}}{2} \norm{x_n- x}^2.
	\end{align}	
	And by the hypothesis $x_{n}= \argmin_{x\in \XX} \left(l_{1:n-1} + v_{n}\right)(x)$, it holds that 
	\begin{align} \label{eq:regret addon min}
	\lr{-\nabla l_{1:n-1} (x_n)- \nabla v_n(x_n)}{x_n - x} \geq 0.
	\end{align}
	Adding~\eqref{eq:regret addon strong convexity} and~\eqref{eq:regret addon min} yields
	\begin{align*}
	l_{1:n}(x_n) - l_{1:n}(x) 
	&\le \lr{\nabla l_n(x_n) - \nabla v_n(x_n)}{x_n-x} - \frac{\mu_{1:n}}{2} \norm{x_n- x}^2 \\ 
	&\le \max_d \lr{\nabla l_n(x_n) - \nabla v_n(x_n)}{d} - \frac{\mu_{1:n}}{2} \norm{d}^2\\
	&= \frac{1}{2 \mu_{1:n}} \norm{\nabla l_n(x_n) - \nabla v_n(x_n)}_*^2,
	\end{align*}
	where the last equality is due to a property of dual norm (e.g. Exercise 3.27 of~\cite{boyd2004convex}).
	Substituting $x_n^\star$ for $x$ and taking the summation over $n$  prove the lemma.	
\end{proof}

Using Lemma~\ref{lm:strong FTL} and Lemma~\ref{lm:regret addon}, we can prove the regret bound of FTL with prediction.

\begin{lemma}[FTL with prediction] \label{lm:regret outer}
	Let $l_n$ be a  $\mu_n$-strongly convex for some $\mu_n \geq 0$.
	In round $n$, let $v_{n+1}$ be a (non)convex function such that $\sum_{m=1}^n w_m l_{m} + w_{m+1} v_{n+1}$ is convex. 
	Suppose the learner plays FTL with prediction, i.e.
	$
	x_{n+1} \in \argmin_{x \in \XX}  \sum_{m=1}^n( w_m l_{m} + w_{m+1} v_{n+1})(x)
	$
	and suppose that  $\sum_{m=1}^{n} w_m \mu_m > 0$. 
	Then 
	\begin{align*}
	\regretw(\XX) \le 
	\sum_{n=1}^{N}\frac{w_n^2 \norm{\nabla l_n(x_n) - \nabla v_n(x_n)}_*^2 }{2 \sum_{m=1}^n w_m \mu_m} 
	\end{align*}
	In particular, if $\mu_n = \mu$, 	$w_n = n^p$, $p \ge 0$, 
	$\regretw(\XX) \le
	\frac{p+1}{2\mu} \sum_{n=1}^N n^{p-1} \norm{\nabla l_n(x_n) - \nabla v_n(x_n)}_*^2
	$.
\end{lemma}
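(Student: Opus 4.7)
The plan is to apply the Strong FTL Lemma (Lemma~\ref{lm:strong FTL}) and Lemma~\ref{lm:regret addon} to the reweighted sequence $\tilde l_n \coloneqq w_n l_n$ and $\tilde v_n \coloneqq w_n v_n$. Since the update rule for $x_{n+1}$ is exactly FTL with prediction applied to $\{\tilde l_m\}$ with predictor $\tilde v_{n+1}$, and since $\tilde l_{1:n} = \sum_{m=1}^n w_m l_m$ is $(\sum_{m=1}^n w_m \mu_m)$-strongly convex (a positive quantity by hypothesis), both lemmas apply directly.

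First I would invoke Lemma~\ref{lm:strong FTL} with losses $\tilde l_n$, which gives
\[
\regretw(\XX) \;=\; \sum_{n=1}^N \tilde l_n(x_n) - \min_{x \in \XX}\sum_{n=1}^N \tilde l_n(x) \;\leq\; \sum_{n=1}^N \bigl(\tilde l_{1:n}(x_n) - \tilde l_{1:n}(x_n^\star)\bigr),
\]
where $x_n^\star \in \argmin_{x \in \XX} \tilde l_{1:n}(x)$. Then I would apply Lemma~\ref{lm:regret addon} term by term to bound each stabilization gap $\tilde l_{1:n}(x_n) - \tilde l_{1:n}(x_n^\star)$. Because $\nabla \tilde l_n(x_n) - \nabla \tilde v_n(x_n) = w_n(\nabla l_n(x_n) - \nabla v_n(x_n))$ and $\tilde l_{1:n}$ has strong convexity modulus $\sum_{m=1}^n w_m \mu_m$, the bound becomes
\[
\regretw(\XX) \;\leq\; \sum_{n=1}^N \frac{w_n^2 \,\norm{\nabla l_n(x_n) - \nabla v_n(x_n)}_*^2}{2 \sum_{m=1}^n w_m \mu_m},
\]
which is the first claim.

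For the specialized bound with $\mu_n = \mu$ and $w_n = n^p$ for $p \geq 0$, the denominator simplifies to $2\mu \sum_{m=1}^n m^p$. By Lemma~\ref{lm:poly sum}, $\sum_{m=1}^n m^p \geq \frac{n^{p+1}}{p+1}$, so
\[
\frac{w_n^2}{2\sum_{m=1}^n w_m \mu_m} \;\leq\; \frac{n^{2p}(p+1)}{2\mu n^{p+1}} \;=\; \frac{p+1}{2\mu}\, n^{p-1},
\]
which yields the explicit form stated. The only mild subtlety is verifying that Lemma~\ref{lm:regret addon} applies at $n=1$ (where strict positivity of $\sum_{m=1}^n w_m \mu_m$ is needed, which is exactly the hypothesis) and that convexity of $\tilde l_{1:n} + \tilde v_{n+1}$ follows from the stated convexity assumption, so the learner's update is well-posed. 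No part of this plan is really an obstacle; it is a direct reduction of the weighted problem to the unweighted lemmas already proved.
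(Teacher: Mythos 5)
Your proposal is correct and follows essentially the same route as the paper: the paper's proof likewise chains the Strong FTL Lemma with Lemma~\ref{lm:regret addon} applied to the weighted losses $w_n l_n$ (giving the modulus $\sum_{m=1}^n w_m\mu_m$ and the $w_n^2$ factor from the rescaled gradients), then invokes Lemma~\ref{lm:poly sum} for the $w_n=n^p$ specialization. No gaps; your extra care about the $n=1$ case and well-posedness of the update is consistent with the stated hypotheses.
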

\begin{proof}
	By Lemma~\ref{lm:strong FTL} and Lemma~\ref{lm:regret addon}, we see
	\begin{align*}
	\regretw(\XX) 
	\le \sum_{n=1}^N \paren{l_{1:n}(x_n) - l_{1:n}(x_n^\star)}
	\le \sum_{n=1}^{N}\frac{w_n^2 \norm{\nabla l_n(x_n) - \nabla v_n(x_n)}_*^2 }{2\sum_{m=1}^n w_m \mu_m}.
	\end{align*}
	If $\mu_n = \mu$, 	$w_n = n^p$, and $p \ge 0$, then  it follows from Lemma~\ref{lm:poly sum}
	\begin{align*}
	\regretw(\XX) 
	\le \frac{1}{2\mu} \sum_{n=1}^{N} \frac{n^{2p}}{\frac{n^{p+1}}{p+1}} \norm{\nabla l_n(x_n) -\nabla v_n(x_n)}_*^2 
	=\frac{p+1}{2\mu} \sum_{n=1}^{N} n^{p-1} \norm{\nabla l_n(x_n) -\nabla v_n(x_n) }_*^2.
	\end{align*}
\end{proof}

The next lemma about the regret of FTL is a corollary of Lemma~\ref{lm:regret outer}.
\begin{lemma}[FTL] \label{lm:regret inner}
	Let $l_n$ be $\mu$-strongly convex for some $\mu > 0$.
	Suppose the learner play FTL, i.e. $x_n = \argmin_{x \in \XX }	\sum_{m=1}^{n} w_m l_{m} (x)$. Then 
	$	\regret^w(\XX) \leq \frac{G^2}{2 \mu} \sum_{n=1}^{N} \frac{w_n^2 }{w_{1:n}} $.
	In particular, if $w_n = n^p$, then 	
	\begin{align*}
	\sum_{n=1}^N w_n l_n(x_n) \le 
	\begin{cases}
	\frac{G^2}{2\mu} \frac{p+1}{p} (N+1)^p + \frac{1}{p+1} (N+1)^{p+1}\epsx , 
	&\text{for $p > 0$} \\
	\frac{G^2}{2\mu} \ln (N+1) + N  \epsx,
	&\text{for $p = 0$} \\
	\frac{G^2}{ 2 \mu} O(1)+ \frac{1}{p+1} (N+1)^{p+1}\epsx, 
	&\text{for $-1 <p <0$} \\
	\frac{G^2}{ \mu} +  (\ln N + 1)\epsx,
	&\text{for $p = -1$} \\
	\end{cases}
	\end{align*}
\end{lemma}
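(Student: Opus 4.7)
The first inequality is essentially a corollary of Lemma~\ref{lm:regret outer}. My plan is to specialize that lemma by taking the prediction $v_n \equiv 0$, so that FTL with prediction reduces to plain FTL and the update rule matches $x_n = \argmin_{x \in \XX} \sum_{m=1}^n w_m l_m(x)$. Under the hypothesis that each $l_n$ is $\mu$-strongly convex, we have $\sum_{m=1}^n w_m \mu_m = \mu\, w_{1:n}$, so Lemma~\ref{lm:regret outer} yields
\[
\regret^w(\XX) \le \sum_{n=1}^N \frac{w_n^2 \|\nabla l_n(x_n)\|_*^2}{2 \mu w_{1:n}}.
\]
The bound $\|\nabla l_n(x_n)\|_* \le G$ then gives the first displayed inequality of the lemma.

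To obtain the ``in particular'' statement, I would convert the regret bound on $\regret^w(\XX)$ into a bound on $\sum_{n=1}^N w_n l_n(x_n)$ by adding the comparator term. Since $\epsx$ is defined so that $\min_{x\in\XX} \sum_{n=1}^N w_n l_n(x) \le w_{1:N}\epsx$, we get
\[
\sum_{n=1}^{N} w_n l_n(x_n) \le \regret^w(\XX) + w_{1:N}\,\epsx \le \frac{G^2}{2\mu}\sum_{n=1}^{N}\frac{w_n^2}{w_{1:n}} + w_{1:N}\,\epsx.
\]
With $w_n = n^p$, the first sum is exactly the quantity $S(p)$ estimated in Lemma~\ref{lm:poly special}, and $w_{1:N} = \sum_{n=1}^N n^p$ is estimated in Lemma~\ref{lm:poly sum}. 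Plugging in the four cases ($p>0$, $p=0$, $-1<p<0$, $p=-1$) from those two lemmas directly produces the four claimed bounds; e.g., for $p=-1$ the factor $S(-1)\le 2$ cancels the $1/2$ in front of $G^2/\mu$, matching the stated $G^2/\mu$ constant.

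There is really no hard step here; this lemma is a convenience packaging of Lemma~\ref{lm:regret outer} with the routine estimates on polynomial partial sums from Appendix~\ref{app:polynomial series}. The only minor bookkeeping concern is to make sure the $(N+1)^{p+1}/(p+1)$ upper bound on $w_{1:N}$ (valid for $p>-1$) is used consistently against the $S(p)$ bounds so that constants in each of the four cases come out as stated, and, for the $p=-1$ case, to apply $S(-1)\le 2$ together with $w_{1:N}\le \ln N + 1$ rather than the $O(N^{p+1})$ estimate.
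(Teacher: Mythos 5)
Your proposal is correct and follows essentially the same route as the paper's own proof: set $v_n = 0$ in Lemma~\ref{lm:regret outer} together with the gradient bound to get $\regret^w(\XX) \le \frac{G^2}{2\mu}\sum_{n=1}^N \frac{w_n^2}{w_{1:n}}$, decompose $\sum_n w_n l_n(x_n)$ into regret plus the comparator term bounded by $w_{1:N}\epsx$, and then invoke Lemma~\ref{lm:poly special} and Lemma~\ref{lm:poly sum} for the four cases. The bookkeeping you flag (including the $S(-1)\le 2$ cancellation at $p=-1$) is exactly how the stated constants arise.
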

\begin{proof}
	By definition of $\regretw (\XX)$,  the absolute cost satisfies
	$
	\sum_{n=1}^N w_n l_n(x_n) = \regretw(\XX) + \min_{x \in\XX} \sum_{n=1}^{N} w_n l_{n} (x).
	$
	We bound the two terms separately.
	For $\regretw(\XX)$, set $v_n = 0$ in Lemma~\ref{lm:regret outer} and we have
	\begin{align*}
	\regretw(\XX) &\leq \frac{G^2}{2 \mu} \sum_{n=1}^{N} \frac{w_n^2 }{w_{1:n}} 
	&\text{(Lemma~\ref{lm:regret outer} and gradient bound)}
	\\
	&= \frac{G^2}{2 \mu}  \sum_{n=1}^N  \frac{n^{2p}}{ \sum_{m=1}^n m^p}  &\text{(Special case $w_n = n^p$)},
	\end{align*}
	in which $\sum_{n=1}^N  \frac{n^{2p}}{ \sum_{m=1}^n m^p}$ is exactly what~\ref{lm:poly special} bounds. 
	On the other hand, the definition of $\epsx$ implies that 
	$
	\min_{x \in\XX} \sum_{n=1}^{N} w_n l_{n} (x)  \le  w_{1:N} \epsx = \sum_{n=1}^N n^p \epsx$, where $ \sum_{n=1}^N n^p$ is bounded by Lemma~\ref{lm:poly sum}.
	Combining these two bounds,
	we conclude the lemma.
\end{proof}

\end{document}